\documentclass{article} 
\usepackage{iclr2027_conference_added_preprint_flag,times}
\iclrpreprint

\usepackage[utf8]{inputenc} 
\usepackage[T1]{fontenc}    
\usepackage{hyperref}       
\usepackage{url}            
\usepackage{booktabs}       
\usepackage{amsfonts}       
\usepackage{nicefrac}       
\usepackage{microtype}      
\usepackage[dvipsnames]{xcolor}         
\usepackage{amsthm}
\usepackage{multirow}

\newtheorem{theorem}{Theorem}
\newtheorem{lemma}[theorem]{Lemma}
\newtheorem{corollary}[theorem]{Corollary}

\newtheorem{proposition}[theorem]{Proposition}
\newtheorem{remark}[theorem]{Remark}

\usepackage{algorithm}

\usepackage{graphicx}
\usepackage{subcaption}

\def\safedef#1{%
   \ifx#1\undefined
      \expandafter\def\expandafter#1%
   \else
      \errmessage{The \string#1 is defined already}%
      \expandafter\def\expandafter\tmp
   \fi
}

\usepackage{amsthm,amsmath,bbm,amsfonts,amssymb}
\usepackage{dsfont} 
\usepackage{mathtools}
\usepackage{commath}
\mathtoolsset{showonlyrefs=false}
\allowdisplaybreaks 

\usepackage{xspace}
\usepackage[T1]{fontenc}
\usepackage[utf8]{inputenc}
\usepackage[dvipsnames]{xcolor} 

\usepackage{hyperref,url}

\usepackage{wrapfig}
\usepackage{pbox} 
\usepackage{algorithm,algorithmic}

\usepackage{enumitem}
\usepackage[normalem]{ulem}

\newcommand{\kj}[1]{{\color{RedOrange}[#1]}}

{\color{kjsaved}}%

{\color{kjsavedenvkj}}%

\newcommand{\guide}[1]{{\color{Violet}[``#1'']}}

\newenvironment{revised}
{\colorlet{kjsavedrevised}{.}\color{MidnightBlue}}%
{\color{kjsavedrevised}}%
\usepackage{framed}
\usepackage[most]{tcolorbox}
\definecolor{kjgray}{rgb}{.7,.7,.7}

\newtheoremstyle{kjstyle}
{1ex} 
{\topsep} 
{\itshape} 
{} 
{\bfseries} 
{.} 
{.5em} 
{} 

\newtheoremstyle{kjstyle2}
{.0em} 
{.0em} 
{\itshape} 
{} 
{\bfseries} 
{.} 
{.5em} 
{} 

\newtheoremstyle{kjstylenoitalic}
{1ex} 
{\topsep} 
{} 
{} 
{\bfseries} 
{.} 
{.5em} 
{} 

\tcbset{kjboxstyle/.style={title={},breakable,colback=white,enhanced jigsaw,boxrule=1.3pt,sharp corners,colframe=kjgray,boxsep=0pt,coltitle={black},attach title to upper={},left=.8ex,bottom=.4em}}    

\tcbset{kjboxstylec/.style={title={},breakable,colback=white,enhanced jigsaw,boxrule=1.3pt,sharp corners,colframe=kjgray,boxsep=0pt,coltitle={black},attach title to upper={},before skip=1.5ex,left=.8ex,bottom=.4em,top=0.4em,enlarge top by=-0.3em,enlarge bottom by=-0.3em}}    

\usepackage{mdframed}
\usepackage{lipsum}
\definecolor{kjgray}{rgb}{.7,.7,.7}
\makeatletter



\makeatletter
\renewcommand{\paragraph}{%
  \@startsection{paragraph}{4}%
  {\z@}{0.50ex \@plus 1ex \@minus .2ex}{-1em}%
  {\normalfont\normalsize\bfseries}%
}
\makeatother

\usepackage{tabularx} 
\newcolumntype{P}[1]{>{\centering\arraybackslash}p{#1}}
\newcolumntype{M}[1]{>{\centering\arraybackslash}m{#1}}

\def\ddefloop#1{\ifx\ddefloop#1\else\ddef{#1}\expandafter\ddefloop\fi}

\def\ddef#1{\expandafter\def\csname #1#1\endcsname{\ensuremath{\mathbb{#1}}}}
\ddefloop ABCDFGHIJKLMNORSTUWXYZ\ddefloop 

\def\ddef#1{\expandafter\def\csname c#1\endcsname{\ensuremath{\mathcal{#1}}}}
\ddefloop ABCDEFGHIJKLMNOPQRSTUVWXYZ\ddefloop

\def\ddef#1{\expandafter\def\csname b#1\endcsname{\ensuremath{{\mathbf{#1}}}}}
\ddefloop ABCDEFGHIJKLMNOPQRSTUVWXYZ\ddefloop  
\def\ddef#1{\expandafter\def\csname b#1\endcsname{\ensuremath{{\boldsymbol{#1}}}}}
\ddefloop abcdeghijklmnopqrtsuvwxyz\ddefloop  

\def\ddef#1{\expandafter\def\csname h#1\endcsname{\ensuremath{\hat{#1}}}}
\ddefloop ABCDEFGHIJKLMNOPQRSTUVWXYZabcdefghijklmnopqrsuvwxyz\ddefloop 
\def\ddef#1{\expandafter\def\csname hc#1\endcsname{\ensuremath{\hat{\mathcal{#1}}}}}
\ddefloop ABCDEFGHIJKLMNOPQRSTUVWXYZ\ddefloop
\def\ddef#1{\expandafter\def\csname hb#1\endcsname{\ensuremath{\hat{\mathbf{#1}}}}}
\ddefloop ABCDEFGHIJKLMNOPQRSTUVWXYZ\ddefloop %
\def\ddef#1{\expandafter\def\csname hb#1\endcsname{\ensuremath{\hat{\boldsymbol{#1}}}}}
\ddefloop abcdefghijklmnopqrstuvwxyz\ddefloop %

\def\ddef#1{\expandafter\def\csname t#1\endcsname{\ensuremath{\tilde{#1}}}}
\ddefloop ABCDEFGHIJKLMNOPQRSTUVWXYZabcdefgijklmnpqtsuvwxyz\ddefloop 
\def\ddef#1{\expandafter\def\csname tc#1\endcsname{\ensuremath{\tilde{\mathcal{#1}}}}}
\ddefloop ABCDEFGHIJKLMNOPQRSTUVWXYZ\ddefloop
\def\ddef#1{\expandafter\def\csname tb#1\endcsname{\ensuremath{\tilde{\mathbf{#1}}}}}
\ddefloop ABCDEFGHIJKLMNOPQRSTUVWXYZ\ddefloop
\def\ddef#1{\expandafter\def\csname tb#1\endcsname{\ensuremath{\tilde{\boldsymbol{#1}}}}}
\ddefloop abcdefghijklmnopqrstuvwxyz\ddefloop %

\def\ddef#1{\expandafter\def\csname bar#1\endcsname{\ensuremath{\bar{#1}}}}
\ddefloop ABCDEFGHIJKLMNOPQRSTUVWXYZabcdefghijklmnopqrtsuvwxyz\ddefloop
\def\ddef#1{\expandafter\def\csname barc#1\endcsname{\ensuremath{\bar{\mathcal{#1}}}}}
\ddefloop ABCDEFGHIJKLMNOPQRSTUVWXYZ\ddefloop
\def\ddef#1{\expandafter\def\csname barb#1\endcsname{\ensuremath{\bar{\mathbf{#1}}}}}
\ddefloop ABCDEFGHIJKLMNOPQRSTUVWXYZ\ddefloop
\def\ddef#1{\expandafter\def\csname barb#1\endcsname{\ensuremath{\bar{\boldsymbol{#1}}}}}
\ddefloop abcdefghijklmnopqrstuvwxyz\ddefloop %

\def\ddef#1{\expandafter\def\csname war#1\endcsname{\ensuremath{\overline{#1}}}}
\ddefloop ABCDEFGHIJKLMNOPQRSTUVWXYZabcdefghijklmnopqrtsuvwxyz\ddefloop
\def\ddef#1{\expandafter\def\csname warc#1\endcsname{\ensuremath{\overline{\mathcal{#1}}}}}
\ddefloop ABCDEFGHIJKLMNOPQRSTUVWXYZ\ddefloop
\def\ddef#1{\expandafter\def\csname warb#1\endcsname{\ensuremath{\overline{\mathbf{#1}}}}}
\ddefloop ABCDEFGHIJKLMNOPQRSTUVWXYZ\ddefloop
\def\ddef#1{\expandafter\def\csname warb#1\endcsname{\ensuremath{\overline{\boldsymbol{#1}}}}}
\ddefloop abcdefghijklmnopqrstuvwxyz\ddefloop %

\def\eps{\varepsilon}
\def\epsilon{\varepsilon}

\usepackage{pgffor}
\def\greeksymbols{alpha,beta,gamma,gam,delta,dt,eps,epsilon,zeta,eta,theta,th,iota,kappa,kap,lambda,lam,mu,nu,xi,pi,rho,sigma,sig,tau,phi,chi,psi,omega,om,Gamma,Gam,Delta,Dt,Theta,Th,Lambda,Lam,Pi,Sigma,Sig,Phi,Psi,Omega,Om}
\def\greeksymbolsnoeta{alpha,beta,gamma,gam,delta,dt,eps,epsilon,zeta,theta,th,iota,kappa,kap,lambda,lam,mu,nu,xi,pi,rho,sigma,sig,tau,phi,chi,psi,omega,om,Gamma,Gam,Delta,Dt,Theta,Th,Lambda,Lam,Pi,Sigma,Sig,Phi,Psi,Omega,Om} 

\foreach \x in \greeksymbolsnoeta{\expandafter\xdef\csname b\x\endcsname{\noexpand\ensuremath{\noexpand\boldsymbol{\csname \x\endcsname}}}}

\foreach \x in \greeksymbols{\expandafter\xdef\csname h\x\endcsname{\noexpand\ensuremath{\noexpand\hat{\csname \x\endcsname}}}}
\foreach \x in \greeksymbolsnoeta{\expandafter\xdef\csname hb\x\endcsname{\noexpand\ensuremath{\noexpand\hat{\noexpand\boldsymbol{ \csname \x\endcsname}}}}}

\foreach \x in \greeksymbols{\expandafter\xdef\csname bar\x\endcsname{\noexpand\ensuremath{\noexpand\bar{\csname \x\endcsname}}}}
\foreach \x in \greeksymbolsnoeta{%
\expandafter\xdef\csname barb\x\endcsname{\noexpand\ensuremath{\noexpand\bar{\noexpand\boldsymbol{ \csname \x\endcsname}}}}
}

\foreach \x in \greeksymbols{\expandafter\xdef\csname t\x\endcsname{\noexpand\ensuremath{\noexpand\tilde{\csname \x\endcsname}}}}
\foreach \x in \greeksymbolsnoeta{\expandafter\xdef\csname tb\x\endcsname{\noexpand\ensuremath{\noexpand\tilde{\noexpand\boldsymbol{ \csname \x\endcsname}}}}}

\providecommand{\normz}[2][-1]{
\ensuremath{\mathinner{
\ifthenelse{\equal{#1}{-1}}{ 
\!\left\|#2\right\|}{}
\ifthenelse{\equal{#1}{0}}{ 
\|#2\|}{}
\ifthenelse{\equal{#1}{1}}{ 
\bigl\|#2\bigr\|}{}
\ifthenelse{\equal{#1}{2}}{ 
\Bigl\|#2\Bigr\|}{}
\ifthenelse{\equal{#1}{3}}{ 
\biggl\|#2\biggr\|}{}
\ifthenelse{\equal{#1}{4}}{ 
\Biggl\|#2\Biggr\|}{}
}} 
}  

\providecommand{\floor}[2][-1]{
\ensuremath{\mathinner{
\ifthenelse{\equal{#1}{-1}}{ 
\!\left\lfloor#2\right\rfloor}{}
\ifthenelse{\equal{#1}{0}}{ 
\lfloor#2\rfloor}{}
\ifthenelse{\equal{#1}{1}}{ 
\!\bigl\lfloor#2\bigr\rfloor}{}
\ifthenelse{\equal{#1}{2}}{ 
\!\Bigl\lfloor#2\Bigr\rfloor}{}
\ifthenelse{\equal{#1}{3}}{ 
\!\biggl\lfloor#2\biggr\rfloor}{}
\ifthenelse{\equal{#1}{4}}{ 
\!\Biggl\lfloor#2\Biggr\rfloor}{}
}} 
}

\providecommand{\ceil}[2][-1]{
\ensuremath{\mathinner{
\ifthenelse{\equal{#1}{-1}}{ 
\!\left\lceil#2\right\rceil}{}
\ifthenelse{\equal{#1}{0}}{ 
\lceil#2\rceil}{}
\ifthenelse{\equal{#1}{1}}{ 
\!\bigl\lceil#2\bigr\rceil}{}
\ifthenelse{\equal{#1}{2}}{ 
\!\Bigl\lceil#2\Bigr\rceil}{}
\ifthenelse{\equal{#1}{3}}{ 
\!\biggl\lceil#2\biggr\rceil}{}
\ifthenelse{\equal{#1}{4}}{ 
\!\Biggl\lceil#2\Biggr\rceil}{}
}} 
}

\newcommand{\wbar}[1]{{\ensuremath{\overline{#1}}}}  

\def\larrow{\ensuremath{\leftarrow}}

\definecolor{mygrn}{rgb}{0,.8,0}
\definecolor{myred}{rgb}{.8,0,0}

\DeclareMathOperator{\EE}{\mathbb{E}} 

\DeclareMathOperator{\PP}{\mathbb{P}}

\DeclareMathOperator{\Var}{{\mathrm{Var}}}

\DeclareMathOperator*{\argmax}{arg~max}
\DeclareMathOperator*{\argmin}{arg~min}

\DeclareMathOperator{\sign}{{\mathrm{sign}}}

\def\clip#1{\wbar{\del{#1}}}
\DeclareMathOperator{\one}{\mathds{1}\hspace{-.1em}}

\DeclarePairedDelimiterX{\inp}[2]{\langle}{\rangle}{#1, #2}

\newcommand\declareop[3]{%
  \newcommand#1{%
    \mskip\muexpr\medmuskip*#2\relax
    {#3}%
    \mskip\muexpr\medmuskip*#2\relax
}}
\declareop\capprox{1}{{\sr{\const}{\approx}}} 
\declareop\logapprox{1}{{\sr{\mathrm{log}}{\approx}}} 

\def\Bernoulli{{\ensuremath{\mathrm{Bernoulli}}}}

\def\const{\mathsf{const}}

\def\kl{{\mathrm{kl}}}

\usepackage{pifont}
\newcommand{\sr}{\stackrel}

\makeatletter
\newcommand{\vast}{\bBigg@{3}}
\newcommand{\Vast}{\bBigg@{4}}
\makeatother

\newenvironment{talign*}
 {\csname align*\endcsname}
 {\endalign}

\def\chrulefill{\leavevmode\leaders\hrule height 0.7ex depth \dimexpr0.4pt-0.7ex\hfill\kern0pt}

\renewcommand{\cite}{\citep}

\newcommand{\yinan}[1]{{\color{blue}[Yinan: #1]}}
\newcommand{\han}[1]{{\color{green}[Han: #1]}}
\usepackage[toc,page,header]{appendix}
\usepackage{minitoc}
\usepackage{silence}
\newif\ifFINAL

\FINALtrue 

\ifFINAL
  
  \def\guide#1{}
  \def\kj#1{}
  \def\kjnew#1{}
  \def\yinan#1{}
  \def\han#1{}

\else
  \usepackage{transparent}
  \usepackage[inline]{showlabels}
  \usepackage{rotating}
  \renewcommand{\showlabelfont}%
  {\transparent{0.8}\scriptsize\bf\slshape\color{Lavender}}
\fi

\def\Unif{\text{Unif}}

\newcommand{\ind}{\mathbf{1}}

\title{%
  {\fontsize{16}{19}\selectfont
  Sign Betting for Misspecified Regression:}\\
  {\fontsize{16}{19}\selectfont
  Variance Adaptivity and Optimal Approximation}%
}

\author{%
  Yinan Li \\
  Department of Computer Science \\
  University of Arizona \\
  \texttt{yinanli@arizona.edu}
  \And
  Han Han \\
  Department of Computer Science \\
  University of Arizona \\
  \texttt{hanhan1@arizona.edu}
  \AND
  Kwang-Sung Jun \\
  GSAI/CSE \\
  POSTECH \\
  \texttt{kwangsungjun@postech.ac.kr}
}

\begin{document}
\doparttoc 
\faketableofcontents 

\setlength{\abovedisplayskip}{4pt}%
\setlength{\belowdisplayskip}{4pt}%
\setlength{\abovedisplayshortskip}{4pt}%
\setlength{\belowdisplayshortskip}{4pt}

\textfloatsep=.5em

\maketitle

\begin{abstract}
We study bounded regression under model misspecification, evaluating a function $f$ by its $L_1(P_X)$ distance to the conditional mean
$\eta(x)=\mathbb{E}[Y\mid X=x]$, rather than the standard $L_2(P_X)$. This $L_1$ metric is natural for downstream decision problems such as bandits and reinforcement learning. We analyze the min--max estimator of \citet{baraud2025estimating}, which we call \emph{sign-betting regression}. Our main results show that sign-betting regression achieves high-probability
$L_1$ oracle inequalities under misspecification. 
Under realizability,
our bound adapts to the average conditional variance and recovers the variance-adaptive rate of \citet{pmlr-v336-li26a}.
We further characterize the optimal universal approximation factor for proper learning by showing that sign betting achieves factor 3, while no proper learner can satisfy a distribution-free, high-probability $L_1$ oracle inequality with a universal approximation factor strictly below 3.
We extend the upper-bound analysis beyond finite classes and complement the
theory with experiments in synthetic and real-world model-selection tasks.
\end{abstract}

\section{Introduction}
\label{sec:introduction}

\vspace{-.5em}

Regression is a fundamental problem in statistics and machine learning and often an intermediate step in a downstream decision problem, with a
quantity of interest being the underlying conditional mean. 
\kj{[NH] throughout the paper you say as if it is new to learn the conditional mean, but even the traditional results are still learning about the conditional mean. find a way to avoid keep saying we try to learn the conditional mean}
We study bounded regression, where $(X,Y)$ denotes a covariate--response pair with $Y\in[0,1]$, and evaluate a predictor $f$ by its expected absolute discrepancy from the conditional mean,
$\epsilon_1(f):=\EE\!\left[|f(X)-\eta(X)|\right]$, where $\eta(x):=\EE[Y\mid X=x]$.

The $L_1$ accuracy of a conditional-mean predictor has several downstream interpretations. For binary responses, it controls the excess risk of the plug-in classifier
$g_f(x):=\mathbf{1}\{f(x)\ge 1/2\}$ relative to the Bayes classifier
$g^\star(x):=\mathbf{1}\{\eta(x)\ge 1/2\}$. Writing
$R_{\mathrm{cls}}(g):=\PP(g(X)\neq Y)$ for the classification risk, the
standard comparison inequality gives $R_{\mathrm{cls}}(g_f)-R_{\mathrm{cls}}(g^\star)
\le 2\epsilon_1(f)$
\citep{audibert2007fast}. 
The $L_1$ metric is also natural for downstream decision problems such as bandits and reinforcement learning, where policy suboptimality is typically controlled through differences in expected rewards or costs rather than their squares~\citep{ayoub2025rectifying}.
Finally, when $f$ is used as a reward model to select the highest-scoring response among $N$ candidates, each marginally distributed according to $P_X$, the expected reward loss relative to the best candidate in the same set is at most $N\epsilon_1(f)$. This
connects $L_1$ reward-model accuracy to recent analyses of Best-of-$N$ and
related inference-time selection procedures
\citep{huang2025best}.
See derivations in Appendix~\ref{app:l1-downstream}.

\vspace{-.5em}

\paragraph{Regression under misspecification.}
Let $\cF$ be a class of $[0,1]$-valued predictors and define $\epsilon_{1,*}
    :=
    \inf_{f\in\cF}
    \epsilon_1(f)$. 
Under misspecification, we do not assume that $\eta\in\cF$. We ask whether a learner can compete with the best achievable $L_1$ error within the class. Ideally, an oracle inequality of
the form
\begin{equation}
    \epsilon_1(\widehat f)
    \le
    C_{\rm aprx}\,\epsilon_{1,*}
    +
    r_n,
    \qquad
    r_n\to0\quad\text{as }n\to\infty,
    \label{eq:basic-goal}
\end{equation}
where $C_{\rm aprx}$ is a universal constant. 
We call $C_{\rm aprx}$ the \emph{approximation factor} and $r_n$ the \emph{estimation term}. Beyond requiring
$r_n$ to vanish, we seek guarantees that adapt to the expected conditional variance $\EE[\Var(Y\mid X)]$ and become sharper when this quantity is small. 
This leads to two central questions:
\vspace{-.5em}
\begin{quote}
    \emph{What approximation and noise-adaptive guarantees are achievable
    for $L_1$ conditional-mean estimation under model misspecification?
    When the learner is required to return a member of $\cF$, what is the
    smallest universal approximation factor in a distribution-free
    high-probability oracle inequality?}
\end{quote}

\vspace{-.5em}

\paragraph{Why standard regression objectives do not resolve the problem.}
A loss that identifies the conditional mean under unrestricted optimization can select an approximation to $\eta$ that differs from the best $L_1$ approximation
when the model class is misspecified. Squared loss is the canonical example: minimizing its
population risk over $\cF$ selects the best $L_2$ approximation to $\eta$,
rather than the best $L_1$ approximation targeted by
\eqref{eq:basic-goal}. More generally, standard losses
such as squared and log loss induce their own notions of projection onto
$\cF$ under misspecification \citep{mey2021consistency}. We show that this
mismatch is fundamental: e.g., squared-loss ERM admits an $L_1$ guarantee whose
approximation term scales as $\sqrt{\epsilon_{1,*}}$, and this dependence is
unavoidable even in a noiseless problem with a function class of size 2.
Consequently, squared-loss ERM cannot satisfy \eqref{eq:basic-goal} with $C_{\rm aprx}$ being a universal constant. 
A second issue is statistical adaptation. Although an $n^{-1/2}$ estimation
term is unavoidable in the worst case over distributions of $(X,Y)$ with $Y\in[0,1]$ 
\citep{tsybakov2009introduction,wainwright19high}, it can be overly
pessimistic when the conditional variance of the labels is small
\citep{foster21efficient,pmlr-v336-li26a}. Our goal is to achieve
linear dependence on the best $L_1$ approximation error $\eps_{1,*}$ together with adaptation to low noise. As we show in Section~\ref{sec:standard-losses}, the standard regression
objectives considered therein
do not, in general, provide either of these desiderata.
\vspace{-0.5em}
\paragraph{Sign-Betting Regression.}
We study a min-max 
estimator introduced by
\citet{baraud2025estimating} for fixed-design regression with possibly
heteroscedastic and heavy-tailed errors. We refer to this procedure as
\emph{Sign-Betting Regression (SBR)}, reflecting the directional betting interpretation developed in Section~\ref{sec:sign-betting-population}.

\vspace{-0.5em}
\paragraph{Our results.}
Our first main result is a high-probability generalization inequality for SBR. In the general misspecified setting, it implies $\epsilon_1(\widehat f)
    \le
    3\epsilon_{1,*}
    +
    r_n$, 
where the estimation term $r_n$ adapts to the residual second moment of an $L_1$-optimal
comparator. Under realizability, the approximation
terms disappear and the leading statistical term adapts directly to the
average conditional variance $\EE[\Var(Y\mid X)]$, recovering the
variance-adaptive 
rate of \citet{pmlr-v336-li26a}.
Our second main result identifies the optimal universal approximation factor for proper learning. 
We prove that, for proper learning, \eqref{eq:basic-goal} cannot hold with any universal constant $C_{\rm aprx}<3$ and any distribution-independent $r_n$.
Since SBR achieves factor $3$, this characterizes the optimal universal proper approximation factor under our learning criterion. 
Finally, we extend the upper-bound analysis beyond finite classes and complement our theoretical results with experiments on synthetic data, real-world datasets, and LLM model selection. 

Our setting and questions differ from those of
\citet{baraud2025estimating} in several respects. They study
fixed-design regression and expected $L_1$ error evaluated on the observed
design points. We instead consider i.i.d.\ random design and study
high-probability oracle inequalities for the population error
$\|f-\eta\|_{L_1(P_X)}$. 
A direct black-box conversion---conditioning on the covariates, applying their fixed-design result, and then using global uniform convergence---yields a population guarantee in expectation but incurs a variance-independent $n^{-1/2}$ generalization term, which can be substantially larger than our bound on low-noise instances. 
Our direct random-design analysis achieves variance adaptation. See Appendix~\ref{app:comparison-baraud-maillard} for details.

\vspace{-0.5em}
\paragraph{Paper organization.}
Section~\ref{sec:setting} formalizes the learning problem, the $L_1$ oracle
criterion, and the noise quantities used throughout the paper.
Section~\ref{sec:standard-losses} shows how standard regression objectives
can fail both the desired approximation and noise-adaptation guarantees.
Section~\ref{sec:sign-betting} develops the population geometry of sign
betting and establishes our main variance-adaptive oracle inequality.
Section~\ref{sec:optimality} characterizes the optimal universal
approximation factor for proper learning, and
Section~\ref{sec:sbr-infinite-classes} extends the upper-bound framework
beyond finite classes. Appendix~\ref{app:related-work} discusses related work and connections to
prior results, and Appendix~\ref{app:exp} presents the experiments.
The remaining appendices contain proofs and supporting technical material.

\vspace{-1em}
\section{Problem setting and learning objective}
\label{sec:setting}

\vspace{-0.5em}

\paragraph{Setting and notation.} 
Let $\cX$ be a measurable input space, and let $P$ be an unknown distribution over $\cX\times[0,1]$.
We observe $n$ i.i.d.\ samples $(X_i,Y_i)_{i=1}^n\sim P^n$. 
We write $P_X$ for the marginal distribution of $X$ and $\eta(x):=\EE[Y\mid X=x]$ for the regression function. 
We consider a nonempty class $\cF\subseteq\{f:\cX\to[0,1]\}$ of measurable predictors, without assuming that $\eta\in\cF$. 
\kj{$\larrow$ up to this sentence, I would move to the first paragraph of introduction.}\yinan{I would love to, but the time is so limited for me to rewrite}
For a measurable $g:\cX\to\mathbb R$ and $p\in[1,\infty)$, we write $\normz{g}_p:=(\EE_{X\sim P_X}|g(X)|^p)^{1/p}$. For a measurable function $Z=Z(X,Y)$, we use the shorthand $PZ:=\EE[Z(X,Y)]$ and $P_nZ:=n^{-1}\sum_{i=1}^n Z(X_i,Y_i)$.

\vspace{-0.5em}

\paragraph{Approximation error.} For $p\in[1,\infty)$ and $f\in\cF$, define $\epsilon_p(f):=\normz{f-\eta}_p$ and $\epsilon_{p,*}:=\inf_{f\in\cF}\epsilon_p(f)$. Our main performance criterion is the $L_1$ conditional-mean approximation error $\epsilon_1(f)=\normz{f-\eta}_1=\EE|f(X)-\eta(X)|$, and $\epsilon_{1,*}$ is the smallest such error attainable within $\cF$. When the infimum exists, we let $f_\star\in\argmin_{f\in\cF}\epsilon_1(f)$ denote a best $L_1$ approximation to $\eta$ in $\cF$. In particular, such a minimizer always exists for the finite classes considered in our main results. 
We say that $(P,\cF)$ is \emph{realizable} (or \emph{well-specified}) if $\eta\in\cF$, and
\emph{misspecified} otherwise.
\kj{usually, you either use the pair of phrases ‘realizable vs agnostic’ or ‘well-specified vs misspecified’}
For a finite class, realizability is equivalent to $\epsilon_{1,*}=0$. Our results do not assume realizability. Under misspecification, our guarantees compare the learned predictor with arbitrary comparators $f_0\in\cF$, allowing the bound to trade off their approximation error (how well $f_0$ approximates $\eta$) and comparator-dependent estimation terms. 
We distinguish this criterion in Appendix~\ref{app:absolute-loss} from the absolute-loss risk $\EE|Y-f(X)|$, which is generally a different population objective. 

\vspace{-0.5em}
\paragraph{Noise and approximation decomposition.} Let $\xi:=Y-\eta(X)$, $\sigma^2(X):=\Var(Y\mid X)=\EE[\xi^2\mid X]$, and $\bar\sigma^2:=\EE[\sigma^2(X)]$. Then $\EE[\xi\mid X]=0$, and for every predictor $f$, $\EE[(Y-f(X))^2\mid X]=\sigma^2(X)+(\eta(X)-f(X))^2$. The first term is the conditional label variance, while the second is the squared approximation error. This decomposition will enter the variance-dependent bounds under misspecification.

\vspace{-0.5em}

\paragraph{Proper learning.}
A learner is \emph{proper} with respect to $\cF$ if it always returns a
predictor $\hf\in\cF$; otherwise it is \emph{improper}. Our main results focus on proper learning: our main algorithm is proper, and our lower bound characterizes the best universal approximation factor achievable by proper learners. 

\vspace{-0.5em}
\section{Limitations of standard regression objectives}
\label{sec:standard-losses}

\vspace{-0.5em}

We first ask whether familiar regression objectives already provide the $L_1$ oracle guarantee in \eqref{eq:basic-goal}. We focus here on squared loss and log loss, while deferring absolute loss to Appendix~\ref{app:absolute-loss} and betting loss of \citet{pmlr-v336-li26a} to Appendix~\ref{app:original-betting-lb}. These objectives fail for two conceptually different reasons. Absolute loss is misaligned with the target itself: its population minimizer is generally a conditional median rather than the conditional mean $\eta$. By contrast, squared loss, log loss, and betting loss are all aligned with conditional-mean regression under realizability, with betting loss enjoying variance-adaptive guarantees \citep{pmlr-v336-li26a}. Nevertheless, under model misspecification, squared loss, log loss, and betting loss may not produce a predictor whose $L_1$ error is within any universal constant factor of $\epsilon_{1,*}$. These limitations motivate an objective that achieves an $L_1$ oracle guarantee with a universal constant
approximation factor under misspecification while enjoying variance-adaptive statistical guarantees.

\vspace{-0.5em}

\subsection{Squared-loss ERM}
\label{sec:squared-loss}

\vspace{-0.5em}

We begin with the familiar squared-loss ERM, $\widehat f_{\rm sq}
    \in
    \argmin_{f\in\cF}
    \frac1n\sum_{i=1}^n \bigl(Y_i-f(X_i)\bigr)^2$.
Squared loss is calibrated to the conditional mean when optimization is unrestricted. 
Under misspecification, however, minimizing squared loss over $\cF$ does not yield the best $L_1$ approximation to $\eta$.
Indeed, for every $f\in\cF$,
\begin{equation}
    \EE\bigl[(Y-f(X))^2\bigr]
    =
    \EE\!\left[\Var(Y\mid X)\right]
    +
    \normz{f-\eta}_2^2.
    \label{eq:squared-loss-population-decomposition}
\end{equation}
Thus, at the population level, squared loss selects an $L_2$-optimal approximation to $\eta$, whereas our benchmark $\epsilon_{1,*}$ is defined by the best $L_1$ approximation. The following proposition quantifies the resulting finite-sample guarantee. The proof is deferred to Appendix~\ref{app:squared-loss-upper}.

\begin{proposition}[$L_1$ guarantee for squared-loss ERM]
\label{prop:squared-loss-upper}
Suppose $Y\in[0,1]$ and $\cF\subseteq[0,1]^{\cX}$ is finite. There exists a universal constant $C>0$ such that, with probability at least $1-\delta$,
\begin{equation}
    \epsilon_1(\widehat f_{\rm sq})
    \le
    \epsilon_{2,*}
    +
    C\sqrt{\frac{\log(2|\cF|^2/\delta)}{n}}
    \le
    \sqrt{\epsilon_{1,*}}
    +
    C\sqrt{\frac{\log(2|\cF|^2/\delta)}{n}}.
    \label{eq:squared-loss-l1-upper}
\end{equation}
\end{proposition}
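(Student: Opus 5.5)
The plan is to prove the first inequality by combining the population decomposition \eqref{eq:squared-loss-population-decomposition} with a uniform concentration bound over pairwise excess losses, and then to pass from $L_2$ to $L_1$ via Jensen. The second inequality is elementary. Since $f-\eta\in[-1,1]$, we have $\normz{f-\eta}_2^2\le \normz{f-\eta}_1$ for every $f$. Taking $f=f_\star$ gives $\epsilon_{2,*}\le \epsilon_2(f_\star)\le \sqrt{\epsilon_1(f_\star)}=\sqrt{\epsilon_{1,*}}$.

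For the first inequality, fix an $L_2$-optimal comparator $f_2\in\argmin_{f\in\cF}\epsilon_2(f)$, which exists because $\cF$ is finite. For each pair $(f,g)\in\cF^2$, define the excess loss
\[
Z_{f,g}:=(Y-f(X))^2-(Y-g(X))^2\in[-1,1].
\]
Hoeffding's inequality with a union bound over the $|\cF|^2$ ordered pairs gives, with probability at least $1-\delta$, simultaneously for all pairs,
\[
|(P-P_n)Z_{f,g}|\le \sqrt{2\log(2|\cF|^2/\delta)/n}.
\]
This is where the $|\cF|^2$ inside the logarithm comes from. (A union bound over the pairs $(f,f_2)$ alone would suffice, but we match the stated form.)

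On this event, empirical optimality gives $P_n Z_{\widehat f_{\rm sq},f_2}\le 0$, so $PZ_{\widehat f_{\rm sq},f_2}\le c\sqrt{\log(2|\cF|^2/\delta)/n}$. By \eqref{eq:squared-loss-population-decomposition}, the variance term cancels in $PZ_{f,f_2}$, so $PZ_{f,f_2}=\epsilon_2(f)^2-\epsilon_{2,*}^2$. Write $\beta:=c\sqrt{\log(2|\cF|^2/\delta)/n}$. Then
\[
\epsilon_2(\widehat f_{\rm sq})^2\le \epsilon_{2,*}^2+\beta,
\]
and hence, using $\sqrt{a+b}\le\sqrt a+\sqrt b$,
\[
\epsilon_2(\widehat f_{\rm sq})\le \epsilon_{2,*}+\sqrt{\beta}.
\]
Finally, Jensen gives $\epsilon_1\le\epsilon_2$.

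The main obstacle is the rate. The naive route above yields an estimation term of order $\sqrt\beta\asymp (\log/n)^{1/4}$, not the claimed $\sqrt{\log/n}$. To reach $n^{-1/2}$, I would use a Bernstein inequality instead of Hoeffding. Since $Z_{f,f_2}=(f_2-f)(2Y-f-f_2)$ with $|2Y-f-f_2|\le 2$, its variance is at most $4\normz{f-f_2}_2^2$. Bernstein then gives, for each pair, with $L:=\log(2|\cF|^2/\delta)$,
\[
PZ_{f,f_2}\le P_nZ_{f,f_2}+C\normz{f-f_2}_2\sqrt{L/n}+CL/n.
\]
Setting $a:=\epsilon_2(\widehat f_{\rm sq})$ and $b:=\epsilon_{2,*}$, the triangle inequality gives $\normz{\widehat f_{\rm sq}-f_2}_2\le a+b$. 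The resulting inequality is
\[
a^2-b^2\le C(a+b)\sqrt{L/n}+CL/n.
\]
Dividing by $a+b$ (and treating the case $a+b\lesssim\sqrt{L/n}$ trivially) yields $a-b\le C'\sqrt{L/n}$. This step needs care: the bound must be stated relative to $\normz{f-f_2}_2$ rather than the excess risk itself, because under misspecification the Bernstein condition relative to excess risk can fail, while the triangle-inequality form survives. Combining with Jensen then gives the claimed bound.
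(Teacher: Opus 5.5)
Your final Bernstein-based argument is correct and is essentially the paper's proof: bound $\Var(Z_{f,g})\le 4\|f-g\|_2^2$, apply a one-sided Bernstein inequality with a union bound over pairs, use $\|\widehat f_{\rm sq}-f_2\|_2\le a+b$, solve $a^2-b^2\le C(a+b)\sqrt{L/n}+CL/n$ for $a-b$, and finish with $\epsilon_1\le\epsilon_2$ and $\epsilon_{2,*}\le\sqrt{\epsilon_{1,*}}$. As you note, the Hoeffding route only gives an $n^{-1/4}$ rate, and the paper makes the same remark. Your case split on whether $a+b\ge\sqrt{L/n}$ is an equivalent substitute for the paper's quadratic solve in $q=a-b$.
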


The first inequality in \eqref{eq:squared-loss-l1-upper} reflects the natural geometry of squared loss: apart from estimation error, squared-loss ERM competes with $\epsilon_{2,*}$. The second inequality follows from $\epsilon_{2,*}\le\sqrt{\epsilon_{1,*}}$ for $[0,1]$-valued hypothesis class $\cF$. Next, we show in Proposition~\ref{prop:squared-loss-no-constant-factor} that this square-root dependence on $\epsilon_{1,*}$ is not merely an artifact of converting an $L_2$ bound to $L_1$.

\begin{proposition}[Squared loss has no constant $L_1$ approximation factor]
\label{prop:squared-loss-no-constant-factor}
For every $p\in(0,1/4)$, there exists a noiseless regression problem and a two-element class $\cF\subseteq[0,1]^{\cX}$ such that $\epsilon_{1,*}=p$, while the unique population squared-loss minimizer,
$
    f_{\rm sq}^*
    \in
    \argmin_{f\in\cF}\EE\bigl[(Y-f(X))^2\bigr],
$
satisfies
\begin{equation}
    \epsilon_1(f_{\rm sq}^*)
    =
    \frac12\sqrt{\epsilon_{1,*}}.
    \label{eq:squared-loss-approximation-lower}
\end{equation}
Consequently, squared-loss ERM cannot satisfy the basic goal \eqref{eq:basic-goal} for any finite universal approximation factor.
\end{proposition}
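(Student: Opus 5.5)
The construction is noiseless with $\eta\equiv 0$: one predictor has large error on a small set, the other has small error everywhere. For a noiseless problem, \eqref{eq:squared-loss-population-decomposition} gives $\EE[(Y-f(X))^2]=\normz{f-\eta}_2^2$, so the population squared-loss minimizer is the $L_2$-closest element of $\cF$. The goal is therefore a two-element class whose $L_1$ and $L_2$ rankings disagree by a square-root factor.

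\textbf{Construction.} Let $\cX=\{a,b\}$ with $P_X(a)=p$ and $P_X(b)=1-p$, and set $Y=\eta(X)\equiv 0$. Let $\cF=\{f_1,f_2\}$ with
\begin{equation*}
f_1(a)=1,\qquad f_1(b)=0,\qquad f_2\equiv c:=\tfrac12\sqrt p .
\end{equation*}
Both predictors take values in $[0,1]$.

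\textbf{The two error criteria.}
\begin{itemize}
\item For $f_1$: $\epsilon_1(f_1)=p$ and $\normz{f_1-\eta}_2^2=p$.
\item For $f_2$: $\epsilon_1(f_2)=c=\tfrac12\sqrt p$ and $\normz{f_2-\eta}_2^2=c^2=p/4$.
\end{itemize}
Since $p/4<p$, the population squared risk of $f_2$ is strictly smaller, so $f_2$ is the unique squared-loss minimizer: $f_{\rm sq}^*=f_2$. Moreover, $p<\tfrac12\sqrt p$ is equivalent to $\sqrt p<\tfrac12$, i.e.\ $p<1/4$. Under this hypothesis $\epsilon_{1,*}=p$ is attained by $f_1$. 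Hence $\epsilon_1(f_{\rm sq}^*)=\tfrac12\sqrt{\epsilon_{1,*}}$, which is \eqref{eq:squared-loss-approximation-lower}.

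\textbf{The ``consequently'' part.} Fix $p$.
\begin{enumerate}
\item The empirical squared losses are $P_n(Y-f_1)^2=\widehat p_n$, the empirical frequency of $a$, and $P_n(Y-f_2)^2=c^2=p/4$.
\item By the law of large numbers (or Hoeffding), $\widehat p_n\to p$. So with probability tending to one, ERM returns $f_2$, and then $\epsilon_1(\widehat f_{\rm sq})=\tfrac12\sqrt p$.
\item Suppose \eqref{eq:basic-goal} held for squared-loss ERM with some constant $C_{\rm aprx}$ and some $r_n\to0$, at any fixed confidence level. Letting $n\to\infty$ along samples where ERM picks $f_2$ gives $\tfrac12\sqrt p\le C_{\rm aprx}\,p$.
\item Since $\tfrac12\sqrt p\le C_{\rm aprx}\,p$ is equivalent to $C_{\rm aprx}\ge 1/(2\sqrt p)$, and this must hold for every $p\in(0,1/4)$, we get a contradiction as $p\to0$.
\end{enumerate}

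\textbf{Main obstacle.} The population computation is elementary. The only care needed is in the finite-sample step: the oracle inequality must be applied with probability strictly positive on the event that ERM selects $f_2$. This follows from concentration of $\widehat p_n$ around $p$, since the gap between $p$ and $p/4$ is the fixed quantity $3p/4>0$.
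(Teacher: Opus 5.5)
Your proof is correct and is essentially the paper's. The paper proves the more general Proposition~\ref{prop:squared-loss-sharp-approximation-lower} with the same two-point construction: $\eta\equiv0$, a spike predictor equal to $1$ on the atom of mass $p$, and a flat predictor $(1-\gamma)\sqrt p$; it recovers this statement at $\gamma=1/2$, and for the ERM step it uses the strong law to show that ERM eventually selects the flat predictor almost surely, whereas you use convergence in probability and spell out the contradiction with \eqref{eq:basic-goal} more explicitly than the paper does.
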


Proposition~\ref{prop:squared-loss-no-constant-factor} shows that the $\sqrt{\epsilon_{1,*}}$ dependence in \eqref{eq:squared-loss-l1-upper} is unavoidable up to a universal constant. This is a population-level obstruction: even with noiseless labels and arbitrarily many samples, squared loss can prefer an $L_2$-better predictor whose $L_1$ error is of order $\sqrt{\epsilon_{1,*}}$ rather than $\epsilon_{1,*}$. Appendix~\ref{app:squared-loss-approximation-lower} gives a sharper result (with proof) that, for any $\gamma\in(0,1)$, one can construct an instance satisfying $\epsilon_1(f_{\rm sq}^*)=(1-\gamma)\sqrt{\epsilon_{1,*}}$. Proposition~\ref{prop:squared-loss-no-constant-factor} is a specialization with $\gamma = 1/2$. 

The $n^{-1/2}$ statistical term in \eqref{eq:squared-loss-l1-upper} cannot be improved in general. Standard minimax lower bounds show an $\Omega(n^{-1/2})$ error for arbitrary estimators in simple bounded mean-estimation problems; see, e.g., \citet{tsybakov2009introduction,wainwright19high}. The limitation of the estimation error from squared-loss ERM is the failure to adapt to easier instances. In realizable full-information cost-sensitive classification, \citet[Theorem~2]{foster21efficient} show that least-squares plug-in prediction can incur $\Omega(n^{-1/2})$ excess risk even when the first-order rate is $O(1/n)$. Analogously, in realizable $[0,1]$-valued regression, \citet[Proposition~8]{pmlr-v336-li26a} show that squared-loss ERM can incur $\Omega(n^{-1/2})$ $L_1$ error even when the first-order rate is $O(1/n)$. 

Taken together, these results identify two distinct limitations of squared-loss ERM for our objective. Under misspecification, its $L_2$ approximation can be arbitrarily worse than the best $L_1$ approximation in multiplicative terms; even under realizability, squared-loss ERM does not adapt to favorable conditional variance.

\vspace{-1em}

\subsection{Log-loss ERM}
\label{subsec:log-loss-erm}

\vspace{-0.5em}

Another natural choice for conditional-mean estimation is log loss, which is widely used for learning probabilistic predictions. Recall the log-loss ERM
\begin{equation}
    \widehat f_{\log}
    \in
    \argmin_{f\in\cF}
    \frac1n\sum_{i=1}^n
    \ell_{\log}\bigl(Y_i,f(X_i)\bigr),
    \qquad
    \ell_{\log}(y,q)
    :=
    -y\log q-(1-y)\log(1-q).
    \label{eq:log-loss-erm}
\end{equation}
For any $Y\in[0,1]$, conditioning on $X=x$ gives
\[
    \EE[\ell_{\log}(Y,q)\mid X=x]
    =
    -\eta(x)\log q-(1-\eta(x))\log(1-q),
\]
which is uniquely minimized at $q=\eta(x)$. Thus, like squared loss, log loss is calibrated to the conditional mean when optimization is unrestricted.

Log loss is particularly relevant to our comparison because, under realizability, it can exhibit favorable first-order behavior that squared loss fails to achieve. In the cost-sensitive classification setting, \citet{foster21efficient} show that log-loss regression yields a first-order guarantee. More directly for our regression setting, \citet{pmlr-v336-li26a} establish a first-order guarantee for log-loss ERM measured in the same $L_1$ prediction error considered in this paper. 
This motivates asking whether it also avoids the other deficiency of squared loss under misspecification: the mismatch between its population objective and the best $L_1$ approximation in $\cF$. We show that it does not. Even when all predictions are uniformly bounded away from $0$ and $1$, log-loss ERM can incur $L_1$ error of order $\sqrt{\epsilon_{1,*}}$, and therefore cannot achieve a universal constant-factor $L_1$ oracle guarantee.

For $p\in[0,1]$ and $q\in(0,1)$, let
$
    \kl(p,q)
    :=
    p\log\frac{p}{q}
    +(1-p)\log\frac{1-p}{1-q},
$
with the convention $0\log 0=0$. At the population level  (see Appendix~\ref{app:log-loss-population-kl} for the derivation),
\begin{equation}
    \EE\!\left[
        \ell_{\log}(Y,f(X))
        -
        \ell_{\log}(Y,\eta(X))
    \right]
    =
    \EE\!\left[
        \kl\bigl(\eta(X),f(X)\bigr)
    \right].
    \label{eq:log-loss-population-kl}
\end{equation}
Thus, under misspecification, population log-loss minimization selects a KL-optimal approximation to $\eta$, which may not coincide with the best $L_1$ approximation.

Because log loss becomes increasingly sensitive as predictions approach $0$ or $1$, an $L_1$ guarantee in terms of $\epsilon_{1,*}$ requires explicit control of the prediction range. The following proposition gives the resulting guarantee under a restricted boundary condition.

\begin{proposition}[$L_1$ guarantee for log-loss ERM]
\label{prop:log-loss-upper}
Suppose $Y\in[0,1]$, $\cF$ is finite, and, for some $\gamma\in(0,1/2)$,
$
    \cF\subseteq[\gamma,1-\gamma]^{\cX}.
$
Then, with probability at least $1-\delta$,
\begin{equation}
    \epsilon_1(\widehat f_{\log})
    \le
    \sqrt{
        \frac{\log(1/\gamma)}{1-\gamma}
        \,\epsilon_{1,*}
    }
    +
    \sqrt{
        \left(
            2+\frac23\log\frac1\gamma
        \right)
        \frac{\log(2|\cF|^2/\delta)}{n}
    }.
    \label{eq:log-loss-l1-upper}
\end{equation}
In particular, for every fixed $\gamma$,
$\epsilon_1(\widehat f_{\log})
\lesssim_\gamma
\sqrt{\epsilon_{1,*}}
+
\sqrt{\log(|\cF|/\delta)/n}$.
\end{proposition}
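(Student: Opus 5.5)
The proof follows the squared-loss template of Proposition~\ref{prop:squared-loss-upper}, with KL in place of squared $L_2$ distance. Write $\Delta(f):=\EE[\kl(\eta(X),f(X))]$, which by \eqref{eq:log-loss-population-kl} equals the excess population log loss of $f$ over $\eta$.

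\textbf{Step 1 (comparison inequalities).} Pinsker's inequality gives $\kl(p,q)\ge 2(p-q)^2$. Combined with Jensen's inequality,
\[
\epsilon_1(f)\le \epsilon_2(f)\le \sqrt{\Delta(f)/2}.
\]
In the other direction, $\kl(p,q)$ must be bounded by a multiple of $|p-q|$ when $q\in[\gamma,1-\gamma]$. Since $\partial_q\kl(p,q)=(q-p)/(q(1-q))$, we have
\[
\kl(p,q)=\int_p^q \frac{t-p}{t(1-t)}\,dt .
\]
One can bound this by $|p-q|$ times $\max\bigl(\log\tfrac{q}{p},\log\tfrac{1-q}{1-p}\bigr)$, or more directly by $|p-q|\cdot\max_{\text{endpoints}}|\log(\cdot)|$. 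The target is $\kl(p,q)\le \frac{\log(1/\gamma)}{1-\gamma}\,|p-q|$, presumably using $q\ge\gamma$ together with $\kl(p,q)\le -p\log q-(1-p)\log(1-q)-H(p)$. Taking expectations at $f_\star$ gives
\[
\Delta(f_\star)\le \tfrac{\log(1/\gamma)}{1-\gamma}\,\epsilon_{1,*}.
\]
Pinning down exactly this constant is the fiddly part. The cleaner route is to check that the function $p\mapsto \kl(p,q)/|p-q|$ is maximized at an endpoint $p\in\{0,1\}$. At $p=1$ it equals $\log(1/q)/(1-q)\le \log(1/\gamma)/(1-\gamma)$, because $\log(1/q)/(1-q)$ is decreasing in $q$. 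The case $p=0$ is symmetric.

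\textbf{Step 2 (concentration of log-loss differences).} For each pair $f,g\in\cF$, set
\[
Z_{f,g}:=\ell_{\log}(Y,f(X))-\ell_{\log}(Y,g(X)).
\]
Since the predictions lie in $[\gamma,1-\gamma]$, $|Z_{f,g}|\le\log\frac{1-\gamma}{\gamma}\le\log(1/\gamma)$. Apply Bernstein's inequality with a union bound over all $|\cF|^2$ pairs, which produces the $\log(2|\cF|^2/\delta)$ factor. To get a fast rate, bound the variance of $Z_{f,\star}$ by a multiple of $\Delta(f)+\Delta(f_\star)$. Here $Z=Y\log\frac{g}{f}+(1-Y)\log\frac{1-g}{1-f}$, so $Z^2$ is controlled by $(\log\text{-ratio})^2$, which is comparable to $(f-g)^2$ divided by boundary factors. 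The variance bound should take the form $\EE Z^2\lesssim(1+\log(1/\gamma))\bigl(\Delta(f)+\Delta(f_\star)\bigr)$; I expect this to be the main obstacle, because the constant $2+\tfrac23\log\frac1\gamma$ in the statement suggests the proof instead bounds the variance by $2\cdot$range$\,\times$ mean plus a $\tfrac23\cdot$range term from Bernstein. If this is too delicate, a simpler fallback is to use Bernstein with variance at most $\mathrm{range}\cdot\EE|Z|$ and accept a slower route.

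\textbf{Step 3 (combining).} Since $\widehat f_{\log}$ minimizes empirical log loss, $P_n Z_{\widehat f,\star}\le0$. Bernstein then gives
\[
\Delta(\widehat f)\le \Delta(f_\star)+\sqrt{2V\log(\cdot)/n}+\tfrac23\log(1/\gamma)\log(\cdot)/n.
\]
Solving this quadratic-type inequality yields $\Delta(\widehat f)\lesssim \Delta(f_\star)+(2+\tfrac23\log\frac1\gamma)\log(\cdot)/n$. Finally apply $\epsilon_1(\widehat f)\le\sqrt{\Delta(\widehat f)/2}$ and $\sqrt{a+b}\le\sqrt a+\sqrt b$, substituting Step 1's bound on $\Delta(f_\star)$. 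This gives \eqref{eq:log-loss-l1-upper}, with the Pinsker factor $1/2$ absorbing constants.
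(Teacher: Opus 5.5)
Your Step~1 is sound, and its reverse comparison matches the paper. Convexity of $p\mapsto\kl(p,q)$ with $\kl(q,q)=0$ puts the maximum of $\kl(p,q)/|p-q|$ at $p\in\{0,1\}$, and monotonicity of $\log(1/q)/(1-q)$ then gives the constant $\log(1/\gamma)/(1-\gamma)$. The gaps are in Steps~2--3.

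First, the variance bound that carries the whole $\log(1/\gamma)$ dependence is only conjectured. The mechanism you sketch (log-ratios controlled by $|f-g|$ over boundary factors) gives $\EE Z_{f,g}^2\lesssim\gamma^{-2}(\Delta(f)+\Delta(g))$, not a $\log(1/\gamma)$ factor. What you need is a second-moment bound for the excess loss relative to $\eta$. Set $\Delta_f:=\ell_{\log}(Y,f(X))-\ell_{\log}(Y,\eta(X))$, so that $P\Delta_f=\Delta(f)$, and let $B:=\log(1/\gamma)$, $p=\eta(x)$, $q=f(x)$.
\begin{itemize}
\item Convexity of $y\mapsto\Delta_f^2$ gives $\EE[\Delta_f^2\mid X=x]\le p\log^2\frac pq+(1-p)\log^2\frac{1-p}{1-q}$.
\item The scalar inequality $t\log^2t\le(B+2)(t\log t-t+1)$ for $0\le t\le1/\gamma$, applied at $t=p/q$ and $t=(1-p)/(1-q)$, then gives $P\Delta_f^2\le(B+2)\Delta(f)$.
\end{itemize}
Your $Z_{f,g}=\Delta_f-\Delta_g$ inherits this bound. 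With either this or the crude $\gamma^{-2}$ bound, your route proves the ``in particular'' rate for fixed $\gamma$.

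Second, your route does not prove the displayed inequality, and the Pinsker factor cannot absorb the difference. Because you apply Bernstein to $Z_{\widehat f,f_\star}$, the variance involves the data-dependent $\widehat f$. Take the realizable case $f_\star=\eta$ and write $\alpha:=\log(2|\cF|^2/\delta)/n$. The best you can get is $\Delta(\widehat f)\le\sqrt{2(B+2)\Delta(\widehat f)\alpha}+c$ with $c\ge0$. This supports at best $\Delta(\widehat f)\le 2(B+2)\alpha$, so Pinsker gives at best $\epsilon_1(\widehat f)\le\sqrt{(B+2)\alpha}$. The statement requires $\sqrt{(2+\frac23B)\alpha}$, and $B+2>2+\frac23B$ for every $\gamma<1/2$. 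Pinsker's $1/2$ only improves the approximation coefficient to $1/\sqrt2$, which is useless when $\epsilon_{1,*}=0$.

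The missing idea is the paper's one-sided Hellinger bound for the learned predictor, which involves no variance term. Let $H^2(f,\eta):=\EE[(\sqrt f-\sqrt\eta)^2+(\sqrt{1-f}-\sqrt{1-\eta})^2]$.
\begin{itemize}
\item Convexity in $y$ gives $\EE[e^{-\Delta_f/2}\mid X=x]\le\sqrt{pq}+\sqrt{(1-p)(1-q)}$, hence $\EE e^{-\Delta_f/2}\le e^{-H^2(f,\eta)/2}$.
\item Markov's inequality and a union bound give $H^2(f,\eta)\le P_n\Delta_f+2\alpha$ simultaneously for all $f\in\cF$.
\item Empirical optimality then yields $H^2(\widehat f_{\log},\eta)\le P_n\Delta_{f_0}+2\alpha$.
\item Bernstein is needed only at the fixed comparator $f_0$, with variance $(B+2)\Delta(f_0)$ and range term $\frac{2B}{3}\alpha$.
\item The cross term is absorbed into $(\sqrt{\Delta(f_0)}+\sqrt{(2+\frac{2B}{3})\alpha})^2$, since $2+\frac{2B}{3}\ge\frac{B+2}{2}$.
\item Finally, $\epsilon_1\le H$ replaces Pinsker.
\end{itemize}
So the $2$ in $2+\frac23\log\frac1\gamma$ is the cost of this Chernoff step, not of a range-times-mean variance bound as you guessed.
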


The proof is deferred to Appendix~\ref{app:log-loss-upper}. 
Proposition~\ref{prop:log-loss-upper} shows that, even under a restricted boundary condition, the natural $L_1$ guarantee for log-loss ERM scales as $\sqrt{\epsilon_{1,*}}$ rather than linearly with $\epsilon_{1,*}$. As with squared loss, one might wonder whether this $\sqrt{\epsilon_{1,*}}$ dependence is merely an artifact of the analysis. The next proposition shows that it is unavoidable: even for a two-element class uniformly bounded away from $0$ and $1$, the population log-loss minimizer can have $L_1$ error of order $\sqrt{\epsilon_{1,*}}$.

\begin{proposition}[Log loss has no constant $L_1$ approximation factor]
\label{prop:log-loss-no-constant-factor}
For every $p\in(0,1)$, there exists a noiseless regression problem and a two-element class $\cF\subseteq[1/4,3/4]^{\cX}$ such that $\epsilon_{1,*}=p/4$, while the unique population log-loss minimizer
$
    f_{\log}^*
    \in
    \argmin_{f\in\cF}
    \EE[\ell_{\log}(Y,f(X))]
$
satisfies
\begin{equation}
    \epsilon_1(f_{\log}^*)
    =
    \frac12\sqrt{\epsilon_{1,*}}.
    \label{eq:log-loss-approximation-lower}
\end{equation}
Consequently, log-loss ERM cannot satisfy the basic goal \eqref{eq:basic-goal} for any finite universal approximation factor, even when all predictions are uniformly bounded away from $0$ and $1$.
\end{proposition}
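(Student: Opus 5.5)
The plan is to build an explicit two-point noiseless instance in which the label is the constant $1/2$. There is a small-error competitor that makes a large mistake on a set of small mass, and a constant competitor that makes a small mistake everywhere. Because log loss penalizes large pointwise deviations roughly quadratically, it prefers the spread-out error, while $L_1$ prefers the concentrated one. This is the same mechanism behind Proposition~\ref{prop:squared-loss-no-constant-factor}.

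\textbf{Construction.} Take $\cX=\{a,b\}$ with $P_X(a)=p$ and $P_X(b)=1-p$, and let $Y=\eta(X)\equiv 1/2$ deterministically, so the problem is noiseless. Define the two predictors by
\[
f_1(a)=3/4,\quad f_1(b)=1/2,\qquad f_2\equiv \tfrac12+\tfrac{\sqrt p}{4}.
\]
Both lie in $[1/4,3/4]$, since $\sqrt p/4<1/4$ for $p\in(0,1)$.

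\textbf{$L_1$ errors.} We have $\epsilon_1(f_1)=p/4$ and $\epsilon_1(f_2)=\sqrt p/4$. Since $\sqrt p> p$ on $(0,1)$, this gives $\epsilon_{1,*}=p/4$. It also gives $\epsilon_1(f_2)=\tfrac12\sqrt{p/4}=\tfrac12\sqrt{\epsilon_{1,*}}$, as required in \eqref{eq:log-loss-approximation-lower}.

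\textbf{Comparing log-loss risks.} By \eqref{eq:log-loss-population-kl}, it suffices to compare $\EE[\kl(\eta(X),f(X))]$ for the two predictors. For $q=\tfrac12+d$ one has
\[
\kl(\tfrac12,q)=-\tfrac12\log\bigl(4q(1-q)\bigr)=-\tfrac12\log(1-4d^2).
\]
Hence
\[
\EE\,\kl(\eta,f_1)=\tfrac p2\log\tfrac43,\qquad \EE\,\kl(\eta,f_2)=-\tfrac12\log\bigl(1-\tfrac p4\bigr).
\]
Thus $f_2$ is the unique population minimizer if and only if
\[
g(p):=-\log\bigl(1-\tfrac p4\bigr)<p\log\tfrac43 .
\]

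\textbf{The key inequality.} The function $g$ is strictly convex, and it agrees with the linear function $p\log\frac43$ at $p=0$ (both vanish) and at $p=1$, where $-\log\frac34=\log\frac43$. Strict convexity therefore gives strict inequality on the open interval $(0,1)$. This yields uniqueness of $f_{\log}^*=f_2$ and the claimed identity.

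This is the only delicate step, and it works cleanly. The constants ($3/4$ and the shift $\sqrt p/4$) are chosen precisely so that the endpoint $p=1$ gives equality.

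\textbf{Consequence.} Since $\epsilon_1(f_{\log}^*)/\epsilon_{1,*}=1/\sqrt{p}\to\infty$ as $p\to0$, no universal $C_{\rm aprx}$ can work for the population minimizer. With infinitely many noiseless samples, log-loss ERM concentrates on $f_2$, so no vanishing $r_n$ rescues \eqref{eq:basic-goal}.
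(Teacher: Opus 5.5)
Your proposal is correct and follows the paper's proof essentially verbatim: the same two-point noiseless instance with $Y\equiv 1/2$ (only the two input points are relabeled), the same spike and flat predictors, the same KL computations, and the same strict-convexity comparison of $-\log(1-p/4)$ against the chord $p\log\frac43$. For the consequence, the paper makes your ``ERM concentrates on $f_2$'' step precise with the strong law of large numbers: the flat predictor's empirical risk is deterministic, and the spike's empirical risk converges almost surely to a strictly larger value.
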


Proposition~\ref{prop:log-loss-no-constant-factor} shows that the $\sqrt{\epsilon_{1,*}}$ dependence in Proposition~\ref{prop:log-loss-upper} is unavoidable up to a universal constant. Even when the hypothesis class is uniformly bounded away from $0$ and $1$, the geometry of KL divergence can make the population log-loss minimizer have $L_1$ error of order $\sqrt{\epsilon_{1,*}}$, even though $\cF$ contains another hypothesis with $L_1$ error $\epsilon_{1,*}$. Thus, as with squared loss, log-loss ERM cannot in general achieve linear dependence on $\epsilon_{1,*}$ or a finite universal $L_1$ approximation factor.

The restriction $\cF\subseteq[\gamma,1-\gamma]^{\cX}$ is essential for Proposition~\ref{prop:log-loss-upper}. Without requiring predictions to stay uniformly away from $0$ and $1$, log-loss ERM can fail much more severely: its population minimizer may have $L_1$ error bounded away from zero even as $\epsilon_{1,*}\to0$. We give this stronger boundary-sensitive obstruction in Appendix~\ref{app:log-loss-boundary}. Thus, even in the more favorable setting where all hypotheses are bounded away from $0$ and $1$, the worst-case approximation dependence of log-loss ERM remains of order $\sqrt{\epsilon_{1,*}}$, matching the qualitative limitation of squared-loss ERM.

\vspace{-1em}

\section{Sign betting loss and variance-adaptive bounds}
\label{sec:sign-betting}
\vspace{-0.5em}

\subsection{Sign betting for misspecified regression}
\label{sec:sign-betting-regression}

Motivated by the limitations of standard regression objectives established in Section~\ref{sec:standard-losses}, we turn to the min--max pairwise estimator
introduced by \citet{baraud2025estimating}. We show that this procedure, which
we call \emph{Sign-Betting Regression (SBR)}, aligns more with the best $L_1$
approximation to the conditional mean within $\cF$ and yields strong oracle
guarantees under misspecification.
Given $n$ i.i.d.\ samples $(X_i,Y_i)_{i=1}^n$, for $f,h\in\cF$, define
\begin{equation}
    \widehat T(f,h)
    :=
    \frac{1}{n}\sum_{i=1}^n
    \bigl(Y_i-f(X_i)\bigr)
    \sign\bigl(h(X_i)-f(X_i)\bigr),
    \label{eq:empirical-sign-betting-statistic}
\end{equation}
where $\sign(0)=0$. Our \emph{Sign-Betting Regression (SBR)} algorithm returns
\begin{equation}
    \hf
    \in
    \argmin_{f\in\cF}
    \sup_{h\in\cF}\widehat T(f,h).
    \label{eq:sign-betting-estimator}
\end{equation}
For the finite classes considered in our main results, the supremum above is a maximum.

The terminology ``sign betting'' comes from viewing $h$ as a challenger that places a directional bet against the residual of $f$. At an input $x$, the challenger bets $+1$ when $h(x)>f(x)$ and $-1$ when $h(x)<f(x)$. Its payoff is $(Y-f(X))\sign(h(X)-f(X))$. Thus, if $h$ lies above $f$, the challenger benefits when the observed response is also above $f$; if $h$ lies below $f$, it benefits when the observed response is below $f$. A large value of $\widehat T(f,h)$ therefore indicates that the direction from $f$ toward $h$ is systematically supported by the residuals.

The inner supremum in \eqref{eq:sign-betting-estimator} searches for the challenger $h\in\cF$ that achieves the largest empirical sign-betting value $\widehat T(f,h)$ against a given candidate $f$. The outer minimization then selects a predictor whose residuals are difficult to exploit by any $h\in\cF$. Thus, unlike ordinary empirical risk minimization with a pointwise loss $\ell(Y,f(X))$, the objective associated with $f$ depends on how $f$ compares with the other predictors in $\cF$.

We next study the population geometry of sign betting. The key identity shows that, for any $f,h\in\cF$, the two betting quantities associated with $(f,h)$ and $(h,f)$ sum exactly to $\normz{f-h}_1$. This identity will allow us to translate control of the sign-betting objective into control of $L_1$ error.

\vspace{-.5em}

\subsection{Population geometry of sign betting}
\label{sec:sign-betting-population}

\vspace{-.5em}

We first study the population behavior of sign betting. We will show that, for any pair $f,h\in\cF$, the two population quantities $T_P(f,h)$ and $T_P(h,f)$ sum exactly to the $L_1(P_X)$ distance between $f$ and $h$. This identity leads to a population performance guarantee for sign betting and provides the basic mechanism for the high-probability guarantees developed in the next subsection. For
$f,h\in\cF$, define
\begin{equation}
    T_P(f,h)
    :=
    \EE\!\left[
        (Y-f(X))\sign(h(X)-f(X))
    \right]
    = \EE\!\left[ (\eta(X)-f(X))\sign(h(X)-f(X)) \right].
    \label{eq:population-sign-betting-statistic}
\end{equation}
The second equality follows by conditioning on $X$ and using $\EE[Y\mid X]=\eta(X)$. 
We also define the population sign-betting objective
\begin{equation}
    B_P(f)
    :=
    \sup_{h\in\cF}T_P(f,h).
    \label{eq:population-sign-betting-objective}
\end{equation}

The basic geometry of $T_P(\cdot,\cdot)$ follows from a pairwise identity: 
\begin{lemma}[Pairwise identity for sign betting]
\label{lem:sign-betting-pairwise-identity}
For any $f,h\in\cF$,
\begin{equation}
    T_P(f,h)+T_P(h,f)
    =
    \normz{f-h}_1.
    \label{eq:sign-betting-pairwise-identity}
\end{equation}
\end{lemma}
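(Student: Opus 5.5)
The plan is to work pointwise in $x$ through the second (conditioned) form of $T_P$ given in \eqref{eq:population-sign-betting-statistic}. Since $\sign(f(X)-h(X))=-\sign(h(X)-f(X))$, we can write
\begin{align*}
    T_P(f,h)+T_P(h,f)
    &=
    \EE\!\left[(\eta(X)-f(X))\sign(h(X)-f(X))+(\eta(X)-h(X))\sign(f(X)-h(X))\right]\\
    &=
    \EE\!\left[\bigl((\eta(X)-f(X))-(\eta(X)-h(X))\bigr)\sign(h(X)-f(X))\right].
\end{align*}
The $\eta(X)$ terms cancel, and the integrand reduces to $(h(X)-f(X))\sign(h(X)-f(X))=|h(X)-f(X)|$. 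This holds even where $h(X)=f(X)$, because $\sign(0)=0$ makes both sides vanish there. Taking expectations then gives $\normz{f-h}_1$.

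The only preliminary step is to justify the conditioning identity $T_P(f,h)=\EE[(\eta(X)-f(X))\sign(h(X)-f(X))]$, which the paper already states. It follows from the tower property: the factor $\sign(h(X)-f(X))$ is $X$-measurable and bounded, and $\EE[Y\mid X]=\eta(X)$. Integrability is automatic, since everything is bounded by $1$ (all functions take values in $[0,1]$).

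There is no real obstacle here. The one point that deserves care is the tie set $\{f=h\}$, where the sign convention $\sign(0)=0$ is exactly what makes the identity hold with equality rather than an inequality.
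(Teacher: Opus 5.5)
Your proof is correct and matches the paper's argument: apply $\sign(f-h)=-\sign(h-f)$, cancel the $\eta$ terms, and recognize $(h-f)\sign(h-f)=|h-f|$. One small correction on where the convention enters: $\sign(0)=0$ is needed for the antisymmetry step on the tie set $\{f=h\}$, where the integrand would otherwise be $2(\eta-f)\sign(0)$; the final product $(h-f)\sign(h-f)$ vanishes on that set for any value of $\sign(0)$.
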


\vspace{-1em}

\begin{proof}
Using $\sign(f-h)=-\sign(h-f)$,
\begin{align*}
    T_P(f,h)+T_P(h,f)
    &=
    \EE\!\left[
        (\eta-f)\sign(h-f)
        +
        (\eta-h)\sign(f-h)
    \right] \\
    &=
    \EE\!\left[
        (h-f)\sign(h-f)
    \right] \\
    &=
    \EE|h-f|
    =
    \normz{f-h}_1.
\end{align*}
\end{proof}

\vspace{-1em}

The population objective $B_P(f)$ is also directly controlled by the $L_1$ approximation error:
\begin{lemma}[Population sign-betting objective]
\label{lem:population-sign-betting-upper-bound}
For every $f\in\cF$,
\begin{equation}
    0
    \le
    B_P(f)
    \le
    \epsilon_1(f).
    \label{eq:population-sign-betting-upper-bound}
\end{equation}
\end{lemma}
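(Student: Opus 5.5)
The two inequalities in \eqref{eq:population-sign-betting-upper-bound} come from two separate elementary observations, and I would prove them in turn, working throughout with the second expression for $T_P(f,h)$ in \eqref{eq:population-sign-betting-statistic}, which involves only $\eta$, $f$, and $h$.

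\emph{Lower bound.} Take the challenger $h=f$, which is admissible because $f\in\cF$. Then $\sign(h(X)-f(X))=\sign(0)=0$ at every $X$, so $T_P(f,f)=0$. Since $B_P(f)$ is a supremum over $h\in\cF$, this gives $B_P(f)\ge T_P(f,f)=0$. The only point to note is the convention $\sign(0)=0$, which makes the diagonal term vanish identically rather than merely in expectation.

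\emph{Upper bound.} Fix an arbitrary $h\in\cF$. For each $x$, $|\sign(h(x)-f(x))|\le 1$, so pointwise
\[
(\eta(x)-f(x))\sign(h(x)-f(x))\le |\eta(x)-f(x)|.
\]
Taking expectations over $X\sim P_X$ yields $T_P(f,h)\le \EE|\eta(X)-f(X)|=\epsilon_1(f)$. Because this bound does not depend on $h$, taking the supremum over $h\in\cF$ gives $B_P(f)\le\epsilon_1(f)$.

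None of these steps is a genuine obstacle. The one point that needs checking is that every expectation is well defined: $f$, $h$, and $\eta$ are measurable and take values in $[0,1]$, so all integrands are bounded and measurable. The identity $T_P(f,h)=\EE[(\eta-f)\sign(h-f)]$ is already established by the tower property in \eqref{eq:population-sign-betting-statistic}, so it can be used without further argument. Lemma~\ref{lem:sign-betting-pairwise-identity} is not needed for this statement.
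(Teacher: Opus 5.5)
Your proof is correct and follows the paper's argument. The challenger $h=f$ gives $T_P(f,f)=0$ and hence the lower bound. The pointwise bound $(\eta-f)\sign(h-f)\le|\eta-f|$, after taking expectations and the supremum over $h\in\cF$, gives $B_P(f)\le\epsilon_1(f)$.
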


\vspace{-1em}

\begin{proof}
Since $h=f$ is an admissible choice in the definition of $B_P(f)$ in \eqref{eq:population-sign-betting-objective} and
$T_P(f,f)=0$, we have $B_P(f)\ge 0$. Moreover, for every $h\in\cF$,
\[
    T_P(f,h)
    =
    \EE\!\left[
        (\eta(X)-f(X))\sign(h(X)-f(X))
    \right]
    \le
    \EE|\eta(X)-f(X)|
    =
    \epsilon_1(f).
\]
Taking the supremum over $h\in\cF$ yields $B_P(f)\le\epsilon_1(f)$.
\end{proof}

\vspace{-1em}

Lemma~\ref{lem:sign-betting-pairwise-identity} and Lemma~\ref{lem:population-sign-betting-upper-bound} yield a population performance guarantee for sign betting: 
\begin{proposition}[Population guarantee for sign betting]
\label{prop:population-sign-betting-guarantee}
Suppose the minimum of $B_P$ over $\cF$ is attained, and let
$f_B\in\argmin_{f\in\cF} B_P(f)$. Then, for every $f_0\in\cF$,
\begin{equation}
    \epsilon_1(f_B)
    \le
    \epsilon_1(f_0)+2B_P(f_0).
    \label{eq:population-sign-betting-guarantee}
\end{equation}
Consequently, $\epsilon_1(f_B)
    \le
    3\epsilon_1(f_0)$. 
\end{proposition}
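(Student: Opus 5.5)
The plan is to combine the triangle inequality with the pairwise identity of Lemma~\ref{lem:sign-betting-pairwise-identity}, using the comparator $f_0$ as the challenger against $f_B$ and vice versa. Fix $f_0\in\cF$. First,
\[
    \epsilon_1(f_B)=\normz{f_B-\eta}_1\le \normz{f_0-\eta}_1+\normz{f_B-f_0}_1=\epsilon_1(f_0)+\normz{f_B-f_0}_1 .
\]
So it suffices to bound the $L_1$ distance between $f_B$ and $f_0$ by $2B_P(f_0)$.

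For this, apply Lemma~\ref{lem:sign-betting-pairwise-identity} to the pair $(f_B,f_0)$:
\[
    \normz{f_B-f_0}_1=T_P(f_B,f_0)+T_P(f_0,f_B).
\]
Each term is bounded using the definition \eqref{eq:population-sign-betting-objective}. Since $f_0\in\cF$ is an admissible challenger for $f_B$, we get $T_P(f_B,f_0)\le B_P(f_B)$. Since $f_B\in\cF$ is an admissible challenger for $f_0$, we get $T_P(f_0,f_B)\le B_P(f_0)$. By minimality of $f_B$, $B_P(f_B)\le B_P(f_0)$. Together these give $\normz{f_B-f_0}_1\le 2B_P(f_0)$, and substituting into the triangle inequality proves \eqref{eq:population-sign-betting-guarantee}.

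The consequence follows from Lemma~\ref{lem:population-sign-betting-upper-bound}, which gives $B_P(f_0)\le\epsilon_1(f_0)$. Hence $\epsilon_1(f_B)\le\epsilon_1(f_0)+2\epsilon_1(f_0)=3\epsilon_1(f_0)$.

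There is no real obstacle here. The only point needing care is that both cross terms must be controlled by $B_P(f_0)$ rather than by $B_P(f_B)$, and this is exactly where the minimality of $f_B$ is used. Attainment of the minimum is assumed, so no approximate-minimizer slack is required.
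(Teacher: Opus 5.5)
Your proof is correct and follows essentially the same route as the paper: the pairwise identity bounds $\normz{f_B-f_0}_1$ by $B_P(f_B)+B_P(f_0)\le 2B_P(f_0)$ via minimality of $f_B$, the triangle inequality finishes the first claim, and Lemma~\ref{lem:population-sign-betting-upper-bound} gives the factor $3$.
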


\vspace{-1em}

\begin{proof}
By Lemma~\ref{lem:sign-betting-pairwise-identity},
\begin{align*}
    \normz{f_B-f_0}_1
    &=
    T_P(f_B,f_0)+T_P(f_0,f_B) 
    \le
    B_P(f_B)+B_P(f_0) 
    \le
    2B_P(f_0),
\end{align*}
where the last inequality follows from the optimality of $f_B$. Therefore,
\[
    \epsilon_1(f_B)
    \le
    \epsilon_1(f_0)+\normz{f_B-f_0}_1
    \le
    \epsilon_1(f_0)+2B_P(f_0).
\]
The second claim follows from
$B_P(f_0)\le\epsilon_1(f_0)$ by
Lemma~\ref{lem:population-sign-betting-upper-bound}.
\end{proof}

\vspace{-1em}

Equation~\eqref{eq:population-sign-betting-guarantee} is more informative than the worst-case factor-three bound: the additional approximation penalty is governed by the population sign-betting value $B_P(f_0)$, which can be much smaller than $\epsilon_1(f_0)$. 
We next show that the empirical Sign-Betting Regression algorithm inherits this population
guarantee, up to $f_0$-dependent estimation terms that decay with $n$ and adapt to the label variance.

\vspace{-1em}

\subsection{Main variance-adaptive guarantee}
\label{sec:sign-betting-main-theorem}

\vspace{-.5em}

We now turn to the finite-sample performance of SBR. As we will see, the leading estimation
term is controlled by an $f_0$-dependent residual second moment. For
$f_0\in\cF$, define
\begin{align}
    v(f_0)
    &:=
    \sup_{h\in\cF}
    \EE\!\left[
        (Y-f_0(X))^2
        \mathbf{1}\{h(X)\neq f_0(X)\}
    \right] \nonumber\\
    &=
    \sup_{h\in\cF}
    \EE\!\left[
        \left(
            \sigma^2(X)+(\eta(X)-f_0(X))^2
        \right)
        \mathbf{1}\{h(X)\neq f_0(X)\}
    \right].
    \label{eq:comparator-residual-variance}
\end{align}

The second equality follows from the noise--approximation decomposition in Section~\ref{sec:setting}. In particular, $v(f_0)\le \bar\sigma^2+\epsilon_2(f_0)^2$. Thus, under misspecification, $v(f_0)$ captures both the conditional label variance and the squared approximation residual, restricted to regions where $f_0$ disagrees with other predictors in $\cF$. This localization can make $v(f_0)$
smaller than the corresponding global residual second moment
$\bar\sigma^2+\epsilon_2(f_0)^2$.
Our main finite-class guarantee is the following.

\begin{theorem}[Variance-adaptive guarantee for SBR]
\label{thm:sbr-variance-adaptive}
Suppose $\cF$ is finite, and let $\hf$ be the output of SBR in
\eqref{eq:sign-betting-estimator}. 
For $\rho>0$, let
$C_\rho
    :=
    \frac{13}{3}
    +\frac{1}{\rho}
    +\frac{23}{12}\rho$. 
Then, for every $\delta\in(0,1)$, with probability at least $1-\delta$,
the following holds simultaneously for every $\rho>0$:
\begin{equation}
    \epsilon_1(\hf)
    \le
    \inf_{f_0\in\cF}
    \left\{
        \epsilon_1(f_0)
        +(2+\rho)B_P(f_0)
        +(2+\rho)
        \sqrt{
            \frac{2v(f_0)\log(2|\cF|^2/\delta)}{n}
        }
    \right\}
    +
    C_\rho
    \frac{\log(2|\cF|^2/\delta)}{n}.
    \label{eq:main-sbr-bound}
\end{equation}
\end{theorem}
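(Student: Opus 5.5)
The plan is to make the population argument of Proposition~\ref{prop:population-sign-betting-guarantee} empirical. We will work on a single high-probability event that does not depend on $\rho$, which gives the ``simultaneously for every $\rho>0$'' clause for free. Write $L:=\log(2|\cF|^2/\delta)$. For each ordered pair $(f,h)\in\cF^2$, the summands $Z_{f,h}:=(Y-f(X))\sign(h(X)-f(X))$ lie in $[-1,1]$ and have second moment
\[
    w(f,h):=\EE\bigl[(Y-f(X))^2\mathbf{1}\{h(X)\neq f(X)\}\bigr].
\]
Applying two-sided Bernstein to each pair and taking a union bound over the $|\cF|^2$ ordered pairs, we get, with probability at least $1-\delta$, for all $f,h\in\cF$,
\[
    \bigl|\widehat T(f,h)-T_P(f,h)\bigr|\le \sqrt{2w(f,h)L/n}+\tfrac{2L}{3n}.
\]
By definition, $w(f_0,h)\le v(f_0)$ for every $h$. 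All remaining steps are deterministic on this event.

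Fix $f_0\in\cF$ and set $D:=\normz{\hf-f_0}_1$. By Lemma~\ref{lem:sign-betting-pairwise-identity}, $D=T_P(\hf,f_0)+T_P(f_0,\hf)$, and we bound the two terms separately.

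For the second term, we pass to the empirical statistic and then to the population objective:
\[
    T_P(f_0,\hf)\le \widehat T(f_0,\hf)+\sqrt{2v(f_0)L/n}+\tfrac{2L}{3n}\le \widehat B(f_0)+\sqrt{2v(f_0)L/n}+\tfrac{2L}{3n},
\]
where $\widehat B(f):=\max_h\widehat T(f,h)$. Applying the same deviation bound to each $h$ gives
\[
    \widehat B(f_0)\le B_P(f_0)+\sqrt{2v(f_0)L/n}+\tfrac{2L}{3n}.
\]

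For the first term, we use the empirical optimality $\widehat B(\hf)\le\widehat B(f_0)$:
\[
    T_P(\hf,f_0)\le \widehat T(\hf,f_0)+\sqrt{2w(\hf,f_0)L/n}+\tfrac{2L}{3n}\le \widehat B(f_0)+\sqrt{2w(\hf,f_0)L/n}+\tfrac{2L}{3n}.
\]

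The main obstacle is that $w(\hf,f_0)$ is a variance at the data-dependent $\hf$, not at $f_0$. I would handle it as follows. Since everything is $[0,1]$-valued, $(Y-\hf)^2\le 2(Y-f_0)^2+2(f_0-\hf)^2$ and $(f_0-\hf)^2\le|f_0-\hf|$. This gives $w(\hf,f_0)\le 2v(f_0)+2D$, hence
\[
    \sqrt{2w(\hf,f_0)L/n}\le 2\sqrt{v(f_0)L/n}+2\sqrt{DL/n}.
\]
(Splitting the square root more carefully, e.g.\ with a weighted inequality $(a+b)^2\le(1+\epsilon)a^2+(1+1/\epsilon)b^2$, may be needed to keep the constant in front of $\sqrt{2v(f_0)L/n}$ at exactly what the statement requires.) I would then absorb the $\sqrt{DL/n}$ term by AM--GM, $2\sqrt{DL/n}\le \tfrac{\rho}{2+\rho}D+\tfrac{(2+\rho)L}{\rho n}$, which produces a self-bounding inequality of the form
\[
    \bigl(1-\tfrac{\rho}{2+\rho}\bigr)D\le 2B_P(f_0)+2\sqrt{2v(f_0)L/n}+O(L/n).
\]
Multiplying through by $(2+\rho)/2$ turns the coefficients $2$ into $(2+\rho)$, and collecting the $L/n$ terms yields a constant of the shape $a+1/\rho+b\rho$; matching $C_\rho=\tfrac{13}{3}+\tfrac1\rho+\tfrac{23}{12}\rho$ is bookkeeping of the $\tfrac{2}{3}$ Bernstein terms.

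Finally, the triangle inequality gives $\epsilon_1(\hf)\le\epsilon_1(f_0)+D$, and taking the infimum over $f_0$ yields \eqref{eq:main-sbr-bound}. The minimum over $f_0$ exists because $\cF$ is finite.
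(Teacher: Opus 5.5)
Your route through the population identity $D=T_P(\widehat{f},f_0)+T_P(f_0,\widehat{f})$ can be made to work, but as written the self-bounding inequality with coefficient $2$ on $\sqrt{2v(f_0)L/n}$ does not follow from your displays. The weighted split you propose cannot repair it.

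\textbf{Where the bookkeeping fails.} Tally the coefficient of $\sqrt{2v(f_0)L/n}$:
\begin{itemize}
\item Your bound on $T_P(f_0,\widehat{f})$ detours through $\widehat T(f_0,\widehat{f})$ and $\widehat B(f_0)$, and pays this term twice.
\item Your bound on $T_P(\widehat{f},f_0)$ pays it once more through $\widehat B(f_0)$.
\item The split $w(\widehat{f},f_0)\le 2v(f_0)+2D$ contributes another $\sqrt2$.
\end{itemize}
The total is $3+\sqrt2$. No bound on $w(\widehat{f},f_0)$ in terms of $v(f_0)$ and $D$ can bring this below $4$, which is twice what is needed.

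Even without the detour, a split $(a+b)^2\le(1+\epsilon)a^2+(1+1/\epsilon)b^2$ with fixed $\epsilon$ leaves a coefficient $1+\sqrt{1+\epsilon}>2$. That excess can only be absorbed into the $(2+\rho)$ factor by taking $\epsilon\lesssim\rho$. The factor $1+1/\epsilon$ in front of $D$ then produces an $L/(\rho^2 n)$ term instead of the $L/(\rho n)$ in $C_\rho$. So this is not just bookkeeping.

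\textbf{The two missing observations.} First, $T_P(f_0,\widehat{f})\le B_P(f_0)$ holds directly by definition, since $\widehat{f}\in\cF$; there is no need to pass through the empirical statistic. Second, control the variance at $\widehat{f}$ with the triangle inequality in $L_2(P)$ rather than pointwise. With $I:=\mathbf{1}\{\widehat{f}(X)\neq f_0(X)\}$,
\[
\sqrt{w(\widehat{f},f_0)}=\|(Y-\widehat{f})I\|_{L_2(P)}\le\|(Y-f_0)I\|_{L_2(P)}+\|(f_0-\widehat{f})I\|_{L_2(P)}\le\sqrt{v(f_0)}+\sqrt{D},
\]
using $(f_0-\widehat{f})^2\le|f_0-\widehat{f}|$. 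This is your weighted split optimized over $\epsilon$, which is allowed because the pointwise inequality holds for every $\epsilon>0$ at once.

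With both changes you get $D\le 2B_P(f_0)+2\sqrt{2v(f_0)L/n}+\sqrt{2DL/n}+\tfrac{4L}{3n}$. Young's inequality $\sqrt{2DL/n}\le\theta D+\tfrac{L}{2\theta n}$ with $\theta=\rho/(2+\rho)$ then gives the theorem with constant $\tfrac73+\tfrac1\rho+\tfrac{11}{12}\rho\le C_\rho$. Your $\tfrac23$ Bernstein term is valid: it is the second-moment form of Bernstein for variables bounded above by $1$. With the paper's $\tfrac43$ you would get $\tfrac{11}{3}+\tfrac1\rho+\tfrac{19}{12}\rho$, which is still at most $C_\rho$.

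\textbf{How the paper avoids the issue.} The paper never needs a variance at $\widehat{f}$. It applies the pairwise identity on the sample, $\widehat T(f,h)+\widehat T(h,f)=P_n|f-h|$. Empirical optimality then gives $P_n|\widehat{f}-f_0|\le 2\sup_h\widehat T(f_0,h)$, in which only $v(f_0)$ appears. It passes from $P_n|\widehat{f}-f_0|$ to $\|\widehat{f}-f_0\|_1$ with a lower-tail Bernstein bound for the $[0,1]$-valued class $|f-h|$, whose variance is at most its mean. The data-dependent quantity therefore enters only as $\sqrt{2d\alpha}$, and the same Young step gives exactly $C_\rho$.

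In your repaired argument, the $L_2$ triangle inequality plays the role of that self-bounded variance. Your version uses only the two tails of $Z_{f,h}$ and ends with a slightly smaller constant. The paper separates a betting step from a distance step, which makes the localization at $\widehat{f}$ automatic.
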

The parameter $\rho$ is used only in the analysis and is not an input to SBR.
Since the event in Theorem~\ref{thm:sbr-variance-adaptive} holds
simultaneously for all $\rho>0$, $\rho$ can be chosen as a function of $f_0$ and the other quantities when interpreting the bound. In particular, the theorem implies that, for every $f_0\in\cF$,
\begin{equation} 
\epsilon_1(\hf) \le \epsilon_1(f_0)+2B_P(f_0)+r_n(f_0,\delta), 
\label{eq:sbr-population-plus-remainder} 
\end{equation} 
where $r_n(f_0,\delta)\to0$ as $n\to\infty$ with $P$, $f_0$, $\cF$, and $\delta$ fixed. Thus, the finite-sample guarantee recovers the population bound in Proposition~\ref{prop:population-sign-betting-guarantee} up to a vanishing estimation term. An explicit form of $r_n(f_0,\delta)$ is given in Corollary~\ref{cor:optimized-sbr-bound} in the appendix.

The full proof of Theorem~\ref{thm:sbr-variance-adaptive} is given in Appendix~\ref{app:proof-sbr-variance-adaptive}. At a high level, its proof follows the ingredients developed in the previous subsection.
To elaborate, the pairwise identity in
Lemma~\ref{lem:sign-betting-pairwise-identity} continues to hold exactly on
the sample:
\[
    \widehat T(f,h)+\widehat T(h,f)
    =
    P_n|f-h|.
\]
Hence, mirroring the proof of Proposition~\ref{prop:population-sign-betting-guarantee}, for any $f_0\in\cF$, the empirical optimality of $\hf$
implies
\begin{equation}
    P_n|\hf-f_0|
    \le
    2\sup_{h\in\cF}\widehat T(f_0,h).
    \label{eq:empirical-distance-key}
\end{equation}
Bernstein concentration then upper bounds the right-hand side in terms of $B_P(f_0)$ and $v(f_0)$, and a further concentration step converts the empirical distance on the left-hand side in
\eqref{eq:empirical-distance-key} into $\|\hf-f_0\|_1$. 

\vspace{-.5em}

\subsection{Consequences of the main guarantee}
\label{sec:sbr-consequences}

\vspace{-.5em}

We now specialize Theorem~\ref{thm:sbr-variance-adaptive} to three regimes:
the general agnostic setting, realizability, and favorable misspecification.
Each specialization represents a different aspect of the guarantee.

\vspace{-.5em}

\paragraph{General agnostic guarantee.}
We first consider the general agnostic setting, where we impose no additional
relationship between $\eta$ and $\cF$ or any additional condition on the
$L_1$-optimal predictor
$f_\star\in\argmin_{f\in\cF}\epsilon_1(f)$, beyond the basic assumptions of
Section~\ref{sec:setting}. By
Lemma~\ref{lem:population-sign-betting-upper-bound},
$
B_P(f_\star)
\le
\epsilon_1(f_\star)
=
\epsilon_{1,*}.
$
Combining this bound with \eqref{eq:sbr-population-plus-remainder} gives the
following guarantee.

\begin{corollary}[General agnostic guarantee]
\label{cor:sbr-agnostic}
Under the assumptions of Theorem~\ref{thm:sbr-variance-adaptive}, with
probability at least $1-\delta$,
\begin{equation}
    \epsilon_1(\hf)
    \le
    3\epsilon_{1,*}
    +
    r_n(f_\star,\delta),
    \label{eq:sbr-agnostic}
\end{equation}
where $r_n(f_\star,\delta)$ is given explicitly in
Corollary~\ref{cor:optimized-sbr-bound} and satisfies
$r_n(f_\star,\delta)\to0$ as $n\to\infty$ with $P$, $\cF$, and $\delta$
fixed.
\end{corollary}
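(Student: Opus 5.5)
The corollary follows by instantiating Theorem~\ref{thm:sbr-variance-adaptive} at the comparator $f_0=f_\star$. Since $\cF$ is finite, a minimizer $f_\star\in\argmin_{f\in\cF}\epsilon_1(f)$ exists, so $\epsilon_1(f_\star)=\epsilon_{1,*}$.

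On the probability-$(1-\delta)$ event of Theorem~\ref{thm:sbr-variance-adaptive}, the bound \eqref{eq:main-sbr-bound} holds simultaneously for all $\rho>0$. Dropping the infimum in favor of the particular choice $f_0=f_\star$ gives
\[
\epsilon_1(\hf)\le \epsilon_1(f_\star)+(2+\rho)B_P(f_\star)+(2+\rho)\sqrt{\tfrac{2v(f_\star)L}{n}}+C_\rho\tfrac{L}{n},
\]
where $L:=\log(2|\cF|^2/\delta)$.

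Next, write $(2+\rho)B_P(f_\star)=2B_P(f_\star)+\rho B_P(f_\star)$ and apply Lemma~\ref{lem:population-sign-betting-upper-bound} to the first piece, $2B_P(f_\star)\le 2\epsilon_1(f_\star)$. Together with $\epsilon_1(f_\star)=\epsilon_{1,*}$, the leading terms are at most $3\epsilon_{1,*}$. All remaining terms form the remainder
\[
r_n(f_\star,\delta):=\inf_{\rho>0}\Bigl\{\rho B_P(f_\star)+(2+\rho)\sqrt{2v(f_\star)L/n}+C_\rho L/n\Bigr\},
\]
which is the form \eqref{eq:sbr-population-plus-remainder} with $f_0=f_\star$, and presumably the one made explicit in Corollary~\ref{cor:optimized-sbr-bound}. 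The infimum over $\rho$ is legitimate because the event is uniform in $\rho$, so $\rho$ may depend on $n$, $P$, $\cF$, and $\delta$.

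The only point needing care is showing $r_n(f_\star,\delta)\to0$ with $P$, $\cF$, and $\delta$ fixed. The obstacle is that $B_P(f_\star)$ does not vanish with $n$, so $\rho$ cannot be fixed; it must shrink, while $C_\rho$ contains a $1/\rho$ term. I would choose $\rho=\rho_n=n^{-1/2}$. Then $\rho_n B_P(f_\star)\le n^{-1/2}$, since $B_P\le 1$ for $[0,1]$-valued functions. The square-root term is at most $3\sqrt{2L/n}$, using $v(f_\star)\le 1$. Finally, $C_{\rho_n}L/n\le\bigl(\tfrac{13}{3}+\sqrt n+\tfrac{23}{12}\bigr)L/n=O(L/\sqrt n)$. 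All three terms are $O(\sqrt{L/n})$, hence tend to $0$, which completes the proof. A sharper choice such as $\rho\asymp\sqrt{L/(nB_P(f_\star))}$ balances the terms and is what I expect the explicit appendix form to use, but it is not needed for the stated limit.
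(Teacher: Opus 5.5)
Your proposal is correct and follows the paper's route: you instantiate Theorem~\ref{thm:sbr-variance-adaptive} at $f_0=f_\star$, split $(2+\rho)B_P(f_\star)=2B_P(f_\star)+\rho B_P(f_\star)$, and use Lemma~\ref{lem:population-sign-betting-upper-bound} to get $2B_P(f_\star)\le 2\epsilon_{1,*}$. Your infimum over $\rho$ coincides exactly with the paper's closed form $2s+\frac{13}{3}\alpha+2\sqrt{\alpha(B_P(f_\star)+s+\frac{23}{12}\alpha)}$, where $\alpha=L/n$ and $s=\sqrt{2v(f_\star)\alpha}$, via $\inf_{\rho>0}\{\rho A+\alpha/\rho\}=2\sqrt{\alpha A}$; your choice $\rho_n=n^{-1/2}$ is just another valid way to see that this remainder vanishes with $P$, $\cF$, $\delta$ fixed.
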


Thus, without assuming realizability or any additional property of the
$L_1$-optimal predictor, SBR achieves the basic goal
\eqref{eq:basic-goal} with approximation factor $3$. The estimation term is
$f_\star$-dependent and retains variance adaptation through the residual
second moment $v(f_\star)$. The factor $3$ in
\eqref{eq:sbr-agnostic} is obtained by optimizing the analysis parameter
$\rho$ in Theorem~\ref{thm:sbr-variance-adaptive}, as detailed in
Corollary~\ref{cor:optimized-sbr-bound}. We return to the optimality of the
factor $3$ in Section~\ref{sec:optimality}.

\vspace{-.5em}

\paragraph{Realizable case.}
We next consider the realizable setting, where $\eta\in\cF$. Taking
$f_0=\eta$ in Theorem~\ref{thm:sbr-variance-adaptive} gives
$\epsilon_1(\eta)=0$ and $B_P(\eta)=0$. Moreover, by
\eqref{eq:comparator-residual-variance},
\[
    v(\eta)
    =
    \sup_{h\in\cF}
    \EE\!\left[
        \sigma^2(X)\mathbf{1}\{h(X)\neq\eta(X)\}
    \right]
    \le
    \bar\sigma^2,
\]
where $\bar\sigma^2=\EE[\sigma^2(X)]$ is the average conditional label
variance. Thus, under realizability, the approximation terms vanish and the
estimation error adapts directly to the conditional label variance.

\begin{corollary}[Realizable guarantee]
\label{cor:sbr-realizable}
Under the assumptions of Theorem~\ref{thm:sbr-variance-adaptive}, suppose
that $(P,\cF)$ is realizable. Then, for every $\delta\in(0,1)$, with
probability at least $1-\delta$, simultaneously for every $\rho>0$,
\begin{equation}
\begin{aligned}
    \epsilon_1(\hf)
    &\le
    (2+\rho)\sqrt{\frac{2v(\eta)\log(2|\cF|^2/\delta)}{n}}
    +C_\rho\frac{\log(2|\cF|^2/\delta)}{n} \\
    &\le
    (2+\rho)\sqrt{\frac{2\bar\sigma^2\log(2|\cF|^2/\delta)}{n}}
    +C_\rho\frac{\log(2|\cF|^2/\delta)}{n}.
    \label{eq:sbr-realizable}
\end{aligned}
\end{equation}
\end{corollary}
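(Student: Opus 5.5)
This corollary is a direct specialization of Theorem~\ref{thm:sbr-variance-adaptive}, so the plan is to instantiate that theorem at the comparator $f_0=\eta$ and then bound $v(\eta)$. Realizability gives $\eta\in\cF$, so $\eta$ is an admissible comparator in the infimum in \eqref{eq:main-sbr-bound}. Bounding the infimum by its value at $f_0=\eta$ yields, on the same probability-$(1-\delta)$ event and simultaneously for all $\rho>0$,
\[
\epsilon_1(\hf)\le \epsilon_1(\eta)+(2+\rho)B_P(\eta)+(2+\rho)\sqrt{\frac{2v(\eta)\log(2|\cF|^2/\delta)}{n}}+C_\rho\frac{\log(2|\cF|^2/\delta)}{n}.
\]

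Next we remove the approximation terms. Clearly $\epsilon_1(\eta)=\normz{\eta-\eta}_1=0$. For $B_P(\eta)$, the second expression in \eqref{eq:population-sign-betting-statistic} gives $T_P(\eta,h)=\EE[(\eta(X)-\eta(X))\sign(h(X)-\eta(X))]=0$ for every $h\in\cF$, so $B_P(\eta)=0$. Alternatively, Lemma~\ref{lem:population-sign-betting-upper-bound} gives $0\le B_P(\eta)\le\epsilon_1(\eta)=0$. This establishes the first inequality in \eqref{eq:sbr-realizable}.

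For the second inequality, we use the decomposition in \eqref{eq:comparator-residual-variance} with $f_0=\eta$. The squared-approximation term $(\eta-f_0)^2$ vanishes, so
\[
v(\eta)=\sup_{h\in\cF}\EE\bigl[\sigma^2(X)\mathbf 1\{h(X)\neq\eta(X)\}\bigr].
\]
Since $\sigma^2\ge0$ and the indicator is at most $1$, we get $v(\eta)\le\EE[\sigma^2(X)]=\bar\sigma^2$. Because the square root is monotone, this gives the second line of \eqref{eq:sbr-realizable}.

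There is no real obstacle here. The only point that needs care is that the simultaneity over $\rho$ and the probability level are inherited unchanged from Theorem~\ref{thm:sbr-variance-adaptive}: we use a single event, and every step after it is deterministic.
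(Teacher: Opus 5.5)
Your proposal is correct and follows the same route as the paper: instantiate Theorem~\ref{thm:sbr-variance-adaptive} at the comparator $f_0=\eta\in\cF$, note that $\epsilon_1(\eta)=B_P(\eta)=0$, and then use the decomposition \eqref{eq:comparator-residual-variance} to bound $v(\eta)\le\bar\sigma^2$. You are also right that simultaneity over $\rho$ and the probability level carry over directly from the theorem's single event.
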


The second bound in \eqref{eq:sbr-realizable} recovers, up to constants, the
realizable variance-adaptive rate established by
\citet{pmlr-v336-li26a}, where the estimation error for their betting loss
scales as
$\sqrt{\bar\sigma^2\log(|\cF|/\delta)/n}
+\log(|\cF|/\delta)/n$.
Moreover, the localized quantity $v(\eta)$ can be smaller than the global
average conditional variance $\bar\sigma^2$, since it includes label
variance only on regions where some predictor in $\cF$ disagrees with
$\eta$. Thus, our Sign-Betting Regression retains the favorable noise adaptation available under
realizability while extending the guarantee to misspecified classes.

\vspace{-.5em}

\paragraph{Favorable misspecification.}
Finally, because the approximation term in
Theorem~\ref{thm:sbr-variance-adaptive} depends on $B_P(f_0)$ rather than only on $\epsilon_1(f_0)$, the guarantee can be substantially sharper on favorable misspecified instances. Suppose that an $L_1$-optimal predictor
$f_\star\in\argmin_{f\in\cF}\epsilon_1(f)$ satisfies
$B_P(f_\star)=0$. Taking $f_0=f_\star$ gives the following result.

\begin{corollary}[Favorable misspecification]
\label{cor:sbr-favorable-misspecification}
Under the assumptions of Theorem~\ref{thm:sbr-variance-adaptive}, suppose
that $f_\star\in\argmin_{f\in\cF}\epsilon_1(f)$ satisfies
$B_P(f_\star)=0$. Then, for every $\delta\in(0,1)$, with probability at least
$1-\delta$, simultaneously for every $\rho>0$,
\begin{equation}
    \epsilon_1(\hf)
    \le
    \epsilon_{1,*}
    +(2+\rho)
    \sqrt{
        \frac{2v(f_\star)\log(2|\cF|^2/\delta)}{n}
    }
    +C_\rho\frac{\log(2|\cF|^2/\delta)}{n}.
    \label{eq:sbr-favorable-misspecification}
\end{equation}
\end{corollary}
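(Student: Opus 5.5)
This corollary is a direct specialization of Theorem~\ref{thm:sbr-variance-adaptive}, so the plan is to work on the same high-probability event and evaluate the infimum at one comparator. Fix $\delta\in(0,1)$ and let $E$ be the event of probability at least $1-\delta$ on which \eqref{eq:main-sbr-bound} holds simultaneously for all $\rho>0$. Since the right-hand side of \eqref{eq:main-sbr-bound} is an infimum over $f_0\in\cF$, it is bounded above by the bracketed expression at any fixed $f_0\in\cF$. We choose $f_0=f_\star$, which lies in $\cF$ because $\cF$ is finite and the $\argmin$ is attained.

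Substituting $f_0=f_\star$, we use $\epsilon_1(f_\star)=\epsilon_{1,*}$ by definition of $f_\star$. By the hypothesis $B_P(f_\star)=0$, the term $(2+\rho)B_P(f_\star)$ vanishes for every $\rho>0$. The remaining terms are exactly the variance term $(2+\rho)\sqrt{2v(f_\star)\log(2|\cF|^2/\delta)/n}$ and the lower-order term $C_\rho\log(2|\cF|^2/\delta)/n$. This gives \eqref{eq:sbr-favorable-misspecification} on $E$ for every $\rho>0$ simultaneously.

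The only subtle point is the uniformity in $\rho$. The event $E$ does not depend on $\rho$, and the choice $f_0=f_\star$ is independent of $\rho$. Hence the inequality holds for all $\rho$ on the single event $E$, with no union bound. There is no real obstacle; the statement is immediate once the theorem is granted.
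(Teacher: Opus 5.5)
Your proposal is correct and matches the paper's argument: the paper obtains Corollary~\ref{cor:sbr-favorable-misspecification} simply by taking $f_0=f_\star$ in Theorem~\ref{thm:sbr-variance-adaptive}, then using $\epsilon_1(f_\star)=\epsilon_{1,*}$ and $B_P(f_\star)=0$ on the theorem's high-probability event. Your remark that this event does not depend on $\rho$ is exactly what justifies the ``simultaneously for every $\rho>0$'' clause without a union bound.
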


Thus, even when $\epsilon_{1,*}>0$, SBR achieves approximation factor $1$ on such misspecified instances while retaining a variance-adaptive estimation term. We give a sufficient condition under which an $L_1$-optimal predictor satisfies $B_P(f_\star)=0$ under genuine misspecification in Appendix~\ref{app:favorable-misspecification-condition}.

\vspace{-1em}

\section{Optimality of the universal proper approximation factor}
\label{sec:optimality}
\vspace{-0.5em}

Corollary~\ref{cor:sbr-agnostic} shows that SBR achieves approximation
factor $3$ while retaining the variance-adaptive, comparator-dependent
estimation term inherited from Theorem~\ref{thm:sbr-variance-adaptive}.
To characterize whether this factor can be improved by a different proper
learner, it is useful to first record a coarser consequence in which the
estimation term is uniform over the underlying distribution.
This places the upper bound in the same distribution-free framework as the
lower bound developed below: for any target accuracy $\varepsilon>0$, the
sample size required to guarantee
$\epsilon_1(\hf)
    \le
    3\epsilon_{1,*}+\varepsilon$
depends only on $|\cF|$, $\varepsilon$, and the confidence level $\delta$, and not on the underlying distribution $P$. We then show that no proper learner can replace
the factor $3$ by any universal constant strictly smaller than $3$.

\begin{corollary}[Uniform achievability of approximation factor three]
\label{cor:sbr-uniform-factor-three}
Under the assumptions of
Theorem~\ref{thm:sbr-variance-adaptive}, there exists a universal constant
$C>0$ such that, for every $\delta\in(0,1)$, with probability at least
$1-\delta$,
\begin{equation}
    \epsilon_1(\hf)
    \le
    3\epsilon_{1,*}
    +
    C\sqrt{
            \frac{\log(2|\cF|^2/\delta)}{n}
        }.
    \label{eq:sbr-uniform-factor-three}
\end{equation}
\end{corollary}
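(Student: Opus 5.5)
We will derive Corollary~\ref{cor:sbr-uniform-factor-three} directly from Theorem~\ref{thm:sbr-variance-adaptive} by fixing a single value of the analysis parameter and bounding every distribution-dependent quantity by a distribution-free one. Write $L:=\log(2|\cF|^2/\delta)$ and take $f_0=f_\star\in\argmin_{f\in\cF}\epsilon_1(f)$, which exists because $\cF$ is finite. On the probability-$(1-\delta)$ event of Theorem~\ref{thm:sbr-variance-adaptive}, which holds simultaneously for all $\rho>0$, we may choose any $\rho$ we like.

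Substituting $f_0=f_\star$ into \eqref{eq:main-sbr-bound} and using Lemma~\ref{lem:population-sign-betting-upper-bound}, which gives $B_P(f_\star)\le\epsilon_{1,*}$, yields
\[
\epsilon_1(\hf)\le (3+\rho)\epsilon_{1,*}+(2+\rho)\sqrt{2v(f_\star)L/n}+C_\rho L/n .
\]
Two crude bounds then remove the distribution dependence. First, since $Y,f_\star\in[0,1]$, we have $v(f_\star)\le 1$. Second, $\epsilon_{1,*}\le 1$.

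The only real obstacle is the extra $\rho\,\epsilon_{1,*}$ term, because a fixed $\rho$ would give factor $3+\rho$ rather than $3$. We handle it by choosing $\rho$ depending on $n,\delta,|\cF|$. Consider two cases.

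\begin{itemize}
\item \textbf{Case $L/n<1$.} Take $\rho=\sqrt{L/n}\in(0,1]$. Then $\rho\,\epsilon_{1,*}\le\sqrt{L/n}$ and $(2+\rho)\sqrt{2L/n}\le 3\sqrt{2}\sqrt{L/n}$. For the last term, $C_\rho\le \tfrac{13}{3}+\sqrt{n/L}+\tfrac{23}{12}$, so
\[
C_\rho L/n\le \left(\tfrac{13}{3}+\tfrac{23}{12}\right)L/n+\sqrt{L/n}\le \left(\tfrac{13}{3}+\tfrac{23}{12}+1\right)\sqrt{L/n},
\]
where the second inequality uses $L/n\le\sqrt{L/n}$.
\item \textbf{Case $L/n\ge 1$.} The claim is trivial: $\epsilon_1(\hf)\le 1\le\sqrt{L/n}$, so any $C\ge1$ works.
\end{itemize}

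Collecting the terms from both cases gives \eqref{eq:sbr-uniform-factor-three} with an explicit universal constant, for instance $C=1+3\sqrt2+\tfrac{13}{3}+\tfrac{23}{12}+1$.

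The key point is the choice $\rho\asymp\sqrt{L/n}$. It balances the $\rho\,\epsilon_{1,*}$ penalty against the $\rho^{-1}L/n$ term in $C_\rho$, and both come out at order $\sqrt{L/n}$. This is also the only place where the simultaneity over $\rho$ in Theorem~\ref{thm:sbr-variance-adaptive} is used, even though $\rho$ here depends only on $n,\delta,|\cF|$ and so could have been fixed in advance anyway.
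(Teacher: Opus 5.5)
Your proposal is correct and follows the paper's proof essentially step for step. You use the same comparator $f_\star$, the same choice $\rho=\sqrt{\log(2|\cF|^2/\delta)/n}$ when this quantity is at most $1$, and the trivial bound $\epsilon_1(\hf)\le 1$ otherwise, arriving at the same constant $C=\tfrac{33}{4}+3\sqrt{2}$. Your remark that $\rho$ depends only on $n,\delta,|\cF|$, so the simultaneity over $\rho$ is not really needed, is also accurate.
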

We defer the proof to Appendix~\ref{app:uniform-factor-three}. Note that, unlike Corollary~\ref{cor:sbr-agnostic}, the estimation term in \eqref{eq:sbr-uniform-factor-three} depends only on $n$, $|\cF|$, and $\delta$, and is therefore distribution-independent. For fixed $\cF$ and $\delta$, it vanishes as $n\to\infty$ uniformly over all distributions $P$. 
Corollary~\ref{cor:sbr-uniform-factor-three} shows that approximation factor $3$ is achievable
by a proper learner with a distribution-independent estimation term. We
next show that this factor is sharp: no proper learning algorithm
can achieve a universal approximation factor strictly smaller than $3$,
even if the estimation term is allowed to vanish arbitrarily slowly with
the sample size.

\begin{theorem}[Sharp lower bound for proper learning]
\label{thm:proper-factor-lower-bound}
Fix any $C\in[1,3)$. There exist a measurable input space $\cX$, a fixed
two-element class
    $\cF_C\subseteq\{f:\cX\to[0,1]\}$,
independent of the sample size, and a constant $\varepsilon_C>0$ such that
the following holds. For every $n\ge1$ and every possibly randomized proper
learner $\widehat f_n\in\cF_C$ based on $n$ i.i.d.\ observations, there
exists a distribution $P$ on $\cX\times\{0,1\}$, with finitely supported
marginal $P_X$, such that
\begin{equation}
    \PP\left(
        \epsilon_1(\widehat f_n)
        >
        C\epsilon_{1,*}
        +
        \varepsilon_C
    \right)
    \ge
    \frac38.
    \label{eq:proper-factor-lower-bound}
\end{equation}
The probability in \eqref{eq:proper-factor-lower-bound} includes any
internal randomization of the learner.
\end{theorem}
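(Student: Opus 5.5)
The plan is a two-point (Le Cam–style) indistinguishability argument. I would build two distributions $P^{(1)}, P^{(2)}$ on $\cX\times\{0,1\}$ such that:
\begin{itemize}
\item[(i)] the law of the $n$-sample is exactly the same under both, outside an event of probability at most $1/4$;
\item[(ii)] under $P^{(j)}$ the member $f_j\in\cF_C=\{f_1,f_2\}$ is $L_1$-optimal, while the other member has error strictly larger than $C\epsilon_{1,*}+\varepsilon_C$.
\end{itemize}
Since a proper learner must output $f_1$ or $f_2$, and (on the good event) its output distribution does not depend on $j$, it outputs the bad function with probability at least $1/2$ under one of the two distributions. Removing the bad event gives $\tfrac12-\tfrac14\cdot\tfrac12\ge\tfrac38$; I would do the bookkeeping as $\tfrac12(1-\tfrac14)=\tfrac38$.

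\textbf{Mechanism for indistinguishability at every $n$ with an $n$-free gap.} A point-evaluated label $Y\in\{0,1\}$ at an atom $x$ reveals only $\eta(x)$ through a Bernoulli draw. If no atom is sampled twice, then each observation is a single Bernoulli, and the sample reveals only the mass-weighted average of $\eta$ over atoms of a given ``type''. It reveals nothing about the finer distribution of $\eta$ across atoms. The $L_1$ errors $\EE|f_j-\eta|$, however, are nonlinear in $\eta$ whenever $f_j$ takes interior values, so they depend on that finer distribution.

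I would take $\cX$ countably infinite and partitioned into a few types. Within each type, $f_1$ and $f_2$ are constants $(u_t,w_t)$, with infinitely many atoms per type, so that $\cF_C$ is fixed independently of $n$. Given $n$, the distribution $P^{(j)}$ puts type mass $\alpha_t$ uniformly on $K$ atoms of type $t$, with $K$ so large that $\binom n2/K\le 1/4$, i.e.\ no collision with probability at least $3/4$. On each type, $\eta$ is either the constant $m_t$, or is split across atoms as $0$ or $1$ with average $m_t$. Under both options, conditional on no collision, the sample is i.i.d.: a type drawn from $(\alpha_t)$, a fresh uniformly chosen atom, and a label $\sim\Bernoulli(m_t)$. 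Hence the sample laws coincide. To make the atoms labels exchangeable, I would also randomize which atoms are used via a uniform random subset, so that the learner cannot exploit atom identities.

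\textbf{The main obstacle: the gadget optimization.} Per type, the constant option contributes errors $(|u-m|,|w-m|)$ and the split option contributes $(m(1-u)+(1-m)u,\ m(1-w)+(1-m)w)$. Choosing masses, values and options symmetrically (swap the roles of $f_1,f_2$ between $P^{(1)}$ and $P^{(2)}$), I need the ratio of the suboptimal to the optimal error to exceed $C$ by a margin while the optimal error stays bounded away from zero. That margin gives $\varepsilon_C:=\alpha(\text{bad}-C\cdot\text{good})>0$.

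A first try with types $(1/2,0)$ and $(0,1/2)$, and $m=1/2$, gives ratio only $2$. So I would enlarge the family: general $(u,w,m)$ per type, plus possibly a third ``neutral'' type. Mirroring the triangle-inequality tightness in Proposition~\ref{prop:population-sign-betting-guarantee}, the bad function should sit at distance about $2\epsilon_{1,*}$ from the good one, on the far side of $\eta$ in $L_1$. I would then solve the resulting small linear program in the type masses and show that its value tends to $3$ as a parameter (for instance the split value $m\to0$ on a type where $u=m$, $w=2m$) degenerates. For each $C<3$ this fixes a non-degenerate choice with ratio strictly greater than $C$. Everything else in the proof (the collision bound, symmetrization over randomized learners, and the $3/8$ arithmetic) is routine.
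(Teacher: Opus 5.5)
There is a genuine gap. Your indistinguishability mechanism is the one the paper uses: two priors over conditional-mean vectors with matching per-atom means, a support large enough that no atom is drawn twice with high probability, and a Le Cam two-point bound. But the proposal stops exactly where the theorem's content lies. You never exhibit a class and gadget whose bad-to-good error ratio exceeds $C$ for $C$ near $3$, and the instance you point to fails.

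In your family, the two worlds' margins $e_{\mathrm{bad}}-C\,e_{\mathrm{good}}$ are additive over types. So success requires a single type $(u,w,m)$ on which the best option for world 1 plus the best option for world 2 is positive; a mirrored copy of that type then serves both worlds. Neutral types ($u=w$) contribute at most $0$ to either world. Now take your suggested type $(u,w)=(m,2m)$ with mean $m\le 1/2$:
\begin{itemize}
\item the constant option gives errors $(0,m)$;
\item the $0/1$ split gives errors $(2m(1-m),\,3m-4m^2)$.
\end{itemize}
Using the same option in both worlds gives a nonpositive multiple of $1-C$. The best combined margin is therefore $m\max\{3-3C+(4C-2)m,\;3-3C+(2C-4)m\}$, which is negative for every admissible $m$ once $C>2$. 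So the claim that the LP value tends to $3$ along this degeneration is false, and as written the lower bound is not established.

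The gap closes inside your own family, simply by shrinking the mean in your first try. Take two types of mass $1/2$, with $(f_1,f_2)=(m,0)$ on the first and $(0,m)$ on the second, and $\eta$ of mean $m$ on both. World 1 puts $\eta\equiv m$ on the first type and $0/1$ values on the second; world 2 does the reverse. Then $e_{\mathrm{good}}=m/2$ and $e_{\mathrm{bad}}=m(3-2m)/2$, a ratio of $3-2m$. Choosing $m=(3-C)/4$ gives an expected margin $(3-C)^2/16$, independent of $n$. The paper instead uses a single type: constants $a$ and $1-a$ with $a=(C+1)/(C+5)$, and priors putting $\eta$ on $\{a,1\}$ versus $\{1-a,0\}$ with common mean $1/2$, giving ratio $(C+3)/2$.

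Two smaller points also need repair.
\begin{itemize}
\item The sample laws coincide off collisions only for the mixtures over a random (e.g.\ i.i.d.\ per atom) assignment of the $0/1$ values, not for fixed $P^{(1)},P^{(2)}$. The realized errors are therefore random and need a concentration step, and at the end you must average over the prior to extract a single fixed $P$.
\item Your $3/8$ bookkeeping must then absorb both the collision probability and the bad-prior probability. The paper takes each to be at most $1/12$, so that $\tfrac{11}{24}-\tfrac{1}{12}=\tfrac38$.
\end{itemize}
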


Theorem~\ref{thm:proper-factor-lower-bound} is a lower bound on proper
learning itself, rather than on SBR or on any particular proper learning algorithm. It rules out any uniform high-probability guarantee of the form
\begin{equation}
    \epsilon_1(\widehat f_n)
    \le
    C\epsilon_{1,*}
    +
    r_n(\cF,\delta),
    \qquad C\in[0,3),
    \label{eq:sub-three-uniform-guarantee}
\end{equation}
where the estimation term $r_n(\cF,\delta)$ is independent of $P$ and
satisfies $r_n(\cF,\delta)\to0$ as $n\to\infty$
for every fixed $\cF$ and $\delta$. Notably, the impossibility holds regardless
of how slowly the estimation term $r_n(\cF,\delta)$ vanishes with $n$.
Together with
Corollary~\ref{cor:sbr-uniform-factor-three}, this shows that $3$ is the
smallest universal approximation factor achievable by proper learners under
the distribution-free, high-probability criterion considered here.
We defer the proof of
Theorem~\ref{thm:proper-factor-lower-bound}, together with the formal
derivation of the impossibility of
\eqref{eq:sub-three-uniform-guarantee}, to
Appendix~\ref{app:proper-factor-lower-bound} and
Appendix~\ref{app:sub-three-uniform-guarantees}, respectively.

\vspace{-.5em}
\section{Beyond finite classes}
\label{sec:sbr-infinite-classes}
\vspace{-0.5em}

The variance-adaptive guarantee of SBR extends beyond finite hypothesis
classes. The extension is governed by the
pseudo-dimension of the hypothesis class $\cF$ and the VC dimension of its induced
pairwise comparison class. Define
$\cC_{\cF}:=\{x\mapsto \one\{h(x)>f(x)\}:f,h\in\cF\}$,
and let
$d_{\rm pred}:=\operatorname{Pdim}(\cF)$ be the pseudo-dimension of $\cF$,
$d_{\rm cmp}:=\operatorname{VCdim}(\cC_{\cF})$ be the VC dimension of $\cC_{\cF}$.

\begin{theorem}[Variance-adaptive guarantee beyond finite classes]
\label{thm:sbr-infinite-classes}
Let $\cF$ be a class of measurable functions from $\cX$ to $[0,1]$,
suppose $d_{\rm pred},d_{\rm cmp}<\infty$, and suppose the minimum in
\eqref{eq:sign-betting-estimator} is attained. Write
$d:=1+d_{\rm pred}+d_{\rm cmp}$. For $\rho>0$, let $C_\rho$ be as in
Theorem~\ref{thm:sbr-variance-adaptive}. There exists a universal
constant $c>0$ such that, for every $\delta\in(0,1)$, letting
$\alpha_\delta:=c\,\frac{d\log(en)+\log(2/\delta)}{n}$, with probability
at least $1-\delta$, the following holds simultaneously for every
$\rho>0$:
\begin{equation}
    \epsilon_1(\hf)
    \le
    \inf_{f_0\in\cF}
    \left\{
        \epsilon_1(f_0)
        +(2+\rho)B_P(f_0)
        +(2+\rho)\sqrt{2v(f_0)\alpha_\delta}
    \right\}
    +C_\rho\alpha_\delta.
    \label{eq:sbr-infinite-bound}
\end{equation}
\end{theorem}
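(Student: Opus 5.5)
The proof follows the finite-class argument of Theorem~\ref{thm:sbr-variance-adaptive} step by step. Only the union bound over $|\cF|^2$ pairs is replaced by uniform (localized) Bernstein-type concentration over two function classes whose complexity is controlled by $d_{\rm pred}$ and $d_{\rm cmp}$. The deterministic backbone is the empirical pairwise identity $\widehat T(f,h)+\widehat T(h,f)=P_n|f-h|$. Combined with the empirical optimality of $\hf$, it gives, for every $f_0\in\cF$,
\[
P_n|\hf-f_0|\le \sup_{h}\widehat T(\hf,h)+\sup_h\widehat T(f_0,h)\le 2\sup_{h\in\cF}\widehat T(f_0,h).
\]
Since $\epsilon_1(\hf)\le\epsilon_1(f_0)+\normz{\hf-f_0}_1$, it remains to show two things, uniformly over $f_0$ and $h$:
\begin{enumerate}
\item[(i)] $\sup_h\widehat T(f_0,h)\le B_P(f_0)+\sqrt{2v(f_0)\alpha_\delta}+O(\alpha_\delta)$;
\item[(ii)] $\normz{\hf-f_0}_1\le (1+\rho/2)P_n|\hf-f_0|+O(\alpha_\delta/\rho)$, or a comparable multiplicative conversion.
\end{enumerate}
The constants $C_\rho$ should then come out exactly as in the finite case once the Bernstein deviation is written with $\log(2|\cF|^2/\delta)$ replaced by $n\alpha_\delta$.

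\textbf{Step (i).} Write $(Y-f_0)\sign(h-f_0)=(Y-f_0)\bigl(\one\{h>f_0\}-\one\{h<f_0\}\bigr)$, and note $\one\{h<f_0\}=\one\{f_0>h\}$. So the summands are products of $Y-f_0(X)$ with indicators from $\cC_\cF$. The class $\{(x,y)\mapsto (y-f_0(x))c(x): f_0\in\cF,\ c\in\cC_\cF\}$ is $[-1,1]$-valued. Its VC/pseudo-dimension is $O(d_{\rm pred}+d_{\rm cmp})$, up to a log factor that can be absorbed into the constant $c$ through the $d\log(en)$ term. This follows from standard composition bounds for products of a bounded function class with an indicator class; equivalently, one can use sign patterns of thresholds $\one\{y-f_0(x)>t\}$ intersected with $\cC_\cF$. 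The second moment of each summand is $\EE[(Y-f_0)^2\one\{h\ne f_0\}]\le v(f_0)$. A uniform Bernstein inequality with variance localization then gives, with probability $1-\delta/2$, simultaneously for all pairs,
\[
\widehat T(f_0,h)\le T_P(f_0,h)+\sqrt{2\,\Var\cdot\alpha_\delta}+O(\alpha_\delta).
\]
Candidate tools are Bousquet's version of Talagrand's inequality combined with a peeling argument over the variance, or the ratio-type relative-deviation bounds of Vapnik--Chervonenkis/Anthony--Shawe-Taylor type applied to the finitely many sign patterns on a symmetrized sample. Taking the supremum over $h$ yields (i).

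\textbf{Step (ii).} The class $\{|f-g|:f,g\in\cF\}$ is nonnegative and $[0,1]$-valued, and its pseudo-dimension is $O(d_{\rm pred}+d_{\rm cmp})$: the level set $\{|f-g|>t\}$ equals $\{f-g>t\}\cup\{g-f>t\}$, which is controlled through $\cC_\cF$ together with the threshold structure of $\cF$. Since the variance is at most the mean ($\Var|f-g|\le P|f-g|$), a uniform relative-deviation (multiplicative Chernoff/Bernstein) bound gives
\[
P|f-g|\le P_n|f-g|+\sqrt{2P|f-g|\alpha_\delta}+O(\alpha_\delta).
\]
Applying AM--GM with parameter $\rho$ turns this into the $(1+\rho)$-type multiplicative form. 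Combining with (i) and the displayed backbone inequality yields \eqref{eq:sbr-infinite-bound}, simultaneously for all $\rho$ because $\rho$ enters only through the deterministic AM--GM step.

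\textbf{Main obstacle.} The hardest step is obtaining variance-localized uniform concentration in (i) with the \emph{exact} leading constant $\sqrt{2v(f_0)\alpha_\delta}$ rather than a constant multiple. The variance of each summand differs across pairs, so a plain symmetrization bound loses the adaptivity. I would handle this by conditioning on a ghost sample: restricted to the $2n$ points, the class has at most $(2en/d)^{O(d)}$ distinct behaviors. Applying the finite-class Bernstein argument to this effective finite class, with $\log$ of the growth function replacing $\log|\cF|^2$, and then desymmetrizing gives the result. The leftover constant factors go into $c$ in $\alpha_\delta$, and the only cost is the $d\log(en)$ term.
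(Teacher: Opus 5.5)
Your architecture is the same as the paper's. The empirical pairwise identity plus empirical optimality gives $P_n|\hf-f_0|\le 2\sup_h\widehat T(f_0,h)$. You then need an upper-tail Bernstein bound for $Z_{f,h}$ and a lower-tail bound for $D_{f,h}=|f-h|$, uniformly over pairs, and Young's inequality with $\theta=\rho/(2+\rho)$ produces $C_\rho$ simultaneously for all $\rho$.

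\textbf{The gap is in the complexity input for Step (ii).} Your claim that $\cD=\{|f-g|:f,g\in\cF\}$ has pseudo-dimension $O(d_{\rm pred}+d_{\rm cmp})$ is false. For $t\neq 0$ the level sets $\{f-g>t\}$ are neither comparison sets (those are the case $t=0$) nor subgraph sets of $\cF$, and VC-subgraph classes are not closed under differences.

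Here is a counterexample. On $\cX=\{x_1,\dots,x_k\}$, let $\cF$ consist of all constants $\mu\in[0,1]$ together with a chain $f_\lambda(x_i)=a_i+b\,h_i(\lambda)$. Each $h_i$ is strictly increasing in $\lambda$ with values in $(0,1)$, and the intervals $[a_i,a_i+b]\subseteq[b,1]$ are pairwise disjoint.
\begin{itemize}
\item Both pieces are pointwise chains, so $d_{\rm pred}\le 3$.
\item Every comparison set is $\emptyset$, $\cX$, or an up- or down-set of the fixed order of the $a_i$, so $d_{\rm cmp}\le 3$.
\item Let $m=\min_i a_i$, take witnesses $t_i=a_i-m+b$, and take $\mu\in[m-b,m]$. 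Then $|f_\lambda(x_i)-\mu|>t_i$ is equivalent to $h_i(\lambda)>1+(\mu-m)/b$, a threshold ranging freely over $[0,1]$.
\item Strictly increasing $h_i$ can realize every ordering of $(h_i(\lambda))_{i\le k}$ at successive values of $\lambda$. Hence every subset is cut out, and $\operatorname{Pdim}(\cD)\ge k$.
\end{itemize}
So a relative-deviation bound phrased through $\operatorname{Pdim}(\cD)$ is unavailable under the theorem's hypotheses.

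\textbf{The repair is the paper's route: uniform $L_2(Q)$ covering numbers,} which, unlike combinatorial dimensions, are stable under Lipschitz combinations.
\begin{itemize}
\item Since $\bigl||f-h|-|f'-h'|\bigr|\le|f-f'|+|h-h'|$, one gets $N(u,\cD,L_2(Q_X))\le N(u/2,\cF,L_2(Q_X))^2$. This needs only $d_{\rm pred}$.
\item Writing $\sign(h-f)=c_{h,f}-c_{f,h}$, the pointwise bound $|f-f'|+|c_1-c_1'|+|c_2-c_2'|$ gives $N(u,\cZ,L_2(Q))\le N(u/6,\cF)\,N(u/6,\cC_{\cF})^2$, i.e.\ entropy $\lesssim d\log(A/u)$.
\item This product-of-covers bound also removes the $\log d$ you incur in Step (i) by bounding the VC dimension of Boolean combinations. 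That factor cannot be absorbed into a universal $c$.
\item Since $\cZ$ is real-valued, your ``effective finite class'' on the $2n$ pooled points has infinitely many distinct behaviors in general. It must be replaced by a $1/n$-cover, with the discretization error controlled.
\end{itemize}

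The paper then proves a uniform Bernstein inequality under a uniform-entropy hypothesis. It uses a ghost sample with random swaps, and places a penalty $8\lambda^2 g(W_i')^2$ only on the ghost coordinate. Jensen over $W'$ then yields the population second moment $Pg^2$ directly, with no separate desymmetrization of the variance. A $1/n$-cover of the pooled empirical measure and a grid over $\lambda$ complete the argument. The result is $33(\sqrt{Pg^2\beta}+\beta)$, after which $c$ is enlarged to recover $\sqrt{2PZ^2\alpha_\delta}+\frac43\alpha_\delta$ and $\sqrt{2PD\,\alpha_\delta}+\frac23\alpha_\delta$. So the ``exact leading constant'' you flag as the main obstacle is not one. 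Your Bousquet-plus-peeling alternative would also work, but only when fed these entropy bounds rather than a pseudo-dimension bound on $\cD$.
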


Thus, the same approximation terms and comparator-dependent variance adaptation of Theorem~\ref{thm:sbr-variance-adaptive} continue to hold for infinite classes satisfying these complexity conditions, with the finite-class log-cardinality dependence replaced by a complexity term $d$, which captures both prediction and comparison complexity of the hypothesis class $\cF$. Appendix~\ref{app:sbr-infinite-classes} provides the proof and verifies the required complexity bounds for bounded linear, logistic, and sparse prediction classes as examples.

\vspace{-.5em}

\section{Conclusion}

\vspace{-.5em}

We studied conditional-mean regression under model misspecification with population $L_1$ error as the target metric. We showed that several standard regression objectives do not, in general, provide a constant-factor $L_1$ oracle guarantee under misspecification. In contrast, the min--max pairwise estimator of \citet{baraud2025estimating}, which we call \emph{Sign-Betting Regression}, admits high-probability $L_1$ oracle inequalities with comparator-dependent estimation terms. Under realizability, these bounds recover the variance-adaptive rates established by \citet{pmlr-v336-li26a}. We further showed that the resulting approximation factor $3$ is optimal for proper learning: no proper learner can achieve any universal approximation factor strictly below $3$ in a distribution-free high-probability $L_1$ oracle inequality with a vanishing estimation term. We also extended the upper bound beyond finite classes using prediction and pairwise-comparison complexity measures. Experiments in Appendix~\ref{app:exp} illustrate the behavior of Sign-Betting Regression in synthetic and real-data model-selection settings. Together, these results provide a sharp statistical characterization of proper $L_1$ conditional-mean learning under misspecification.

\clearpage
\bibliography{library-overleaf}
\bibliographystyle{iclr2027_conference}


\clearpage
\appendix
\addcontentsline{toc}{section}{Appendix} 
\part{Appendix}

\parttoc
\section{Downstream implications of $L_1$ conditional-mean accuracy}
\label{app:l1-downstream}

For completeness, we record several elementary consequences of controlling
the $L_1$ error
\[
    \epsilon_1(f)
    :=
    \EE\!\left[|f(X)-\eta(X)|\right],
    \qquad
    \eta(x):=\EE[Y\mid X=x].
\]
These observations motivate our choice of performance criterion.

\paragraph{Binary classification.}
Suppose $Y\in\{0,1\}$, so that
$\eta(x)=\PP(Y=1\mid X=x)$. Let
\[
    g_f(x):=\mathbf{1}\{f(x)\ge 1/2\},
    \qquad
    g^\star(x):=\mathbf{1}\{\eta(x)\ge 1/2\}.
\]
Then
\begin{equation}
    R_{\mathrm{cls}}(g_f)-R_{\mathrm{cls}}(g^\star)
    \le
    2\epsilon_1(f),
    \label{eq:l1-plugin-classification}
\end{equation}
where $R_{\mathrm{cls}}(g):=\PP(g(X)\neq Y)$.
Indeed, the standard excess-risk identity gives
\[
    R_{\mathrm{cls}}(g_f)-R_{\mathrm{cls}}(g^\star)
    =
    \EE\!\left[
        |2\eta(X)-1|
        \mathbf{1}\{g_f(X)\neq g^\star(X)\}
    \right].
\]
Whenever $g_f(X)\neq g^\star(X)$, the values $f(X)$ and $\eta(X)$
lie on opposite sides of $1/2$, and hence
\[
    |\eta(X)-1/2|
    \le
    |\eta(X)-f(X)|.
\]
Therefore,
\[
    |2\eta(X)-1|
    \mathbf{1}\{g_f(X)\neq g^\star(X)\}
    \le
    2|\eta(X)-f(X)|,
\]
which proves \eqref{eq:l1-plugin-classification}; see also
\citet{audibert2007fast}.

\paragraph{Calibration and subpopulation prediction errors.}
Consider the population $L_1$ calibration error
\[
    \mathrm{CE}_1(f)
    :=
    \EE\!\left[
        \left|
            \EE[Y\mid f(X)]-f(X)
        \right|
    \right].
\]
Since
\[
    \EE[Y\mid f(X)]
    =
    \EE[\eta(X)\mid f(X)],
\]
conditional Jensen's inequality yields
\begin{align}
    \mathrm{CE}_1(f)
    &=
    \EE\!\left[
        \left|
            \EE[\eta(X)-f(X)\mid f(X)]
        \right|
    \right]
    \nonumber\\
    &\le
    \EE|\eta(X)-f(X)|
    =
    \epsilon_1(f).
    \label{eq:l1-calibration}
\end{align}

More generally, for every measurable subpopulation
$G\subseteq\mathcal X$,
\begin{align}
    \left|
        \EE\!\left[
            (Y-f(X))\mathbf{1}\{X\in G\}
        \right]
    \right|
    &=
    \left|
        \EE\!\left[
            (\eta(X)-f(X))\mathbf{1}\{X\in G\}
        \right]
    \right|
    \nonumber\\
    &\le
    \EE\!\left[
        |\eta(X)-f(X)|\mathbf{1}\{X\in G\}
    \right]
    \nonumber\\
    &\le
    \epsilon_1(f).
    \label{eq:l1-subpopulation}
\end{align}
Equivalently, whenever $\PP(X\in G)>0$,
\[
    \left|
        \EE[Y-f(X)\mid X\in G]
    \right|
    \le
    \frac{\epsilon_1(f)}{\PP(X\in G)}.
\]
Thus, small $L_1$ conditional-mean error controls both global calibration
error and aggregate residual errors over arbitrary subpopulations. These
properties are closely related to the objectives studied in multiaccuracy
and multicalibration
\citep{hebert2018multicalibration,kim2019multiaccuracy}.

\paragraph{Best-of-$N$ selection.}
We next consider a setting in which $f$ serves as a reward model for
selecting among a collection of candidate responses. Let
$X_1,\ldots,X_N$ be candidate responses, and assume that each $X_i$ has
marginal distribution $P_X$. Independence among the candidates is not
required. Define
\[
    i^\star\in\argmax_{i\in[N]}\eta(X_i),
    \qquad
    \widehat i\in\argmax_{i\in[N]}f(X_i),
\]
with arbitrary tie-breaking. Then
\begin{equation}
    \EE\!\left[
        \eta(X_{i^\star})-\eta(X_{\widehat i})
    \right]
    \le
    N\epsilon_1(f).
    \label{eq:l1-best-of-n}
\end{equation}

To see this, by the definition of $\widehat i$,
\[
    f(X_{i^\star})-f(X_{\widehat i})\le 0.
\]
Hence, samplewise,
\begin{align*}
    \eta(X_{i^\star})-\eta(X_{\widehat i})
    &=
    \eta(X_{i^\star})-f(X_{i^\star})
    +
    f(X_{i^\star})-f(X_{\widehat i})
    +
    f(X_{\widehat i})-\eta(X_{\widehat i})
    \\
    &\le
    |\eta(X_{i^\star})-f(X_{i^\star})|
    +
    |\eta(X_{\widehat i})-f(X_{\widehat i})|
    \\
    &\le
    \sum_{i=1}^N
    |\eta(X_i)-f(X_i)|.
\end{align*}
Taking expectations and using the common marginal distribution of the
$X_i$'s gives
\[
    \EE\!\left[
        \eta(X_{i^\star})-\eta(X_{\widehat i})
    \right]
    \le
    \sum_{i=1}^N
    \EE|\eta(X_i)-f(X_i)|
    =
    N\epsilon_1(f).
\]
Thus, $L_1$ accuracy of a reward model yields a direct guarantee for
Best-of-$N$ selection. The bound requires only control of the marginal
prediction error of each candidate and does not require independence among
the candidates. This provides one connection between the regression
criterion studied here and recent analyses of inference-time selection with
imperfect reward models \citep{huang2025best}.

\section{Related work}
\label{app:related-work}

\paragraph{Closest prior work: fixed-design sign-based regression.}
The closest work to ours is \citet{baraud2025estimating}, who introduce the
same minimax signed-residual estimator, up to normalization and sign
conventions, in a fixed-design regression setting. Their design points are
deterministic, while the errors are independent but may be non-identically
distributed, heteroscedastic, and heavy-tailed; only a finite moment of order
$p\in[1,2]$ is required. They establish nonasymptotic oracle inequalities for
the expected $L_1$ error evaluated on the observed design points. Their bounds
already accommodate model misspecification, incur an approximation factor
three, and adapt to averaged noise moments and to comparator-dependent
complexity of the associated comparison sets. 

Our statistical target and questions are different. We consider i.i.d.\
random design and study the population error
$\|f-\eta\|_{L_1(P_X)}$ rather than error restricted to the observed design
points. We establish distribution-free, high-probability population oracle
inequalities whose second-order terms retain both directional approximation
geometry and disagreement-localized residual moments. In addition, we study
the information-theoretic approximation factor achievable by arbitrary
proper learners and prove that no proper learner can universally improve on
factor three under our high-probability learning criterion.
Appendix~\ref{app:comparison-baraud-maillard} gives a more detailed
comparison. In particular, conditioning on the random covariates and applying
the fixed-design result as a black box, followed by ordinary uniform
convergence, yields a population bound with an additional
variance-independent $n^{-1/2}$ generalization term. Our direct random-design
analysis avoids this crude global penalty through a localized treatment of
the empirical-to-population transition. 

\paragraph{Conditional-mean estimation and regression under misspecification.}
Our target,
\[
    \epsilon_1(f)
    =
    \EE|f(X)-\eta(X)|,
    \qquad
    \eta(x)=\EE[Y\mid X=x],
\]
should be distinguished from minimizing an observable loss such as
$\EE|Y-f(X)|$. The distinction between a statistical functional and the loss
used to elicit it is classical: squared loss elicits the conditional mean,
whereas absolute loss elicits a conditional median
\citep{gneiting2011making}. More generally, losses that correctly identify
the conditional mean under unrestricted optimization can induce a different
approximation geometry once optimization is restricted to a misspecified
class. For example, squared-loss minimization projects $\eta$ onto the model
class in squared $L_2$ distance rather than in $L_1$ distance.

Finite-sample $L_1$ estimation of binary class probabilities through
empirical risk minimization with probability-estimation losses was studied by
\citet{mey2021consistency}, including a discussion of model
misspecification. Their perspective starts from a surrogate loss and studies
the probability estimate induced by minimizing that loss. Under
misspecification, the resulting population optimizer is therefore determined
by the geometry of the chosen surrogate. Our benchmark instead fixes the
target metric first and asks to compete directly with
\[
    \epsilon_{1,*}
    =
    \inf_{f\in\cF}\|\eta-f\|_{L_1(P_X)}.
\]
This distinction motivates our negative results for standard regression
objectives: a loss may be well suited for estimating the conditional mean
under realizability while failing to provide any universal constant-factor
approximation to the best $L_1$ predictor in a misspecified class.

\paragraph{First- and second-order regression guarantees.}
A related line of work studies statistical guarantees that improve when the
underlying prediction or decision problem is favorable. In cost-sensitive
classification and contextual bandits, \citet{foster21efficient} show that
log-loss regression and triangular discrimination can yield first-order
guarantees whose leading term adapts to the performance of the optimal
decision rule. For bounded regression, \citet{pmlr-v336-li26a} introduce a
betting loss and establish variance-adaptive, or second-order, guarantees,
improving on first-order dependence by adapting directly to label variance
without requiring that variance to be known in advance. The present work asks
how such favorable second-order behavior interacts with \emph{model
misspecification}. In particular, the relevant second-order quantity under
misspecification is no longer determined by label variance alone: our bound
also reflects the approximation residual and the regions on which competing
predictors disagree with a comparator. At the same time, we require linear
control of the best $L_1$ approximation error. As shown in
Section~\ref{sec:standard-losses}, these two goals need not be inherited by
regression objectives that have favorable guarantees in realizable settings.

Although we use the term \emph{sign betting}, SBR should also be
distinguished from the betting loss of \citet{pmlr-v336-li26a}. The two
objectives are different. SBR compares a candidate $f$ against competitors
$h$ through signed residual statistics of the form
\[
    (Y-f(X))\sign(h(X)-f(X)),
\]
and minimizes the worst such comparison. The resulting population identity
directly exposes the $L_1$ distance between competing predictors. This
pairwise comparison structure is what permits the misspecified oracle
analysis developed here.

\paragraph{Minimum-distance estimation, pairwise testing, and the factor three.}
The pairwise-comparison structure of SBR has close antecedents in
minimum-distance and testing-based estimation. In density estimation,
\citet{yatracos1985rates} developed minimum-distance procedures based on
sets that distinguish pairs of candidate densities, leading to
constant-factor $L_1$ or total-variation oracle guarantees. Related
testing-based ideas underlie more recent robust $L_1$ density estimators,
including \citet{baraud2025robust}, and form part of the methodological
lineage of the regression estimator of \citet{baraud2025estimating}. These
methods share the principle that, rather than minimizing the target loss
directly, one constructs pairwise tests or signed statistics whose
expectations reveal which candidate is closer to the target.

The appearance of approximation factor three also has a particularly close
parallel in density hypothesis selection. \citet{bousquet2019optimal} show
that, when an estimator is required to output one of the candidate densities,
factor three is the optimal universal approximation factor in total
variation, even for a class containing only two candidates; if improper
outputs are allowed, the optimal factor improves to two. There is a useful
metric correspondence between their problem and ours. For a fixed covariate
marginal $P_X$, associate a predictor $f$ with the joint distribution
\[
    Q_f(dx,dy)
    :=
    P_X(dx)\,\mathrm{Bernoulli}(f(x))(dy).
\]
Then
\[
    \operatorname{TV}(Q_f,Q_\eta)
    =
    \EE_{P_X}|f(X)-\eta(X)|
    =
    \epsilon_1(f).
\]
Thus the same factor-three geometry appears naturally in the two problems.

The statistical models are nevertheless different. Density hypothesis
selection assumes access to fully specified candidate distributions, whereas
our candidates specify conditional-mean functions and the covariate marginal
$P_X$ is itself unknown and sampled as part of the regression problem.
Moreover, our lower bound concerns a uniform, high-probability regression
oracle inequality with a distribution-independent vanishing remainder.
Accordingly, our proper-learning lower bound is proved directly rather than
by invoking the density-estimation result. The connection to
\citet{bousquet2019optimal} should therefore be viewed as identifying a
common proper-versus-improper approximation phenomenon rather than as an
equivalence of the two learning problems.

\paragraph{Calibration, subgroup accuracy, and downstream decision making.}
Finally, the $L_1$ conditional-mean criterion has connections to several
other prediction objectives. Small $\epsilon_1(f)$ controls aggregate
residuals over every measurable subpopulation, relating it to multiaccuracy
\citep{kim2019multiaccuracy} and multicalibration
\citep{hebert2018multicalibration}. For binary responses it also controls
the excess error of the plug-in classifier, while when $f$ serves as a reward
model it controls the regret of selecting the largest predicted reward among
a finite collection of candidates. The latter connects to recent work on the
statistical behavior of Best-of-$N$ and other inference-time selection
procedures under imperfect reward models \citep{huang2025best}. These
connections motivate the choice of $L_1(P_X)$ error as our primary
statistical target; their elementary derivations are collected in
Appendix~\ref{app:l1-downstream}.

\section{Proofs and Additional Results for Squared-Loss ERM}
\label{app:squared-loss}

\subsection{Proof of Proposition~\ref{prop:squared-loss-upper}}
\label{app:squared-loss-upper}
We note that a generic uniform-convergence analysis of squared loss followed by the conversion $\epsilon_1(f)\le \epsilon_2(f)$ yields only an $O(n^{-1/4})$ statistical term. To show $O(n^{-1/2})$ rate in Proposition~\ref{prop:squared-loss-upper}, our proof proceeds by comparing empirical squared risks pairwise and exploiting the fact that the variance of a squared-loss difference is controlled by the $L_2$ distance between the corresponding predictors.

\begin{proof}

For $f,g\in\cF$, define the pairwise loss difference $\Delta_{f,g}(X,Y):=(Y-f(X))^2-(Y-g(X))^2$. Since $Y,f(X),g(X)\in[0,1]$, we have $|\Delta_{f,g}|\le 1$, and
\[
    \Delta_{f,g}^2
    =
    (f-g)^2(f+g-2Y)^2
    \le
    4(f-g)^2.
\]
Hence $\Var(\Delta_{f,g})\le 4\normz{f-g}_2^2$. A one-sided Bernstein inequality and a union bound over all ordered pairs $(f,g)\in\cF^2$ imply that, with probability at least $1-\delta$, simultaneously for every $f,g\in\cF$,
\begin{equation}
    (P-P_n)\Delta_{f,g}
    \le
    2\sqrt{2}\,\normz{f-g}_2
    \sqrt{\frac{\log(2|\cF|^2/\delta)}{n}}
    +
    \frac{4}{3}\frac{\log(2|\cF|^2/\delta)}{n}.
    \label{eq:squared-loss-pairwise-concentration}
\end{equation}

Let $f_{\rm sq}^*\in\argmin_{f\in\cF}\normz{f-\eta}_2$, so that $\normz{f_{\rm sq}^*-\eta}_2=\epsilon_{2,*}$. By the conditional-mean decomposition in \eqref{eq:squared-loss-population-decomposition}, for every $f,g\in\cF$,
\[
    P\Delta_{f,g}
    =
    \normz{f-\eta}_2^2-\normz{g-\eta}_2^2.
\]
Moreover, empirical optimality of $\widehat f_{\rm sq}$ gives $P_n\Delta_{\widehat f_{\rm sq},f_{\rm sq}^*}\le0$. Therefore, on the event \eqref{eq:squared-loss-pairwise-concentration}, writing $r:=\normz{\widehat f_{\rm sq}-\eta}_2$ and $e:=\epsilon_{2,*}$, we obtain
\[
    r^2-e^2
    \le
    2\sqrt{2}(r+e)
    \sqrt{\frac{\log(2|\cF|^2/\delta)}{n}}
    +
    \frac{4}{3}\frac{\log(2|\cF|^2/\delta)}{n},
\]
where we used $\normz{\widehat f_{\rm sq}-f_{\rm sq}^*}_2\le r+e$.

If $r\le e$, the desired conclusion is immediate. Otherwise, let $q:=r-e>0$. Since $r^2-e^2=q(r+e)$ and $r+e\ge q$, the preceding inequality yields
\[
    q^2
    \le
    2\sqrt{2}\,q
    \sqrt{\frac{\log(2|\cF|^2/\delta)}{n}}
    +
    \frac{4}{3}\frac{\log(2|\cF|^2/\delta)}{n}.
\]
Solving this quadratic inequality gives
\[
    q
    \le
    \left(\sqrt{2}+\sqrt{\frac{10}{3}}\right)
    \sqrt{\frac{\log(2|\cF|^2/\delta)}{n}}.
\]
Consequently,
\[
    \normz{\widehat f_{\rm sq}-\eta}_2
    \le
    \epsilon_{2,*}
    +
    \left(\sqrt{2}+\sqrt{\frac{10}{3}}\right)
    \sqrt{\frac{\log(2|\cF|^2/\delta)}{n}}.
\]
Since $\epsilon_1(\widehat f_{\rm sq})\le\normz{\widehat f_{\rm sq}-\eta}_2$, this proves the first inequality in \eqref{eq:squared-loss-l1-upper}.

For the second inequality, let $f_1^*\in\argmin_{f\in\cF}\epsilon_1(f)$. Because $f_1^*$ and $\eta$ are $[0,1]$-valued, $|f_1^*-\eta|\le1$, and hence $\normz{f_1^*-\eta}_2^2\le\normz{f_1^*-\eta}_1=\epsilon_{1,*}$. Therefore $\epsilon_{2,*}\le\sqrt{\epsilon_{1,*}}$, which completes the proof.

\end{proof}

\subsection{A population lower bound for squared loss}
\label{app:squared-loss-approximation-lower}

The following generalizes Proposition~\ref{prop:squared-loss-no-constant-factor} and implies that the $\sqrt{\epsilon_{1,*}}$ dependence in Eqn. \eqref{eq:squared-loss-l1-upper} is unavoidable.

\begin{proposition}[Sharpness of the $\sqrt{\epsilon_{1,*}}$ dependence]
\label{prop:squared-loss-sharp-approximation-lower}
For every $\gamma\in(0,1)$ and every $p\in(0,(1-\gamma)^2)$, there exists a noiseless regression problem and a two-element class $\cF\subseteq[0,1]^{\cX}$ such that $\epsilon_{1,*}=p$, while the unique population squared-loss minimizer $f_{\rm sq}^*$ satisfies
\begin{equation}
    \epsilon_1(f_{\rm sq}^*)
    =
    (1-\gamma)\sqrt{\epsilon_{1,*}}.
    \label{eq:squared-loss-sharp-approximation-lower}
\end{equation}
Moreover, squared-loss ERM selects $f_{\rm sq}^*$ eventually almost surely as $n\to\infty$.
\end{proposition}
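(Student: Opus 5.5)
Fix $\gamma\in(0,1)$ and $p\in(0,(1-\gamma)^2)$. I will build a finite input space with a noiseless target and two constant-type predictors. One predictor, $f_1$, makes a small error on a large region. The other, $f_2$, is exact everywhere except for a large error on a small region. Squared loss penalizes $f_2$'s concentrated error quadratically and so prefers $f_1$, while in $L_1$ the predictor $f_2$ is better.

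The construction is as follows. Let $\cX=\{a,b\}$ with $P_X(a)=q$ and $P_X(b)=1-q$. Let $Y=\eta(X)$ deterministically, with $\eta\equiv 0$. Set $f_1\equiv t$ for a constant $t\in(0,1]$, so that
\[
    \epsilon_1(f_1)=t,\qquad \epsilon_2(f_1)^2=t^2 .
\]
Set $f_2(a)=1$ and $f_2(b)=0$, so that
\[
    \epsilon_1(f_2)=q,\qquad \epsilon_2(f_2)^2=q .
\]
Take $q=p$ and $t=(1-\gamma)\sqrt p$. Since $p<(1-\gamma)^2$, we have $t>p$ and $t\le 1$, so $\epsilon_{1,*}=\min(t,q)=p$. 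Also $t^2=(1-\gamma)^2p<p=q$. By \eqref{eq:squared-loss-population-decomposition} with $\bar\sigma^2=0$, the population squared risks are $t^2<q$, so $f_1$ is the unique population minimizer. Its $L_1$ error is $t=(1-\gamma)\sqrt{\epsilon_{1,*}}$, as claimed. Choosing $\gamma=1/2$ and $p<1/4$ recovers Proposition~\ref{prop:squared-loss-no-constant-factor}.

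For the almost-sure selection claim, the gap between the two population risks is $\Delta:=q-t^2>0$. The empirical risks are averages of bounded i.i.d.\ variables: the empirical risk of $f_1$ is exactly $t^2$, and the empirical risk of $f_2$ is the fraction of samples at $a$. By the strong law of large numbers, each empirical risk converges almost surely to its population counterpart. Hence, almost surely, for all large $n$ the empirical risk of $f_1$ is strictly below that of $f_2$, and ERM returns $f_{\rm sq}^*$. If an explicit rate is wanted, Hoeffding's inequality combined with Borel--Cantelli gives the same conclusion.

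I do not expect a genuine obstacle. The only care needed is to keep the parameter constraints consistent: $t\le1$, $t>p$ so that $f_2$ is the $L_1$-optimal predictor, and $t^2<q$ so that $f_1$ is the unique squared-loss minimizer. All three follow from $p<(1-\gamma)^2<1$.
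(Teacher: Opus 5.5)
Your proposal is correct and follows essentially the same route as the paper's proof. Both use a two-point noiseless instance with $\eta\equiv 0$, a spike predictor equal to $1$ on a mass-$p$ point (so its $L_1$ and squared errors are both $p$), and a flat predictor at level $(1-\gamma)\sqrt p$ whose squared risk is $(1-\gamma)^2p<p$; the strong law of large numbers then gives the eventual almost-sure ERM selection, exactly as in the paper.
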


\begin{proof}
Let $\cX=\{a,b\}$ with $\PP(X=b)=p$ and $\PP(X=a)=1-p$, and let $Y=0$ almost surely. Thus $\eta\equiv0$. Consider the two predictors $f_{\rm spike},f_{\rm flat}\in[0,1]^{\cX}$ defined by $f_{\rm spike}(a)=0$, $f_{\rm spike}(b)=1$, and $f_{\rm flat}(x)=(1-\gamma)\sqrt p$ for both $x\in\cX$. The latter is $[0,1]$-valued because $p<1$ and $\gamma\in(0,1)$.

Their $L_1$ approximation errors are
\[
    \epsilon_1(f_{\rm spike})=p,
    \qquad
    \epsilon_1(f_{\rm flat})=(1-\gamma)\sqrt p.
\]
Since $p<(1-\gamma)^2$, we have $p<(1-\gamma)\sqrt p$. Hence $f_{\rm spike}$ is the unique $L_1$-optimal element of $\cF:=\{f_{\rm spike},f_{\rm flat}\}$ and $\epsilon_{1,*}=p$.

On the other hand, because the regression problem is noiseless with $\eta\equiv0$, population squared risk equals squared $L_2$ error. For the two predictors,
\[
    \EE\bigl[(Y-f_{\rm spike}(X))^2\bigr]=p,
    \qquad
    \EE\bigl[(Y-f_{\rm flat}(X))^2\bigr]=(1-\gamma)^2p<p.
\]
Therefore $f_{\rm flat}$ is the unique population squared-loss minimizer, so $f_{\rm sq}^*=f_{\rm flat}$. Consequently,
\[
    \epsilon_1(f_{\rm sq}^*)
    =
    (1-\gamma)\sqrt p
    =
    (1-\gamma)\sqrt{\epsilon_{1,*}},
\]
which proves \eqref{eq:squared-loss-sharp-approximation-lower}.

It remains to verify the statement for empirical squared-loss ERM. The empirical squared loss of $f_{\rm flat}$ is deterministic and equal to $(1-\gamma)^2p$. For $f_{\rm spike}$, it is
\[
    \frac1n\sum_{i=1}^n \ind\{X_i=b\},
\]
which converges almost surely to $p$ by the strong law of large numbers. Since $(1-\gamma)^2p<p$, squared-loss ERM therefore uniquely selects $f_{\rm flat}=f_{\rm sq}^*$ for all sufficiently large $n$, almost surely.
\end{proof}

In particular, taking $\gamma=1/2$ recovers Proposition~\ref{prop:squared-loss-no-constant-factor}. More generally, because $\gamma$ can be arbitrarily small, Proposition~\ref{prop:squared-loss-sharp-approximation-lower} shows that the coefficient in front of $\sqrt{\epsilon_{1,*}}$ can be made arbitrarily close to one. It also implies the absence of any finite universal approximation factor: for any fixed $C<\infty$, choosing, for example, $\gamma=1/2$ and then $p>0$ sufficiently small gives $\epsilon_1(f_{\rm sq}^*)>C\epsilon_{1,*}$, and empirical squared-loss ERM converges almost surely to this population minimizer.

\section{Proofs and Additional Results for Log-Loss ERM}
\label{app:log-loss}

\subsection{Proof of Equation (\ref{eq:log-loss-population-kl}) }
\label{app:log-loss-population-kl}
By conditioning on $X$ and using $\EE[Y\mid X]=\eta(X)$,
\begin{align*}
    \EE\!\left[
        \ell_{\log}(Y,f(X))
        \,\middle|\,X
    \right]
    &=
    -\eta(X)\log f(X)
    -(1-\eta(X))\log(1-f(X)),\\
    \EE\!\left[
        \ell_{\log}(Y,\eta(X))
        \,\middle|\,X
    \right]
    &=
    -\eta(X)\log\eta(X)
    -(1-\eta(X))\log(1-\eta(X)).
\end{align*}
Therefore,
\begin{align*}
    &\EE\!\left[
        \ell_{\log}(Y,f(X))
        -
        \ell_{\log}(Y,\eta(X))
        \,\middle|\,X
    \right]\\
    &\qquad=
    \eta(X)\log\frac{\eta(X)}{f(X)}
    +(1-\eta(X))
    \log\frac{1-\eta(X)}{1-f(X)}\\
    &\qquad=
    \kl\bigl(\eta(X),f(X)\bigr).
\end{align*}
Taking expectation over $X$ proves \eqref{eq:log-loss-population-kl}.

The same calculation remains valid when $\eta(X)\in\{0,1\}$, with the
standard convention $0\log 0=0$.

\subsection{Proof of Proposition~\ref{prop:log-loss-upper}}
\label{app:log-loss-upper}

\begin{proof}
Let $M:=|\cF|$ and define
\[
    \alpha
    :=
    \frac{\log(2M^2/\delta)}{n}.
\]
For $f\in\cF$, write
\[
    K(f)
    :=
    \EE\!\left[
        \kl\bigl(\eta(X),f(X)\bigr)
    \right],
\]
and define the squared Hellinger-type distance
\[
    H^2(f,\eta)
    :=
    \EE\!\left[
        \bigl(\sqrt{f(X)}-\sqrt{\eta(X)}\bigr)^2
        +
        \bigl(\sqrt{1-f(X)}-\sqrt{1-\eta(X)}\bigr)^2
    \right].
\]
The elementary relation between total variation and Hellinger distance for Bernoulli distributions implies
\begin{equation}
    \epsilon_1(f)
    =
    \EE|f(X)-\eta(X)|
    \le
    H(f,\eta).
    \label{eq:log-proof-l1-hellinger}
\end{equation}

We first relate the Hellinger distance of the empirical minimizer to the empirical excess log loss. For $f\in\cF$, let
\[
    \Delta_f(X,Y)
    :=
    \ell_{\log}(Y,f(X))
    -
    \ell_{\log}(Y,\eta(X)).
\]
Fix $X=x$, and abbreviate $p=\eta(x)$ and $q=f(x)$. When $p\in(0,1)$,
\[
    \exp\left(-\frac12\Delta_f(x,y)\right)
    =
    \left(\frac{q}{p}\right)^{y/2}
    \left(\frac{1-q}{1-p}\right)^{(1-y)/2}.
\]
The right-hand side is a convex function of $y\in[0,1]$. Since
$\EE[Y\mid X=x]=p$, convexity and the chord joining its values at
$0$ and $1$ give
\begin{align*}
    \EE\!\left[
        e^{-\Delta_f/2}
        \,\middle|\, X=x
    \right]
    &\le
    p\sqrt{\frac{q}{p}}
    +(1-p)\sqrt{\frac{1-q}{1-p}} \\
    &=
    \sqrt{pq}
    +
    \sqrt{(1-p)(1-q)}.
\end{align*}
The same inequality holds directly when $p\in\{0,1\}$, since then
$Y=p$ almost surely conditional on $X=x$. Therefore,
\[
    \EE e^{-\Delta_f/2}
    \le
    1-\frac12 H^2(f,\eta)
    \le
    \exp\left(-\frac12H^2(f,\eta)\right).
\]
By independence,
\[
    \EE\exp\left\{
        \frac{n}{2}H^2(f,\eta)
        -\frac12\sum_{i=1}^n\Delta_f(X_i,Y_i)
    \right\}
    \le 1.
\]
Hence, by Markov's inequality,
\[
    \PP\left(
        H^2(f,\eta)
        >
        P_n\Delta_f+2\alpha
    \right)
    \le
    e^{-n\alpha}.
\]
A union bound over $f\in\cF$ shows that, with probability at least
$1-\delta/2$,
\begin{equation}
    H^2(f,\eta)
    \le
    P_n\Delta_f+2\alpha
    \qquad
    \text{for every }f\in\cF.
    \label{eq:log-proof-hellinger-empirical}
\end{equation}
Indeed,
$M e^{-n\alpha}=\delta/(2M)\le\delta/2$.

On this event, empirical optimality of $\widehat f_{\log}$ implies that,
for every $f_0\in\cF$,
\begin{equation}
    H^2(\widehat f_{\log},\eta)
    \le
    P_n\Delta_{\widehat f_{\log}}+2\alpha
    \le
    P_n\Delta_{f_0}+2\alpha.
    \label{eq:log-proof-erm-comparison}
\end{equation}

It remains to control the empirical excess loss of the comparator.
Fix $f_0\in\cF$ and write
\[
    K_0:=K(f_0),
    \qquad
    B:=\log\frac1\gamma.
\]
Since $f_0(X)\in[\gamma,1-\gamma]$, we have
\begin{equation}
    \Delta_{f_0}\le B
    \qquad\text{and}\qquad
    P\Delta_{f_0}=K_0.
    \label{eq:log-proof-excess-basic}
\end{equation}
We next show that
\begin{equation}
    P\Delta_{f_0}^2
    \le
    (B+2)K_0.
    \label{eq:log-proof-second-moment}
\end{equation}

To prove \eqref{eq:log-proof-second-moment}, again condition on $X=x$
and write $p=\eta(x)$ and $q=f_0(x)$. Since $\Delta_{f_0}(x,y)$ is
affine in $y$, its square is convex, and therefore
\begin{align*}
    \EE\!\left[
        \Delta_{f_0}^2
        \,\middle|\,X=x
    \right]
    \le{}&
    p\log^2\frac{p}{q}
    +(1-p)\log^2\frac{1-p}{1-q}.
\end{align*}
We use the elementary inequality
\begin{equation}
    t\log^2t
    \le
    (B+2)\bigl(t\log t-t+1\bigr),
    \qquad
    0\le t\le e^B.
    \label{eq:log-proof-scalar-ineq}
\end{equation}
For completeness, if $t\le1$, writing $u=-\log t$ and using
$e^u\ge1+u+u^2/2$ gives
\[
    t\log^2t
    \le
    2(t\log t-t+1).
\]
If $t\ge1$, write $u=\log t\in[0,B]$. Since
\[
    u-2+(u+2)e^{-u}\ge0,
\]
we have
\[
    u^2
    \le
    (u+2)(u-1+e^{-u})
    \le
    (B+2)(u-1+e^{-u}),
\]
which is equivalent to \eqref{eq:log-proof-scalar-ineq}.

Applying \eqref{eq:log-proof-scalar-ineq} to
$t=p/q$ and $t=(1-p)/(1-q)$, which are both at most
$1/\gamma=e^B$, yields
\begin{align*}
    p\log^2\frac{p}{q}
    &\le
    (B+2)
    \left[
        p\log\frac{p}{q}-p+q
    \right],\\
    (1-p)\log^2\frac{1-p}{1-q}
    &\le
    (B+2)
    \left[
        (1-p)\log\frac{1-p}{1-q}
        -(1-p)+(1-q)
    \right].
\end{align*}
Summing the two inequalities gives
\[
    \EE\!\left[
        \Delta_{f_0}^2
        \,\middle|\,X=x
    \right]
    \le
    (B+2)\kl(p,q).
\]
Taking expectation over $X$ proves
\eqref{eq:log-proof-second-moment}.

A one-sided Bernstein inequality, followed by a union bound over
$f_0\in\cF$, now implies that, with probability at least $1-\delta/2$,
simultaneously for every $f_0\in\cF$,
\begin{equation}
    P_n\Delta_{f_0}
    \le
    K_0
    +
    \sqrt{2(B+2)K_0\alpha}
    +
    \frac{2B}{3}\alpha.
    \label{eq:log-proof-comparator-bernstein}
\end{equation}
Here we used
$\Var(\Delta_{f_0})\le P\Delta_{f_0}^2$
together with \eqref{eq:log-proof-second-moment}.

Combining
\eqref{eq:log-proof-erm-comparison}
and
\eqref{eq:log-proof-comparator-bernstein},
we obtain, simultaneously for every $f_0\in\cF$,
\begin{equation}
    H^2(\widehat f_{\log},\eta)
    \le
    K_0
    +
    \sqrt{2(B+2)K_0\alpha}
    +
    \left(2+\frac{2B}{3}\right)\alpha.
    \label{eq:log-proof-h-bound}
\end{equation}
Since
\[
    2+\frac{2B}{3}
    \ge
    \frac{B+2}{2},
\]
the right-hand side of \eqref{eq:log-proof-h-bound} is at most
\[
    \left(
        \sqrt{K_0}
        +
        \sqrt{
            \left(2+\frac{2B}{3}\right)\alpha
        }
    \right)^2.
\]
Together with \eqref{eq:log-proof-l1-hellinger}, this yields
\begin{equation}
    \epsilon_1(\widehat f_{\log})
    \le
    \sqrt{K(f_0)}
    +
    \sqrt{
        \left(2+\frac{2B}{3}\right)\alpha
    }
    \qquad
    \text{for every }f_0\in\cF.
    \label{eq:log-proof-kl-oracle}
\end{equation}

Finally, we relate the KL approximation error to the $L_1$
approximation error. For any $p\in[0,1]$ and
$q\in[\gamma,1-\gamma]$, convexity of
$p\mapsto\kl(p,q)$, together with $\kl(q,q)=0$, gives
\[
    \kl(p,q)
    \le
    |p-q|
    \max\left\{
        \frac{-\log q}{1-q},
        \frac{-\log(1-q)}{q}
    \right\}.
\]
Because $q,1-q\ge\gamma$,
\[
    \max\left\{
        \frac{-\log q}{1-q},
        \frac{-\log(1-q)}{q}
    \right\}
    \le
    \frac{\log(1/\gamma)}{1-\gamma}.
\]
Hence
\begin{equation}
    K(f)
    \le
    \frac{\log(1/\gamma)}{1-\gamma}
    \epsilon_1(f)
    \qquad
    \text{for every }f\in\cF.
    \label{eq:log-proof-kl-l1}
\end{equation}

Let
$f_*\in\argmin_{f\in\cF}\epsilon_1(f)$, which exists because
$\cF$ is finite. Substituting $f_0=f_*$ into
\eqref{eq:log-proof-kl-oracle} and applying
\eqref{eq:log-proof-kl-l1} gives
\[
    \epsilon_1(\widehat f_{\log})
    \le
    \sqrt{
        \frac{\log(1/\gamma)}{1-\gamma}
        \,\epsilon_{1,*}
    }
    +
    \sqrt{
        \left(
            2+\frac23\log\frac1\gamma
        \right)
        \frac{\log(2|\cF|^2/\delta)}{n}
    }.
\]
This proves \eqref{eq:log-loss-l1-upper}. For fixed $\gamma$, the
display immediately implies
\[
    \epsilon_1(\widehat f_{\log})
    \lesssim_\gamma
    \sqrt{\epsilon_{1,*}}
    +
    \sqrt{\frac{\log(|\cF|/\delta)}{n}}.
\]
\end{proof}

\subsection{Proof of Proposition~\ref{prop:log-loss-no-constant-factor}}
\label{app:log-loss-approximation-lower}

\begin{proof}
Fix $p\in(0,1)$. Let $\cX=\{a,b\}$ with
\[
    \PP(X=b)=p,
    \qquad
    \PP(X=a)=1-p,
\]
and let
\[
    Y=\frac12
    \qquad\text{almost surely}.
\]
Thus the problem is noiseless and
\[
    \eta(a)=\eta(b)=\frac12.
\]

Consider the two hypotheses $f_{\rm spike}$ and $f_{\rm flat}$ defined by
\[
    f_{\rm spike}(a)=\frac12,
    \qquad
    f_{\rm spike}(b)=\frac34,
\]
and
\[
    f_{\rm flat}(x)
    =
    \frac12+\frac{\sqrt p}{4},
    \qquad x\in\{a,b\}.
\]
Since $p\in(0,1)$, both hypotheses take values in $[1/4,3/4]$. Let
$\cF=\{f_{\rm spike},f_{\rm flat}\}$.

Their $L_1$ errors are
\[
    \epsilon_1(f_{\rm spike})
    =
    p\left|\frac34-\frac12\right|
    =
    \frac p4,
\]
and
\[
    \epsilon_1(f_{\rm flat})
    =
    \frac{\sqrt p}{4}.
\]
Because $p<\sqrt p$ for $p\in(0,1)$, $f_{\rm spike}$ is the unique
$L_1$-optimal hypothesis, and hence
\begin{equation}
    \epsilon_{1,*}
    =
    \frac p4.
    \label{eq:log-loss-lower-opt}
\end{equation}

We next compare the population log risks. Since $\eta\equiv1/2$, the
excess population log risk of any $f$ is
$\EE[\kl(1/2,f(X))]$. For $f_{\rm spike}$,
\begin{align}
    \EE\!\left[
        \kl\!\left(\frac12,f_{\rm spike}(X)\right)
    \right]
    &=
    p\,\kl\!\left(\frac12,\frac34\right) \nonumber\\
    &=
    -\frac p2\log\frac34.
    \label{eq:log-loss-lower-spike-kl}
\end{align}
For $f_{\rm flat}$, using
\[
    4f_{\rm flat}(1-f_{\rm flat})
    =
    1-\frac p4,
\]
we obtain
\begin{align}
    \EE\!\left[
        \kl\!\left(\frac12,f_{\rm flat}(X)\right)
    \right]
    &=
    \kl\!\left(
        \frac12,
        \frac12+\frac{\sqrt p}{4}
    \right) \nonumber\\
    &=
    -\frac12\log\left(1-\frac p4\right).
    \label{eq:log-loss-lower-flat-kl}
\end{align}

To compare the two quantities, consider
$g(u):=-\log(1-u/4)$ on $[0,1]$. The function $g$ is strictly convex and
$g(0)=0$. Therefore, for every $p\in(0,1)$,
\[
    -\log\left(1-\frac p4\right)
    =
    g(p)
    <
    p g(1)
    =
    -p\log\frac34.
\]
Combining this inequality with
\eqref{eq:log-loss-lower-spike-kl}--\eqref{eq:log-loss-lower-flat-kl}
shows that
\[
    \EE\!\left[
        \ell_{\log}(Y,f_{\rm flat}(X))
    \right]
    <
    \EE\!\left[
        \ell_{\log}(Y,f_{\rm spike}(X))
    \right].
\]
Hence $f_{\rm flat}$ is the unique population log-loss minimizer:
\[
    f_{\log}^*=f_{\rm flat}.
\]
Its $L_1$ error is therefore
\[
    \epsilon_1(f_{\log}^*)
    =
    \frac{\sqrt p}{4}.
\]
Using \eqref{eq:log-loss-lower-opt},
\[
    \frac{\sqrt p}{4}
    =
    \frac12\sqrt{\frac p4}
    =
    \frac12\sqrt{\epsilon_{1,*}},
\]
which proves \eqref{eq:log-loss-approximation-lower}.

For completeness, this population obstruction also applies to empirical
log-loss ERM. Since $\cF$ contains only two hypotheses and both are
uniformly bounded away from $0$ and $1$, their log losses are bounded.
The strong law of large numbers therefore implies that their empirical
risks converge almost surely to their respective population risks.
Because the population inequality above is strict,
\[
    \widehat f_{\log}
    =
    f_{\rm flat}
\]
eventually almost surely. Consequently,
\[
    \epsilon_1(\widehat f_{\log})
    \longrightarrow
    \frac{\sqrt p}{4}
    =
    \frac12\sqrt{\epsilon_{1,*}}
    \qquad\text{almost surely}.
\]

Finally,
\[
    \frac{\epsilon_1(f_{\log}^*)}{\epsilon_{1,*}}
    =
    \frac{1}{\sqrt p},
\]
which diverges as $p\downarrow0$. Thus, for any finite constant $C$, one
can choose $p$ sufficiently small so that
$\epsilon_1(f_{\log}^*)>C\epsilon_{1,*}$. Since empirical log-loss ERM
eventually selects $f_{\log}^*$ almost surely, it cannot satisfy a
universal constant-factor $L_1$ oracle guarantee of the form
\eqref{eq:basic-goal}. This failure occurs even though
$\cF\subseteq[1/4,3/4]^{\cX}$ and the labels are noiseless.
\end{proof}

\subsection{Failure without uniform boundary control}
\label{app:log-loss-boundary}

We next show that the restriction that all hypotheses stay uniformly
away from $0$ and $1$ is essential for Proposition~\ref{prop:log-loss-upper}.
Without such a restriction, the failure of log-loss ERM can be
substantially stronger than the $\sqrt{\epsilon_{1,*}}$ phenomenon in
Proposition~\ref{prop:log-loss-no-constant-factor}: the best $L_1$
approximation can have arbitrarily small error, while the population
log-loss minimizer has $L_1$ error bounded away from zero.

\begin{proposition}[Failure of log-loss ERM without uniform boundary control]
\label{prop:log-loss-boundary}
For every $p\in(0,1/4]$, there exists a noiseless regression problem and
a two-element class $\cF\subseteq(0,1)^{\cX}$ such that
\[
    \epsilon_{1,*}
    =
    p\left(\frac12-e^{-1/p}\right)
    \le \frac p2,
\]
while the unique population log-loss minimizer $f_{\log}^*$ satisfies
\[
    \epsilon_1(f_{\log}^*)=\frac14.
\]
Consequently, there is no function $\Psi:[0,1]\to[0,\infty)$ satisfying
$\Psi(\epsilon)\to0$ as $\epsilon\to0$ for which population log-loss
minimization obeys
\[
    \epsilon_1(f_{\log}^*)
    \le
    \Psi(\epsilon_{1,*})
\]
uniformly over finite classes $\cF\subseteq(0,1)^{\cX}$. Moreover,
empirical log-loss ERM eventually selects $f_{\log}^*$ almost surely.
\end{proposition}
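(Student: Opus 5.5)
We will mirror the two-point constructions of Propositions~\ref{prop:squared-loss-sharp-approximation-lower} and \ref{prop:log-loss-no-constant-factor}, but push one hypothesis to the boundary so that its log loss is huge on a small-mass region. Take $\cX=\{a,b\}$ with $\PP(X=b)=p$, $\PP(X=a)=1-p$, and let $Y=\eta(X)$ deterministically, with $\eta(a)=1/2$ and $\eta(b)=1/2$ (or any convenient interior value). The first hypothesis $f_{\rm good}$ equals $\eta$ on $a$ and takes an extreme value $q:=e^{-1/p}$ on $b$. Then $\epsilon_1(f_{\rm good})=p(1/2-e^{-1/p})$, which matches the target value of $\epsilon_{1,*}$ in the statement and is at most $p/2$. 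The second hypothesis $f_{\rm flat}$ is the constant $3/4$ (or $1/4$), so $\epsilon_1(f_{\rm flat})=1/4$. Both functions lie in $(0,1)$.

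The first step is to check that $f_{\rm good}$ is the unique $L_1$-optimal element. This holds because $p(1/2-e^{-1/p})\le p/2\le 1/8<1/4$ for $p\le1/4$, and it gives the claimed value of $\epsilon_{1,*}$.

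The second step compares the excess log risks using \eqref{eq:log-loss-population-kl}. For the good hypothesis,
\[
p\,\kl(1/2,e^{-1/p})=\frac p2\log\frac{1}{4e^{-1/p}(1-e^{-1/p})}=\frac12+\frac p2\log\frac{1}{4(1-e^{-1/p})}.
\]
For the flat hypothesis, $\kl(1/2,3/4)=\tfrac12\log(4/3)\approx0.144$. The main obstacle is showing that the good hypothesis has strictly larger excess log risk uniformly for $p\in(0,1/4]$, so the thresholds must be checked carefully. Since $1-e^{-1/p}\ge1-e^{-4}$, the good hypothesis's risk is at least $\tfrac12-\tfrac p2\log(4(1-e^{-4}))\ge\tfrac12-\tfrac18\log4\approx0.327$, which exceeds $0.144$. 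If this margin fails for a different choice of $\eta$ or of the flat value, I would adjust the flat constant; the only requirement is that its KL lies below $1/2-(p/2)\log 4$. It follows that $f_{\log}^*=f_{\rm flat}$ is the unique population minimizer, with $\epsilon_1=1/4$.

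The third step derives the consequences. For the non-existence of $\Psi$: a valid $\Psi$ would need $1/4\le\Psi(p(1/2-e^{-1/p}))$ for all small $p$. The argument tends to $0$ as $p\to0$, so $\Psi$ would have to tend to $0$ along it, a contradiction.

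For the empirical statement, both hypotheses are bounded away from $0$ and $1$ for each fixed $p$, so their log losses are bounded. The strong law of large numbers then gives almost-sure convergence of both empirical risks to their population values. Since the population inequality is strict, empirical log-loss ERM selects $f_{\rm flat}$ eventually almost surely.
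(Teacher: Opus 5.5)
Your proof takes essentially the same approach as the paper's. It uses the same two-point space with $Y\equiv 1/2$, the same pair of hypotheses ($e^{-1/p}$ at the rare point versus the constant $3/4$), the same KL comparison via \eqref{eq:log-loss-population-kl}, and the same strong-law argument for the empirical minimizer.

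There is one slip in the KL comparison. The fact $1-e^{-1/p}\ge 1-e^{-4}$ gives an \emph{upper} bound on $p\,\kl(1/2,e^{-1/p})$, so your intermediate inequality points the wrong way. The correct and simpler justification is $1-e^{-1/p}\le 1$. This gives $-\frac p2\log(1-e^{-1/p})\ge 0$, hence $p\,\kl(1/2,e^{-1/p})\ge \frac12-p\log 2\ge \frac12-\frac14\log 2>\frac12\log\frac43$, which is exactly the bound the paper uses.
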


\begin{proof}
Fix $p\in(0,1/4]$. Let $\cX=\{a,b\}$ with
\[
    \PP(X=b)=p,
    \qquad
    \PP(X=a)=1-p,
\]
and let
\[
    Y=\frac12
    \qquad\text{almost surely}.
\]
Thus
\[
    \eta(a)=\eta(b)=\frac12,
\]
and the regression problem is noiseless.

Consider the two hypotheses $f_0$ and $g$ given by
\[
    f_0(a)=\frac12,
    \qquad
    f_0(b)=e^{-1/p},
\]
and
\[
    g(x)=\frac34,
    \qquad x\in\{a,b\}.
\]
Both hypotheses take values strictly between $0$ and $1$, so all
population log risks are finite.

Their $L_1$ errors are
\[
    \epsilon_1(f_0)
    =
    p\left(\frac12-e^{-1/p}\right)
    \le
    \frac p2
\]
and
\[
    \epsilon_1(g)=\frac14.
\]
Since $p\le1/4$ and $e^{-1/p}>0$,
\[
    \epsilon_1(f_0)<\frac14=\epsilon_1(g),
\]
so $f_0$ is the unique $L_1$-optimal hypothesis and
\begin{equation}
    \epsilon_{1,*}
    =
    p\left(\frac12-e^{-1/p}\right).
    \label{eq:log-loss-boundary-l1-opt}
\end{equation}
In particular, $\epsilon_{1,*}\to0$ as $p\downarrow0$.

We now compare the population log risks. Since $\eta\equiv1/2$, their
excess log risks are given by
\[
    K(f)
    :=
    \EE\!\left[
        \kl\!\left(\frac12,f(X)\right)
    \right].
\]
For $f_0$,
\begin{align}
    K(f_0)
    &=
    p\,\kl\!\left(\frac12,e^{-1/p}\right)
    \nonumber\\
    &=
    p\left[
        \frac12\log\frac{1/2}{e^{-1/p}}
        +
        \frac12\log\frac{1/2}{1-e^{-1/p}}
    \right]
    \nonumber\\
    &=
    \frac12
    -
    p\log 2
    -
    \frac p2\log\left(1-e^{-1/p}\right)
    \nonumber\\
    &\ge
    \frac12-p\log2.
    \label{eq:log-loss-boundary-f0-kl}
\end{align}
For $g$,
\begin{align}
    K(g)
    &=
    \kl\!\left(\frac12,\frac34\right)
    =
    \frac12\log\frac{4}{3}.
    \label{eq:log-loss-boundary-g-kl}
\end{align}
For $p\le1/4$,
\[
    \frac12-p\log2
    \ge
    \frac12-\frac14\log2
    >
    \frac12\log\frac43.
\]
Combining this with
\eqref{eq:log-loss-boundary-f0-kl} and
\eqref{eq:log-loss-boundary-g-kl} yields
\[
    K(f_0)>K(g).
\]
Hence $g$ is the unique population log-loss minimizer:
\[
    f_{\log}^*=g.
\]
Therefore,
\begin{equation}
    \epsilon_1(f_{\log}^*)
    =
    \frac14,
    \label{eq:log-loss-boundary-constant-error}
\end{equation}
whereas
$\epsilon_{1,*}\to0$ by
\eqref{eq:log-loss-boundary-l1-opt}.

This already rules out any uniform population guarantee of the form
$\epsilon_1(f_{\log}^*)\le\Psi(\epsilon_{1,*})$ with
$\Psi(\epsilon)\to0$ as $\epsilon\to0$.

Finally, both hypotheses have bounded log loss on this fixed instance,
and the population log-risk gap is strictly positive. By the strong law
of large numbers, their empirical log risks converge almost surely to
their respective population risks. Thus empirical log-loss ERM selects
$g$ eventually almost surely, and consequently
\[
    \epsilon_1(\widehat f_{\log})
    \longrightarrow
    \frac14
    \qquad\text{almost surely}.
\]
\end{proof}

Proposition~\ref{prop:log-loss-boundary} shows that some uniform control away from $0$ and $1$ is necessary for an $L_1$ oracle guarantee of the form in Proposition~\ref{prop:log-loss-upper}. The failure is not caused by allowing predictions to equal exactly $0$ or $1$: for every fixed instance in the construction, all predictions lie strictly inside $(0,1)$ and all log risks are finite. Rather, the problem is that the $L_1$-optimal hypothesis is allowed to approach the boundary as $\epsilon_{1,*}\to0$. In that regime, an arbitrarily small $L_1$ approximation error can still incur a nonvanishing log-loss penalty, causing log-loss minimization to prefer a hypothesis whose $L_1$ error stays bounded away from zero.

\section{Why not absolute loss?}
\label{app:absolute-loss}

Since our target is the $L_1$ error $\normz{f-\eta}_1=\EE|f(X)-\eta(X)|$, it is natural to ask whether one can simply minimize the empirical absolute loss $\frac1n\sum_{i=1}^n |Y_i-f(X_i)|$. Despite the superficial similarity in $L_1$ measure, this approach is fundamentally misaligned with our objective: its population counterpart $\EE|Y-f(X)|$ targets the conditional median of $Y\mid X$, whereas our target is the conditional mean $\eta(x)=\EE[Y\mid X=x]$.

Indeed, for each $x$, the conditional absolute risk $t\mapsto \EE[|Y-t|\mid X=x]$ is minimized by a conditional median of $Y\mid X=x$. Thus, even with infinite data, absolute-loss minimization may not recover the best approximation to $\eta$ in $L_1$. In fact, the failure can occur even under realizability.

Consider the simplest example where $X\equiv x_0$ and $Y\sim\Bernoulli(p)$ for some $0<p<1/2$. Let $\cF=\{0,p\}$ consist of two constant predictors. Since $\eta(x_0)=p\in\cF$, we have $\eps_{1,*}:=\inf_{f\in\cF}\normz{f-\eta}_1=0$. On the other hand, for any $t\in[0,1]$, the population absolute risk is
\[
\EE|Y-t|=p(1-t)+(1-p)t=p+(1-2p)t.
\]
Since $p<1/2$, this risk is strictly increasing in $t$, and hence its unique minimizer over $\cF$ is $0$, rather than the true conditional mean $p$. 

This example highlights the basic distinction between absolute loss and our sign-betting objective. Absolute loss compares a predictor directly with the observed response $Y$ and therefore inherits the median-seeking geometry of absolute deviation. In contrast, sign betting uses pairwise signed residual comparisons designed to recover the geometry of distance to the conditional mean. We next make this distinction more explicit at the population level.

\paragraph{The sign in absolute loss depends on the observed response.}
Recall that $|Y-f(X)|=(Y-f(X))\sign(Y-f(X))$. Thus absolute loss can equivalently be written as $\EE[\sup_{s\in[-1,1]}s(Y-f(X))]$. Importantly, the maximizing direction is chosen separately for each realization of $(X,Y)$ and therefore may depend on the observed response $Y$. Conditioning on $X=x$, the resulting objective is $t\mapsto \EE[|Y-t|\mid X=x]$, whose minimizers are exactly the conditional medians of $Y\mid X=x$. Hence the full conditional distribution of $Y\mid X$ matters, rather than only its conditional mean.

Our sign-betting objective instead takes the form
\[
    \widehat L_{\rm SB}(f)
    :=
    \sup_{h\in\cF}
    \frac1n\sum_{i=1}^n
    (Y_i-f(X_i))\sign(h(X_i)-f(X_i)).
\]
For every fixed pair $(f,h)$, the direction $\sign(h(X)-f(X))$ is a function of $X$ alone and does not depend on the realized response. Consequently,
\[
    \EE[(Y-f(X))\sign(h(X)-f(X))]
    =
    \EE[(\eta(X)-f(X))\sign(h(X)-f(X))],
\]
where the equality follows from $\EE[Y-\eta(X)\mid X]=0$. Thus the population sign-betting objective
\[
    B_P(f)
    :=
    \sup_{h\in\cF}
    \EE[(Y-f(X))\sign(h(X)-f(X))]
\]
depends on the conditional distribution of $Y\mid X$ only through its mean $\eta(X)$.

This difference can also be viewed as a difference in the order of the supremum and expectation. In the scalar case, absolute loss has the form
\[
    \EE|Y-f|
    =
    \EE\left[\sup_{s\in[-1,1]}s(Y-f)\right],
\]
where the sign may adapt to each realization of $Y$. In contrast, if the direction must be chosen before observing the response, then
\[
    \sup_{s\in[-1,1]}\EE[s(Y-f)]
    =
    |\EE[Y]-f|.
\]
The former is minimized by a median of $Y$, whereas the latter is minimized by its mean. Sign betting implements the analogous distinction in regression by restricting the betting direction to be $X$-measurable.

\paragraph{A direct comparison on the realizable Bernoulli example.}
Consider again the example at the beginning of this appendix section: $X\equiv x_0$, $Y\sim\Bernoulli(p)$ with $0<p<1/2$, and $\cF=\{0,p\}$. Here $\eta=p$. As shown at the beginning of this section, absolute loss strictly prefers the predictor $0$, since $\EE|Y-t|=p+(1-2p)t$ is increasing in $t$.

The sign-betting population objective makes the opposite choice. For $f=p$, choosing $h=p$ gives the zero direction, while choosing $h=0$ gives
\[
    \EE[(Y-p)\sign(0-p)]
    =
    -\EE[Y-p]
    =
    0.
\]
Hence $B_P(p)=0$. For $f=0$, choosing $h=p$ gives
\[
    \EE[(Y-0)\sign(p-0)]
    =
    \EE[Y]
    =
    p,
\]
and therefore $B_P(0)=p$. Thus sign betting uniquely prefers $p=\eta$, while absolute loss uniquely prefers the conditional median $0$. In particular,
\[
    \argmin_{f\in\cF}\EE|Y-f(X)|
    =
    \{0\},
    \qquad
    \argmin_{f\in\cF}B_P(f)
    =
    \{\eta\}.
\]

More generally, whenever $\eta\in\cF$, we have $B_P(\eta)=0$. For any $f\in\cF$, using $h=\eta$ as the challenger gives
\[
    B_P(f)
    \ge
    \EE[(\eta(X)-f(X))\sign(\eta(X)-f(X))]
    =
    \normz{f-\eta}_1.
\]
Hence $\eta$ is the unique population minimizer whenever $\normz{f-\eta}_1>0$ for every $f\neq\eta$. This is fundamentally different from absolute loss, which can fail to recover $\eta$ even under realizability.

\paragraph{Connection to the dual representation of the $L_1$ error.}
The distinction above can also be understood through
\[
    \normz{f-\eta}_1
    =
    \sup_{\substack{s:\cX\to[-1,1]\\ s\ {\rm measurable}}}
    \EE[(\eta(X)-f(X))s(X)]
    =
    \sup_{\substack{s:\cX\to[-1,1]\\ s\ {\rm measurable}}}
    \EE[(Y-f(X))s(X)].
\]
The optimal direction is $s(X)=\sign(\eta(X)-f(X))$. 
Our sign-betting objective is a restricted version of this dual formulation, using the comparison directions $\{\sign(h-f):h\in\cF\}$ available from the model class.
In contrast, absolute loss uses the response-dependent direction $\sign(Y-f(X))$, which lead to a target of conditional median.

\section{Failure of the original betting loss under misspecification}
\label{app:original-betting-lb}

Betting loss was introduced in \citet{pmlr-v336-li26a} to obtain variance-adaptive guarantees in the realizable setting. Under misspecification, however, it suffers from a similar approximation-error obstruction as squared and log loss: its $L_1$ error can scale as $\sqrt{\eps_{1,*}}$, rather than linearly in $\eps_{1,*}$. This is not merely a limitation of the existing realizable analysis. 
The following proposition shows that the original betting loss can have asymptotic $L_1$ error of order $\sqrt{\eps_{1,*}}$, ruling out any universal constant-factor agnostic $L_1$ oracle guarantee. 

Recall the betting objective of \citet{pmlr-v336-li26a}. For $f,h\in\cF$, $\phi\in[0,n/4]$, and $c\in[0,1/4]$, define
\[
    b_{h,f}^{\phi,c}(x)
    :=
    \clip{\phi(h(x)-f(x))}_{[-c,c]},
\]
and
\[
    \widehat L_{\rm bet,n}(f)
    :=
    \max_{h\in\cF}
    \max_{\phi\in[0,n/4]}
    \max_{c\in[0,1/4]}
    \frac1n\sum_{i=1}^n
    \log\left(
        1+(Y_i-f(X_i))b_{h,f}^{\phi,c}(X_i)
    \right).
\]
The original betting estimator is any empirical minimizer
$\hf_{\rm bet}\in\argmin_{f\in\cF}\widehat L_{\rm bet,n}(f)$.
For the proof, it is useful to introduce the corresponding population objective
\[
    L_{\rm bet,n}(f)
    :=
    \max_{h\in\cF}
    \max_{\phi\in[0,n/4]}
    \max_{c\in[0,1/4]}
    \EE\left[
        \log\left(
            1+(Y-f(X))b_{h,f}^{\phi,c}(X)
        \right)
    \right].
\]
We keep the subscript $n$ because the admissible range of $\phi$ depends on the sample size.

\begin{proposition}
\label{prop:original-betting-agnostic-lb}
For every $p\in(0,1/64]$, there exist a distribution on a two-point input space and a two-element class $\cF$ such that $\eps_{1,*}=p$, but every empirical minimizer $\hf_{\rm bet}$ of the original betting objective satisfies
\[
    \normz{\hf_{\rm bet}-\eta}_1
    \longrightarrow
    \frac{1-p}{4}\sqrt{p}
    \qquad\text{almost surely as } n\to\infty.
\]
Consequently, there is no universal constant $C<\infty$ such that the original betting estimator satisfies
\[
    \normz{\hf_{\rm bet}-\eta}_1
    \le
    C\eps_{1,*}+o_n(1)
\]
over arbitrary misspecified finite classes.
\end{proposition}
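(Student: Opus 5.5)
The plan is an explicit two-point construction in the spirit of Proposition~\ref{prop:squared-loss-sharp-approximation-lower}. The one new ingredient is label noise on the heavy point, which throttles the betting loss's ability to exploit a small bias.

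\textbf{Construction.} Let $\cX=\{a,b\}$ with $\PP(X=b)=p$ and $\PP(X=a)=1-p$.
\begin{itemize}[nosep]
\item On $a$, draw $Y\sim\Bernoulli(1/2)$, so $\eta(a)=1/2$ and $\Var(Y\mid X=a)=1/4$.
\item On $b$, set $Y=0$, so $\eta(b)=0$.
\item The spike predictor is $f_{\rm spike}(a)=1/2$, $f_{\rm spike}(b)=1$.
\item The flat predictor is $f_{\rm flat}(a)=1/2+\sqrt p/4$, $f_{\rm flat}(b)=0$.
\end{itemize}
Then $\epsilon_1(f_{\rm spike})=p$ and $\epsilon_1(f_{\rm flat})=(1-p)\sqrt p/4$. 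For $p\le1/64$ we have $\sqrt p\le 1/8<(1-p)/4$, so the spike is the unique $L_1$-optimal predictor and $\epsilon_{1,*}=p$. The claim therefore reduces to showing that the betting estimator eventually selects $f_{\rm flat}$ almost surely.

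\textbf{Population comparison.} I will compute the limiting population objective $L_{\rm bet,\infty}(f)$, in which the constraint $\phi\le n/4$ is removed. For two predictors, the challenger $h=f$ gives value $0$, so only $h\ne f$ matters.

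For $f=f_{\rm spike}$ with challenger $f_{\rm flat}$, the residual on $a$ has mean zero. Any bet there (which has sign $+$) contributes a nonpositive expected log by Jensen's inequality. On $b$ the direction is $-1$ and the residual is $-1$, so the bet $b=-c$ earns $\log(1+c)$. Optimizing (the two points share $c$, but $\phi$ can be taken large and the $a$-term is at most $0$) gives $L_{\rm bet,\infty}(f_{\rm spike})=p\log(5/4)+o(p)$. I will bound this from below by $p\log(5/4)\approx0.223p$ by exhibiting $c=1/4$ and letting the contribution from $a$ vanish.

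For $f=f_{\rm flat}$ with challenger $f_{\rm spike}$, the residual on $b$ is zero. On $a$ the bet is $-\beta$ with $\beta\le 1/4$, and the residual is $\pm1/2-\sqrt p/4$ with equal probability. The gain is therefore
\[
(1-p)\max_{\beta}\tfrac12\bigl[\log(1-\beta(1/2-d))+\log(1+\beta(1/2+d))\bigr],\qquad d=\sqrt p/4.
\]
Bounding $\log(1+x)\le x-x^2/2+x^3/3$, or computing the maximizer $\beta^\star\approx 4d=\sqrt p\le1/8$ directly, shows this is at most roughly $2d^2(1+O(d))=p/8\,(1+O(\sqrt p))<0.223p$ for $p\le1/64$. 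Hence $L_{\rm bet,\infty}(f_{\rm flat})<L_{\rm bet,\infty}(f_{\rm spike})$ with a gap of order $p$. This explicit inequality, with clean constants valid on all of $(0,1/64]$, is where I expect the main work. My first attempt will be the cubic Taylor bound together with monotonicity of the bracket in $\beta$ beyond the cap.

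\textbf{Passage to ERM.} The per-sample summand is a continuous function of $(\phi,c)$ ranging over compact sets once $\phi$ is truncated. It is bounded because $|Y-f|\le1$ and $|b|\le 1/4$, so the log lies in $[\log(3/4),\log(5/4)]$. Moreover $b$ depends on $\phi$ only through a clipping that saturates once $\phi\ge 1/(4\min|h-f|)$, and this threshold is a constant here. So for large $n$ the $\phi$ constraint is inactive. A uniform strong law over the finite-dimensional compact parameter set (for example, via equicontinuity in $c$ and the finitely many sign patterns) gives $\widehat L_{\rm bet,n}(f)\to L_{\rm bet,\infty}(f)$ almost surely for both $f$. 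The strict population gap then forces $\hf_{\rm bet}=f_{\rm flat}$ eventually almost surely, which yields the limit $(1-p)\sqrt p/4$.

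\textbf{Consequence.} The final claim follows as in Proposition~\ref{prop:squared-loss-no-constant-factor}. The ratio between the limiting error and $\epsilon_{1,*}$ is $(1-p)/(4\sqrt p)$, which tends to $\infty$ as $p\downarrow0$, so no universal constant $C$ can work.
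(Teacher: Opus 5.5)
\paragraph{Verdict.} Your construction and overall plan are exactly the paper's. That is: $\mathrm{Bernoulli}(1/2)$ labels on $a$, $Y=0$ on $b$, spike $(1/2,1)$, flat $(1/2+\sqrt p/4,\,0)$, a population comparison, then almost-sure passage to the empirical minimizer. But the lower bound for $f_{\rm spike}$ is wrong as written.

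\paragraph{The error in the spike lower bound.} You cannot make the contribution of $a$ vanish. Write $t=\sqrt p/4$. To bet $-u$ at $b$ you need $\phi\ge u$ and $c\ge u$, and then the bet at $a$ is $\min(\phi t,c)\ge ut>0$. So the $a$-term is at most $\frac{1-p}{2}\log(1-u^2t^2/4)<0$. Hence the value is strictly below $p\log(5/4)$, by a $\Theta(p)$ margin of about $p/2048$; it is not $p\log(5/4)+o(p)$.

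Your remark that ``$\phi$ can be taken large'' also goes the wrong way. For $\phi\ge1/\sqrt p$ and $c=1/4$ the bet at $a$ saturates at $1/4$, so the $a$-term becomes $\frac{1-p}{2}\log(63/64)\approx-0.0078$. That swamps $p\log(5/4)\le0.0035$ for every $p\le1/64$, and your lower bound turns negative.

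\paragraph{The fix.} Take $\phi=c=1/4$, as the paper does. The bets are then $t/4$ at $a$ and $-1/4$ at $b$, so
\[
L_{{\rm bet},n}(f_{\rm spike})\ \ge\ p\log\tfrac54+\tfrac{1-p}{2}\log\Bigl(1-\tfrac{p}{1024}\Bigr)\ \ge\ \Bigl(\tfrac{7}{32}-\tfrac{1}{1024}\Bigr)p\ >\ \tfrac p5 .
\]
This holds for every $n\ge1$, since $\phi=1/4\le n/4$.

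\paragraph{The flat side.} The inequality you flagged as the main work takes one line by completing the square. With $d=\sqrt p/4$, the argument of the log is $1+2du-(\frac14-d^2)u^2\le 1/(1-4d^2)$. This gives $L_{{\rm bet},n}(f_{\rm flat})\le-\frac{1-p}{2}\log(1-p/4)<p/8$ exactly and uniformly in $n$. So the population gap exceeds $3p/40$, and no cubic Taylor bound or limiting objective $L_{{\rm bet},\infty}$ is needed.

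\paragraph{Passage to the empirical minimizer.} Your argument is valid: bets saturate once $\phi\ge1/\sqrt p$, so the $\phi$-constraint is inactive for large $n$, and a uniform strong law finishes. The paper's route is lighter. $(X,Y)$ is supported on the three atoms $(a,0),(a,1),(b,0)$, and every summand lies in $[\log\frac34,\log\frac54]$. Hence $\sup|(P_n-P)\ell|\le\log\frac43\sum_z|\widehat q_n(z)-q(z)|\to0$ almost surely. This is uniform over all $(h,\phi,c)$, including the growing $\phi$-range, with no truncation or equicontinuity argument.
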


\begin{proof}
Fix $p\in(0,1/64]$ and let $\cX=\{a,b\}$, with $\PP(X=b)=p$ and $\PP(X=a)=1-p$. Let
\[
    Y\mid X=a\sim\Bernoulli(1/2),
    \qquad
    Y\mid X=b=0
    \quad\text{almost surely}.
\]
Thus $\eta(a)=1/2$ and $\eta(b)=0$.

Set $t:=\sqrt{p}/4$ and consider the two-element class $\cF=\{f_0,g\}$ defined by
\[
\begin{array}{c|cc}
    & a & b\\ \hline
    \eta & 1/2 & 0\\
    f_0  & 1/2 & 1\\
    g    & 1/2+t & 0.
\end{array}
\]
Their $L_1$ errors are
\[
    \normz{f_0-\eta}_1=p,
    \qquad
    \normz{g-\eta}_1=(1-p)t
    =\frac{1-p}{4}\sqrt p.
\]
Since $p\le 1/64$, we have $4\sqrt p\le1/2<1-p$, and therefore $(1-p)t>p$. Hence $f_0$ is the unique $L_1$ oracle and
\[
    \eps_{1,*}=p.
\]

We next show that the population betting objective strictly prefers $g$.

\paragraph{Step 1: the $L_1$ oracle has population betting loss larger than $p/5$.}
For $f=f_0$, choose the challenger $h=g$ and take $\phi=c=1/4$, which is admissible for every $n\ge1$. At $X=a$, the bet is $t/4$, whereas at $X=b$ the bet is $-1/4$. Therefore,
\[
\begin{aligned}
    L_{\rm bet,n}(f_0)
    &\ge
    p\log\frac54
    +
    \frac{1-p}{2}
    \log\left(1-\frac{t^2}{64}\right)\\
    &=
    p\log\frac54
    +
    \frac{1-p}{2}
    \log\left(1-\frac{p}{1024}\right).
\end{aligned}
\]
Using $\log(5/4)\ge 7/32$ and $\log(1-u)\ge-2u$ for $u\in[0,1/2]$,
\[
\begin{aligned}
    L_{\rm bet,n}(f_0)
    &\ge
    \frac{7p}{32}
    -
    \frac{(1-p)p}{1024}\\
    &\ge
    \left(
        \frac7{32}-\frac1{1024}
    \right)p
    >
    \frac p5.
\end{aligned}
\]
This bound holds for every $n\ge1$.

\paragraph{Step 2: the worse $L_1$ predictor has population betting loss at most $p/8$.}
For $f=g$, the challenger $h=g$ gives betting loss zero, so it remains to consider $h=f_0$. At $X=b$, we have $Y-g(b)=0$, and hence this point contributes zero regardless of the bet.

At $X=a$, write $u:=\min\{\phi t,c\}\in[0,1/4]$. Since $f_0(a)-g(a)=-t$, the bet equals $-u$. Conditional on $X=a$, the expected log payoff is
\[
\begin{aligned}
    &\frac12
    \log\left(1+u\left(\frac12+t\right)\right)
    +
    \frac12
    \log\left(1-u\left(\frac12-t\right)\right)\\
    &\qquad=
    \frac12
    \log\left(
        1+2tu-\left(\frac14-t^2\right)u^2
    \right).
\end{aligned}
\]
Completing the square gives
\[
    1+2tu-\left(\frac14-t^2\right)u^2
    \le
    \frac1{1-4t^2}.
\]
Consequently,
\[
\begin{aligned}
    L_{\rm bet,n}(g)
    &\le
    -\frac{1-p}{2}\log(1-4t^2)\\
    &=
    -\frac{1-p}{2}\log\left(1-\frac p4\right).
\end{aligned}
\]
Using $-\log(1-u)\le u/(1-u)$,
\[
    L_{\rm bet,n}(g)
    \le
    \frac{p(1-p)}{8(1-p/4)}
    <
    \frac p8.
\]
Thus, uniformly over $n$,
\[
    L_{\rm bet,n}(f_0)-L_{\rm bet,n}(g)
    >
    \frac p5-\frac p8
    =
    \frac{3p}{40}.
\]
In particular, the population betting objective strictly prefers $g$, despite $f_0$ being the unique $L_1$ oracle.

\paragraph{Step 3: the empirical objective eventually makes the same choice.}
The distribution of $(X,Y)$ is supported on only three atoms,
\[
    \{(a,0),(a,1),(b,0)\}.
\]
For any $f,h\in\cF$ and any admissible $(\phi,c)$, define
\[
    \ell_{f,h,\phi,c}(x,y)
    :=
    \log\left(
        1+(y-f(x))b_{h,f}^{\phi,c}(x)
    \right).
\]
Since $|y-f(x)|\le1$ and $|b_{h,f}^{\phi,c}(x)|\le1/4$, we have
\[
    \frac34
    \le
    1+(y-f(x))b_{h,f}^{\phi,c}(x)
    \le
    \frac54,
\]
and hence $|\ell_{f,h,\phi,c}(x,y)|\le\log(4/3)$ uniformly over all $f,h,\phi,c$ and all $n$.

Let $\widehat q_n(z)$ and $q(z)$ denote the empirical and population probabilities of an atom $z$. Then
\[
\begin{aligned}
    &\sup_{f,h,\phi,c}
    \left|
        (P_n-P)\ell_{f,h,\phi,c}
    \right|\\
    &\qquad\le
    \log(4/3)
    \sum_{z\in\{(a,0),(a,1),(b,0)\}}
    |\widehat q_n(z)-q(z)|
    \longrightarrow0
    \qquad\text{almost surely}.
\end{aligned}
\]
Importantly, this bound is uniform over all admissible betting parameters, so the fact that the range $\phi\in[0,n/4]$ grows with $n$ causes no difficulty.

It follows that
\[
    \max_{f\in\cF}
    \left|
        \widehat L_{\rm bet,n}(f)-L_{\rm bet,n}(f)
    \right|
    \longrightarrow0
    \qquad\text{almost surely}.
\]
Combining this with the uniform population gap above, we obtain
\[
    \widehat L_{\rm bet,n}(g)
    <
    \widehat L_{\rm bet,n}(f_0)
\]
for all sufficiently large $n$, almost surely. Hence every empirical minimizer satisfies $\hf_{\rm bet}=g$ eventually almost surely. Therefore,
\[
    \normz{\hf_{\rm bet}-\eta}_1
    \longrightarrow
    \normz{g-\eta}_1
    =
    \frac{1-p}{4}\sqrt p
    \qquad\text{almost surely}.
\]

Finally,
\[
    \frac{\normz{g-\eta}_1}{\eps_{1,*}}
    =
    \frac{1-p}{4\sqrt p}
    \longrightarrow\infty
    \qquad\text{as }p\downarrow0.
\]
Thus, for any fixed $C<\infty$, one can choose $p>0$ sufficiently small such that $\normz{g-\eta}_1>C\eps_{1,*}$. Since every empirical minimizer converges almost surely to $g$, no universal bound of the form
\[
    \normz{\hf_{\rm bet}-\eta}_1
    \le
    C\eps_{1,*}+o_n(1)
\]
can hold over arbitrary misspecified finite classes.
\end{proof}

\paragraph{The obstruction is already present at the population level.}
The proof establishes more than asymptotic failure of the empirical estimator. For every $n$, the population objective satisfies
\[
    L_{\rm bet,n}(g)
    <
    L_{\rm bet,n}(f_0),
\]
with a gap bounded below by a positive multiple of $p$. Thus the failure is not caused by sampling fluctuations or by insufficient concentration of the empirical betting loss. Even with exact knowledge of the population objective, minimizing the original betting loss selects the predictor with the larger $L_1$ error on this instance.

Proposition~\ref{prop:original-betting-agnostic-lb} shows that the original betting loss ERM can asymptotically select a predictor with error $\Theta(\sqrt{\eps_{1,*}})$, and this deficiency is not merely an artifact of that proof strategy. This motivates the use of a different objective in the misspecified setting.

\section{Proof of the finite-class variance-adaptive guarantee}
\label{app:proof-sbr-variance-adaptive}
\begin{proof}[Proof of Theorem~\ref{thm:sbr-variance-adaptive}]
Let $M:=|\cF|$ and, for brevity in the proof, write
\[
    \alpha
    :=
    \frac{\log(2M^2/\delta)}{n}.
\]
For $f,h\in\cF$, define
\[
    Z_{f,h}(X,Y)
    :=
    (Y-f(X))\sign(h(X)-f(X)),
    \qquad
    D_{f,h}(X)
    :=
    |f(X)-h(X)|.
\]
Thus $\widehat T(f,h)=P_n Z_{f,h}$ and
$T_P(f,h)=P Z_{f,h}$.

We first establish a uniform concentration event. Since $Y,f(X)\in[0,1]$,
we have $|Z_{f,h}|\le 1$, and hence
$|Z_{f,h}-PZ_{f,h}|\le 2$ and
$\Var(Z_{f,h})\le PZ_{f,h}^2$. A one-sided Bernstein inequality therefore
gives, for each fixed pair $(f,h)$,
\begin{equation}
    (P_n-P)Z_{f,h}
    \le
    \sqrt{2PZ_{f,h}^2\,\alpha}
    +\frac{4}{3}\alpha
    \label{eq:appendix-bernstein-betting}
\end{equation}
except on an event of probability at most $e^{-n\alpha}$.

Similarly, since $0\le D_{f,h}\le 1$,
\[
    \Var(D_{f,h})
    \le
    PD_{f,h}^2
    \le
    PD_{f,h}.
\]
The lower-tail Bernstein inequality gives
\begin{equation}
    (P-P_n)D_{f,h}
    \le
    \sqrt{2PD_{f,h}\,\alpha}
    +\frac{2}{3}\alpha
    \label{eq:appendix-bernstein-distance}
\end{equation}
except on an event of probability at most $e^{-n\alpha}$.

There are at most $M^2$ ordered pairs in each of
\eqref{eq:appendix-bernstein-betting} and
\eqref{eq:appendix-bernstein-distance}. Therefore, by a union bound, both
inequalities hold simultaneously for every $f,h\in\cF$ with probability at
least
\[
    1-2M^2 e^{-n\alpha}
    =
    1-\delta.
\]
We work on this event throughout the remainder of the proof.

Fix an arbitrary comparator $f_0\in\cF$. By the definition of $B_P(f_0)$,
for every $h\in\cF$,
\[
    PZ_{f_0,h}
    =
    T_P(f_0,h)
    \le
    B_P(f_0).
\]
Moreover,
\[
    PZ_{f_0,h}^2
    =
    \EE\!\left[
        (Y-f_0(X))^2
        \mathbf{1}\{h(X)\neq f_0(X)\}
    \right]
    \le
    v(f_0).
\]
Hence \eqref{eq:appendix-bernstein-betting} implies
\begin{equation}
    \sup_{h\in\cF}\widehat T(f_0,h)
    \le
    B_P(f_0)
    +
    \sqrt{2v(f_0)\alpha}
    +
    \frac{4}{3}\alpha.
    \label{eq:appendix-comparator-objective}
\end{equation}

The key deterministic step is the sample analogue of
Lemma~\ref{lem:sign-betting-pairwise-identity}. Pointwise,
\[
    Z_{f,h}+Z_{h,f}
    =
    |f-h|,
\]
and therefore
\begin{align}
    P_n|\hf-f_0|
    &=
    \widehat T(\hf,f_0)
    +
    \widehat T(f_0,\hf)
    \nonumber\\
    &\le
    \sup_{h\in\cF}\widehat T(\hf,h)
    +
    \sup_{h\in\cF}\widehat T(f_0,h)
    \nonumber\\
    &\le
    2\sup_{h\in\cF}\widehat T(f_0,h),
    \label{eq:appendix-empirical-distance}
\end{align}
where the last inequality follows from the empirical optimality of $\hf$ in
\eqref{eq:sign-betting-estimator}. Combining
\eqref{eq:appendix-comparator-objective} and
\eqref{eq:appendix-empirical-distance} gives
\begin{equation}
    P_n|\hf-f_0|
    \le
    2B_P(f_0)
    +
    2\sqrt{2v(f_0)\alpha}
    +
    \frac{8}{3}\alpha.
    \label{eq:appendix-empirical-distance-bound}
\end{equation}

It remains to convert the empirical distance to its population counterpart.
Let
\[
    d
    :=
    \normz{\hf-f_0}_1
    =
    P D_{\hf,f_0}.
\]
Because \eqref{eq:appendix-bernstein-distance} holds simultaneously for all
pairs in $\cF$, it may be applied to the data-dependent pair $(\hf,f_0)$.
Thus
\[
    d
    \le
    P_n|\hf-f_0|
    +
    \sqrt{2d\alpha}
    +
    \frac{2}{3}\alpha.
\]
Using \eqref{eq:appendix-empirical-distance-bound},
\begin{equation}
    d
    \le
    2B_P(f_0)
    +
    2\sqrt{2v(f_0)\alpha}
    +
    \sqrt{2d\alpha}
    +
    \frac{10}{3}\alpha.
    \label{eq:appendix-self-bounding-distance}
\end{equation}

Fix any $\rho>0$ and set
\[
    \theta
    :=
    \frac{\rho}{2+\rho}\in(0,1).
\]
Young's inequality yields
\[
    \sqrt{2d\alpha}
    \le
    \theta d+\frac{\alpha}{2\theta}.
\]
Substituting this into
\eqref{eq:appendix-self-bounding-distance} and using
$1/(1-\theta)=(2+\rho)/2$, we obtain
\begin{align}
    d
    &\le
    (2+\rho)
    \left(
        B_P(f_0)+\sqrt{2v(f_0)\alpha}
    \right)
    \nonumber\\
    &\quad+
    \left(
        \frac{13}{3}
        +\frac{1}{\rho}
        +\frac{23}{12}\rho
    \right)\alpha.
    \label{eq:appendix-distance-final}
\end{align}
Finally, by the triangle inequality,
\[
    \epsilon_1(\hf)
    =
    \normz{\hf-\eta}_1
    \le
    \epsilon_1(f_0)+\normz{\hf-f_0}_1.
\]
Combining this with \eqref{eq:appendix-distance-final} gives, for every
$f_0\in\cF$,
\begin{align*}
    \epsilon_1(\hf)
    \le{}&
    \epsilon_1(f_0)
    +(2+\rho)B_P(f_0)
    +(2+\rho)\sqrt{2v(f_0)\alpha}\\
    &+
    \left(
        \frac{13}{3}
        +\frac{1}{\rho}
        +\frac{23}{12}\rho
    \right)\alpha.
\end{align*}

The concentration event above is uniform over all pairs in $\cF$ and does not
depend on either $f_0$ or $\rho$. Hence the bound holds simultaneously for
every comparator $f_0\in\cF$ and every $\rho>0$. Taking the infimum over
$f_0\in\cF$ and substituting
$\alpha=\log(2|\cF|^2/\delta)/n$ completes the proof.
\end{proof}

\section{The finite-sample guarantee recovers the population guarantee}
\label{app:finite-sample-recovers-population}

\begin{corollary}[Optimized finite-sample bound]
\label{cor:optimized-sbr-bound}
Under the assumptions of Theorem~\ref{thm:sbr-variance-adaptive}, with
probability at least $1-\delta$, simultaneously for every $f_0\in\cF$,
\begin{equation}
    \epsilon_1(\hf)
    \le
    \epsilon_1(f_0)
    +2B_P(f_0)
    +r_n(f_0,\delta),
    \label{eq:optimized-sbr-bound}
\end{equation}
where, writing
\[
    \alpha
    :=
    \frac{\log(2|\cF|^2/\delta)}{n},
    \qquad
    s(f_0)
    :=
    \sqrt{2v(f_0)\alpha},
\]
we may take
\begin{equation}
    r_n(f_0,\delta)
    :=
    2s(f_0)
    +\frac{13}{3}\alpha
    +2\sqrt{
        \alpha
        \left(
            B_P(f_0)
            +s(f_0)
            +\frac{23}{12}\alpha
        \right)
    }.
    \label{eq:optimized-sbr-remainder}
\end{equation}
Consequently, for every fixed $f_0\in\cF$ and $\delta\in(0,1)$,
$r_n(f_0,\delta)\to0$ as $n\to\infty$.
\end{corollary}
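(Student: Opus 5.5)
The plan is to go back to the self-bounding inequality \eqref{eq:appendix-self-bounding-distance} from the proof of Theorem~\ref{thm:sbr-variance-adaptive}, and to solve it exactly as a quadratic in $\sqrt d$ rather than apply Young's inequality with a fixed $\rho$. That inequality holds on the same probability-$(1-\delta)$ event, which is uniform over all pairs in $\cF$. It reads
\[
    d\le A+\sqrt{2d\alpha},\qquad d:=\normz{\hf-f_0}_1,\qquad A:=2B_P(f_0)+2s(f_0)+\tfrac{10}{3}\alpha ,
\]
with $s(f_0)=\sqrt{2v(f_0)\alpha}$. Because the event does not depend on $f_0$, every conclusion drawn on it holds simultaneously for all $f_0\in\cF$.

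Next I solve the quadratic. Write $u=\sqrt d$. Then $u^2-\sqrt{2\alpha}\,u-A\le0$, so
\[
    u\le \tfrac12\Bigl(\sqrt{2\alpha}+\sqrt{2\alpha+4A}\Bigr).
\]
Squaring gives
\[
    d\le A+\alpha+\sqrt{\alpha(\alpha+2A)} .
\]
I then substitute $A$. For the square root,
\[
    \alpha+2A=4\Bigl(B_P(f_0)+s(f_0)+\tfrac{23}{12}\alpha\Bigr),
\]
since $\tfrac14+\tfrac{20}{12}=\tfrac{23}{12}$. This gives
\[
    \sqrt{\alpha(\alpha+2A)}=2\sqrt{\alpha\bigl(B_P(f_0)+s(f_0)+\tfrac{23}{12}\alpha\bigr)}.
\]
The remaining terms are $A+\alpha=2B_P(f_0)+2s(f_0)+\tfrac{13}{3}\alpha$. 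Together these yield
\[
    d\le 2B_P(f_0)+r_n(f_0,\delta),
\]
with $r_n$ exactly as in \eqref{eq:optimized-sbr-remainder}. Finally, the triangle inequality $\epsilon_1(\hf)\le\epsilon_1(f_0)+d$ gives \eqref{eq:optimized-sbr-bound}.

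For the vanishing claim, fix $f_0$ and $\delta$. Then $\alpha\to0$ at rate $1/n$, so $s(f_0)\to0$ because $v(f_0)\le1$. Also $B_P(f_0)\le1$ is a constant, so the square-root term is $O(\sqrt{\alpha})\to0$. Every term of $r_n$ therefore vanishes.

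The only delicate step is the constant bookkeeping. The algebra has to produce exactly $\tfrac{13}{3}$ and $\tfrac{23}{12}$, which requires checking that $\tfrac{10}{3}+1=\tfrac{13}{3}$ and that the factor $4$ is pulled out of the radical correctly. No new probabilistic step is needed. An alternative route is to optimize $\rho$ in Theorem~\ref{thm:sbr-variance-adaptive} directly; it gives a similar bound, but the quadratic solution is cleaner and matches the stated form.
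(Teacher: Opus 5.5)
Your proposal is correct, and the constants check out. From $d\le A+\sqrt{2d\alpha}$ you get $d\le A+\alpha+\sqrt{\alpha(\alpha+2A)}$, and with $A=2B_P(f_0)+2s(f_0)+\tfrac{10}{3}\alpha$ this is exactly $2B_P(f_0)+r_n(f_0,\delta)$.

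The paper takes the route you list as the alternative. It starts from the conclusion of Theorem~\ref{thm:sbr-variance-adaptive} and regroups it as $\epsilon_1(f_0)+2B_P(f_0)+2s(f_0)+\tfrac{13}{3}\alpha+\rho\bigl(B_P(f_0)+s(f_0)+\tfrac{23}{12}\alpha\bigr)+\alpha/\rho$. It then uses $\inf_{\rho>0}\{\rho K+\alpha/\rho\}=2\sqrt{\alpha K}$.

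That route gives the identical remainder, not merely a similar one. Young's inequality with the optimal parameter is tight here: $\inf_{\rho>0}\{A(2+\rho)/2+\alpha(2+\rho)^2/(4\rho)\}=A+\alpha+\sqrt{\alpha(\alpha+2A)}$, which is exactly your quadratic root.

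The trade-off is as follows. The paper's argument uses the theorem only as stated, since its event already holds simultaneously for all $\rho$. Yours reopens the proof to use the intermediate inequality \eqref{eq:appendix-self-bounding-distance}. In return, yours avoids the auxiliary parameter and shows directly that $r_n$ is the sharpest bound extractable from that self-bounding inequality.
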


\begin{proof}
Fix $f_0\in\cF$. On the event of
Theorem~\ref{thm:sbr-variance-adaptive}, for every $\rho>0$,
\begin{align*}
    \epsilon_1(\hf)
    \le{}&
    \epsilon_1(f_0)
    +(2+\rho)B_P(f_0)
    +(2+\rho)s(f_0) \\
    &+
    \left(
        \frac{13}{3}
        +\frac{1}{\rho}
        +\frac{23}{12}\rho
    \right)\alpha.
\end{align*}
Rearranging the $\rho$-dependent terms gives
\begin{align*}
    \epsilon_1(\hf)
    \le{}&
    \epsilon_1(f_0)
    +2B_P(f_0)
    +2s(f_0)
    +\frac{13}{3}\alpha \\
    &+
    \rho
    \left(
        B_P(f_0)
        +s(f_0)
        +\frac{23}{12}\alpha
    \right)
    +\frac{\alpha}{\rho}.
\end{align*}
Since the theorem holds simultaneously for every $\rho>0$, we may choose
\[
    \rho
    =
    \sqrt{
        \frac{\alpha}{
            B_P(f_0)
            +s(f_0)
            +\frac{23}{12}\alpha
        }
    }.
\]
Using
\[
    \inf_{\rho>0}
    \left\{
        \rho A+\frac{\alpha}{\rho}
    \right\}
    =
    2\sqrt{\alpha A},
    \qquad A>0,
\]
yields \eqref{eq:optimized-sbr-bound}--\eqref{eq:optimized-sbr-remainder}.
The same high-probability event is uniform over $f_0\in\cF$, so the result
holds simultaneously for all comparators $f_0$.

Finally, for fixed $f_0$, $\cF$, and $\delta$, we have $\alpha\to0$ and
$s(f_0)=\sqrt{2v(f_0)\alpha}\to0$. Hence every term in
\eqref{eq:optimized-sbr-remainder} converges to zero.
\end{proof}

Corollary~\ref{cor:optimized-sbr-bound} makes precise the connection between
the finite-sample result and the population guarantee in
Proposition~\ref{prop:population-sign-betting-guarantee}. For every fixed
comparator $f_0$, SBR satisfies the same leading population bound
$\epsilon_1(f_0)+2B_P(f_0)$, up to an estimation term that vanishes with the
sample size. The optimizing value of $\rho$ is used only in the analysis and
does not need to be known or supplied to SBR.

\section{Relation to the fixed-design guarantee of Baraud and Maillard}
\label{app:comparison-baraud-maillard}

We clarify here the relation between our random-design guarantees and the
fixed-design result of \citet{baraud2025estimating}. Their estimator
coincides with SBR up to normalization when the optimization is exact, but
their statistical setting and risk criterion are different. In particular,
they consider deterministic design points and control the expected $L_1$
error evaluated on those design points. Our results instead concern
high-probability population error under i.i.d.\ random design.

We record the consequence obtained by applying their theorem conditionally
on the random covariates and then using a standard uniform-convergence
argument. This provides a natural black-box baseline against which to
compare our direct random-design analysis.

\paragraph{An application of the fixed-design result.}
Suppose that $\cF$ is finite, with $M:=|\cF|$, and define
\[
    R(f)
    :=
    \epsilon_1(f)
    =
    \EE|f(X)-\eta(X)|,
    \qquad
    R_n(f)
    :=
    \frac1n\sum_{i=1}^n
    |f(X_i)-\eta(X_i)|.
\]
Let
\[
    \bar\sigma_n^2
    :=
    \frac1n\sum_{i=1}^n\sigma^2(X_i),
    \qquad
    \bar\sigma^2
    :=
    \EE[\sigma^2(X)].
\]

Condition on $X_{1:n}:=(X_1,\ldots,X_n)$. The responses then form a
fixed-design regression problem,
\[
    Y_i
    =
    \eta(X_i)+\xi_i,
    \qquad
    \EE[\xi_i\mid X_{1:n}]=0,
    \qquad
    \EE[\xi_i^2\mid X_{1:n}]
    =
    \sigma^2(X_i).
\]
Hence the $p=2$ specialization of the oracle inequality of
\citet{baraud2025estimating} applies conditionally on $X_{1:n}$.

For a finite class, the comparison-set families appearing in their
complexity measure contain at most $M$ sets, so their relevant VC
complexity is at most of order $\log M$. Consequently, their theorem
implies, for a universal constant $\kappa>0$,
\begin{equation}
    \EE\!\left[
        R_n(\hf)
        \,\middle|\,
        X_{1:n}
    \right]
    \le
    3\inf_{f\in\cF}R_n(f)
    +
    \kappa
    \sqrt{
        \frac{\bar\sigma_n^2 \log M}{n}
    }.
    \label{eq:bm-conditional-fixed-design}
\end{equation}
We suppress inessential modifications of the logarithmic complexity term.

Taking expectations over $X_{1:n}$, using
\[
    \EE\!\left[\inf_{f\in\cF}R_n(f)\right]
    \le
    \inf_{f\in\cF}\EE R_n(f)
    =
    \epsilon_{1,*},
\]
and Jensen's inequality gives
\begin{equation}
    \EE R_n(\hf)
    \le
    3\epsilon_{1,*}
    +
    \kappa
    \sqrt{
        \frac{\bar\sigma^2\log M}{n}
    }.
    \label{eq:bm-expected-empirical}
\end{equation}
Thus the fixed-design result already yields a variance-sensitive bound for
the expected error of SBR evaluated on the observed covariates.

\paragraph{A direct conversion to population risk.}
To pass from the empirical-design error to population error, define
\[
    \Delta_n
    :=
    \sup_{f\in\cF}
    |R_n(f)-R(f)|.
\]
Since $\hf\in\cF$,
\[
    R(\hf)
    \le
    R_n(\hf)+\Delta_n.
\]
Moreover, because $|f(X)-\eta(X)|\in[0,1]$, standard finite-class uniform
convergence gives
\[
    \EE[\Delta_n]
    \lesssim
    \sqrt{\frac{\log M}{n}}.
\]
Combining this with \eqref{eq:bm-expected-empirical} yields the population
bound
\begin{equation}
    \EE\epsilon_1(\hf)
    \le
    3\epsilon_{1,*}
    +
    \kappa
    \sqrt{
        \frac{\bar\sigma^2\log M}{n}
    }
    +
    C
    \sqrt{
        \frac{\log M}{n}
    }
    \label{eq:bm-naive-population-conversion}
\end{equation}
for a universal constant $C$.

The last term in
\eqref{eq:bm-naive-population-conversion} is independent of the conditional
label variance. In particular, even in the realizable noiseless case
$\epsilon_{1,*}=\bar\sigma^2=0$, this direct black-box argument leaves an
$O(\sqrt{\log(M)/n})$ population-generalization term.

This calculation highlights the additional issue created by random design:
a variance-sensitive fixed-design bound does not by itself provide a
variance-sensitive population guarantee. Our analysis does not treat the
fixed-design theorem as a black box followed by global uniform convergence.
Instead, it analyzes the signed-residual process and the
empirical-to-population transition jointly, leading to the localized,
high-probability guarantees of
Theorem~\ref{thm:sbr-variance-adaptive}. In particular, our bound retains
comparator-dependent approximation and second-order quantities and can
achieve substantially smaller statistical terms on favorable instances.

\section{A sufficient condition for favorable misspecification}
\label{app:favorable-misspecification-condition}

\begin{proposition}[A sufficient condition for favorable misspecification]
\label{prop:favorable-misspecification-condition}
Let
\[
    D_{\cF}
    :=
    \{x\in\cX:\exists f,h\in\cF,\ f(x)\neq h(x)\}
\]
be the disagreement region of $\cF$. Suppose there exists $f_\star\in\cF$
such that
\[
    f_\star(x)=\eta(x)
    \qquad
    \text{for every }x\in D_{\cF}.
\]
Then $f_\star$ is an $L_1$-optimal predictor and $B_P(f_\star)=0$.
\end{proposition}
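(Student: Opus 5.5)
The proof splits the input space into the disagreement region $D_{\cF}$ and its complement $D_{\cF}^c$, on which all predictors in $\cF$ coincide. Both claims then follow from elementary pointwise comparisons.

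\emph{$L_1$-optimality.} For every $f\in\cF$ we decompose
\[
\epsilon_1(f)=\EE\bigl[|f(X)-\eta(X)|\mathbf{1}\{X\in D_{\cF}\}\bigr]+\EE\bigl[|f(X)-\eta(X)|\mathbf{1}\{X\notin D_{\cF}\}\bigr].
\]
On $D_{\cF}^c$ every $f\in\cF$ agrees with $f_\star$, so the second term equals $\EE[|f_\star(X)-\eta(X)|\mathbf{1}\{X\notin D_{\cF}\}]$ and does not depend on $f$. On $D_{\cF}$ we have $f_\star=\eta$, so the first term vanishes for $f_\star$ and is nonnegative for every other $f$. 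Hence $\epsilon_1(f_\star)\le\epsilon_1(f)$ for all $f\in\cF$.

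\emph{Vanishing of $B_P(f_\star)$.} Lemma~\ref{lem:population-sign-betting-upper-bound} already gives $B_P(f_\star)\ge 0$, so it suffices to show $T_P(f_\star,h)=0$ for every $h\in\cF$. By the second form of \eqref{eq:population-sign-betting-statistic},
\[
T_P(f_\star,h)=\EE\bigl[(\eta(X)-f_\star(X))\sign(h(X)-f_\star(X))\bigr].
\]
We show the integrand vanishes pointwise. If $x\in D_{\cF}$, then $\eta(x)-f_\star(x)=0$. If $x\notin D_{\cF}$, then $h(x)=f_\star(x)$, so $\sign(h(x)-f_\star(x))=\sign(0)=0$. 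In either case the product is zero, so $T_P(f_\star,h)=0$ for all $h$, and $B_P(f_\star)=0$.

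The only point requiring care is measurability of $D_{\cF}$, which is needed for the decomposition above. For the finite classes of the main results, $D_{\cF}$ is a finite union of the measurable sets $\{f\ne h\}$. For general classes the decomposition can be avoided entirely by arguing pointwise: for each $x$ and each $f\in\cF$, either $x\in D_{\cF}$, in which case $|f_\star(x)-\eta(x)|=0\le|f(x)-\eta(x)|$, or $x\notin D_{\cF}$, in which case $f(x)=f_\star(x)$. Thus $|f_\star(x)-\eta(x)|\le|f(x)-\eta(x)|$ for every $x$, and integrating gives optimality. This pointwise form is the one I would write up; the step is routine and needs no set-level decomposition.
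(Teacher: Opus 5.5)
Your proposal is correct and follows the paper's proof: both claims come from the same pointwise case split on $D_{\cF}$ versus its complement, and your argument for $B_P(f_\star)=0$ is identical to the paper's. The only cosmetic difference is that the paper writes the optimality step as the exact pointwise identity $|f-\eta|=|f_\star-\eta|+|f-f_\star|$, which gives $\epsilon_1(f)=\epsilon_1(f_\star)+\|f-f_\star\|_1$ and so identifies the suboptimality gap exactly rather than only bounding it below by zero.
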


\begin{proof}
Outside $D_{\cF}$, all predictors in $\cF$ agree, and hence every
$f\in\cF$ satisfies $f=f_\star$ on $D_{\cF}^c$. On $D_{\cF}$, we have
$f_\star=\eta$. Therefore, for every $f\in\cF$,
\[
    \epsilon_1(f)
    =
    \epsilon_1(f_\star)+\normz{f-f_\star}_1
    \ge
    \epsilon_1(f_\star),
\]
so $f_\star$ is $L_1$-optimal.

Moreover, for every $h\in\cF$, $\sign(h-f_\star)=0$ outside $D_{\cF}$,
whereas $\eta-f_\star=0$ on $D_{\cF}$. Hence
\[
    T_P(f_\star,h)
    =
    \EE[(\eta-f_\star)\sign(h-f_\star)]
    =
    0.
\]
Taking the supremum over $h\in\cF$ gives $B_P(f_\star)=0$.
\end{proof}

\section{Proofs for the sharp factor-three characterization}
\label{app:factor-three-optimality}

\subsection{Proof of the uniform factor-three guarantee}
\label{app:uniform-factor-three}

\begin{proof}[Proof of Corollary~\ref{cor:sbr-uniform-factor-three}]
Let
\[
    \alpha_\delta
    :=
    \frac{\log(2|\cF|^2/\delta)}{n},
\]
and fix
\[
    f_\star
    \in
    \argmin_{f\in\cF}\epsilon_1(f).
\]
Since $Y\in[0,1]$ and every $f\in\cF$ is $[0,1]$-valued,
\begin{equation}
    \epsilon_{1,*}\le 1,
    \qquad
    B_P(f_\star)\le \epsilon_{1,*}\le 1,
    \qquad
    v(f_\star)\le 1.
    \label{eq:uniform-factor-three-basic-bounds}
\end{equation}

Suppose first that $\alpha_\delta\le 1$. Since the event in
Theorem~\ref{thm:sbr-variance-adaptive} holds simultaneously for every
$\rho>0$, we may choose
\[
    \rho=\sqrt{\alpha_\delta}.
\]
Applying Theorem~\ref{thm:sbr-variance-adaptive} with $f_0=f_\star$ and
using \eqref{eq:uniform-factor-three-basic-bounds} gives
\begin{align*}
    \epsilon_1(\hf)
    &\le
    \epsilon_{1,*}
    +(2+\rho)B_P(f_\star)
    +(2+\rho)\sqrt{2v(f_\star)\alpha_\delta}
    +C_\rho\alpha_\delta
    \\
    &\le
    3\epsilon_{1,*}
    +\rho\epsilon_{1,*}
    +(2+\rho)\sqrt{2\alpha_\delta}
    +
    \left(
        \frac{13}{3}
        +\frac{1}{\rho}
        +\frac{23}{12}\rho
    \right)\alpha_\delta
    \\
    &\le
    3\epsilon_{1,*}
    +\sqrt{\alpha_\delta}
    +3\sqrt{2\alpha_\delta}
    +\frac{13}{3}\alpha_\delta
    +\sqrt{\alpha_\delta}
    +\frac{23}{12}\alpha_\delta^{3/2}.
\end{align*}
Because $\alpha_\delta\le1$,
\[
    \alpha_\delta
    \le
    \sqrt{\alpha_\delta},
    \qquad
    \alpha_\delta^{3/2}
    \le
    \sqrt{\alpha_\delta}.
\]
Hence
\begin{equation}
    \epsilon_1(\hf)
    \le
    3\epsilon_{1,*}
    +
    \left(
        \frac{33}{4}+3\sqrt{2}
    \right)
    \sqrt{\alpha_\delta}.
    \label{eq:uniform-factor-three-small-alpha}
\end{equation}

If instead $\alpha_\delta>1$, boundedness gives
\[
    \epsilon_1(\hf)\le1.
\]
Since
\[
    \frac{33}{4}+3\sqrt{2}>1
    \qquad\text{and}\qquad
    \sqrt{\alpha_\delta}>1,
\]
the bound in
\eqref{eq:uniform-factor-three-small-alpha} also holds trivially in this
case.

Therefore, for all $n\ge1$ and $\delta\in(0,1)$,
\begin{equation}
    \epsilon_1(\hf)
    \le
    3\epsilon_{1,*}
    +
    \left(
        \frac{33}{4}+3\sqrt{2}
    \right)
    \sqrt{
        \frac{\log(2|\cF|^2/\delta)}{n}
    }.
    \label{eq:uniform-factor-three-stronger}
\end{equation}
The estimation term in
\eqref{eq:uniform-factor-three-stronger} depends only on
$n$, $|\cF|$, and $\delta$, and hence is uniform over the underlying
distribution $P$.
\end{proof}

\subsection{Proof of Theorem~\ref{thm:proper-factor-lower-bound}}
\label{app:proper-factor-lower-bound}

\begin{proof}
Fix $C\in[1,3)$. Take $\cX=\mathbb N$,
set $a:=(C+1)/(C+5)$, and define
\[
    f_-(x)\equiv a,
    \qquad
    f_+(x)\equiv1-a,
    \qquad
    \cF_C:=\{f_-,f_+\}.
\]
Also set
\[
    q:=\frac{1}{2(1-a)}=\frac{C+5}{8},
    \qquad
    e:=\frac12-a,
    \qquad
    r:=\frac{1+a}{1-a}=\frac{C+3}{2}.
\]
Then $a\in[1/3,1/2)$, $q\in(0,1)$, $e>0$, and $r>C$.
Define $g:=(r-C)e>0$ and
\[
    \varepsilon_C
    :=\frac{g}{2}
    =\frac{(3-C)^2}{8(C+5)}.
\]
Both $\cF_C$ and $\varepsilon_C$ depend only on $C$.

Now fix $n\ge1$ and a possibly randomized proper learner with output
$\widehat f_n\in\cF_C$. All probabilities involving its output include
the learner's internal randomization.

For an integer $N$ to be chosen below, construct two priors $\Pi_-$ and
$\Pi_+$ over vectors $\theta=(\theta_1,\ldots,\theta_N)$ of conditional
means. Under each prior, the coordinates are independent, with
\[
    \begin{aligned}
        \Pi_-:\quad
        \theta_j
        &=
        \begin{cases}
            a, & \text{with probability }q,\\
            1, & \text{with probability }1-q,
        \end{cases}
        \\[1mm]
        \Pi_+:\quad
        \theta_j
        &=
        \begin{cases}
            1-a, & \text{with probability }q,\\
            0, & \text{with probability }1-q.
        \end{cases}
    \end{aligned}
\]
For each fixed $\theta$, let $P_\theta$ be the distribution defined by
\[
    X\sim\operatorname{Unif}([N]),
    \qquad
    Y\mid X=j\sim\operatorname{Bernoulli}(\theta_j).
\]
Its regression function satisfies $\eta_\theta(j)=\theta_j$ for
$j\in[N]$. For $f\in\cF_C$, write
\[
    d_\theta(f)
    :=\EE_{P_\theta}|f(X)-\eta_\theta(X)|
    =\frac1N\sum_{j=1}^N|f(j)-\theta_j|.
\]
Under $\Pi_-$, direct calculation gives
\begin{align*}
    \EE_{\Pi_-}d_\theta(f_-)
    &=(1-q)(1-a)=e,\\
    \EE_{\Pi_-}d_\theta(f_+)
    &=q(1-2a)+(1-q)a=re.
\end{align*}
By symmetry, $\EE_{\Pi_+}d_\theta(f_+)=e$ and
$\EE_{\Pi_+}d_\theta(f_-)=re$.

Let $t:=g/[4(1+C)]$. For $s\in\{-,+\}$, let $\bar s$ denote the
opposite sign and define
\[
    G_s
    :=
    \left\{
        |d_\theta(f_s)-e|\le t,\quad
        |d_\theta(f_{\bar s})-re|\le t
    \right\}.
\]
Each risk is an average of $N$ independent $[0,1]$-valued random
variables under the corresponding prior. Hoeffding's inequality and a
union bound therefore yield
\[
    \Pi_s(G_s^c)\le4\exp(-2Nt^2).
\]
On $G_s$,
\begin{align*}
    d_\theta(f_{\bar s})-C d_\theta(f_s)
    &\ge re-t-C(e+t)\\
    &=g-(1+C)t
      =\frac{3g}{4}
      >\varepsilon_C.
\end{align*}
Since $C\ge1$, this also implies
$d_\theta(f_{\bar s})>d_\theta(f_s)$. Thus $f_s$ is the unique
$L_1$-optimal member of $\cF_C$ on $G_s$, and
\begin{equation}
    d_\theta(f_{\bar s})
    >
    C\min_{f\in\cF_C}d_\theta(f)+\varepsilon_C.
    \label{eq:lower-bound-wrong-choice}
\end{equation}

We next show that the two priors induce nearly indistinguishable sample
distributions. Let
\[
    Q_s
    :=\int P_\theta^{\otimes n}\,\Pi_s(d\theta),
    \qquad s\in\{-,+\},
\]
be the mixture law of the entire training sample. In particular, $Q_s$
is a mixture of product distributions, not a product of
single-observation mixtures. Under both priors,
\[
    \EE_{\Pi_-}\theta_j=qa+(1-q)=\frac12,
    \qquad
    \EE_{\Pi_+}\theta_j=q(1-a)=\frac12.
\]
Let $D:=\{X_1,\ldots,X_n\text{ are all distinct}\}$. Conditional on
any distinct input sequence, the sampled coordinates
$\theta_{X_1},\ldots,\theta_{X_n}$ are independent under either prior.
After integrating out these coordinates, the conditional label law is
$\operatorname{Bernoulli}(1/2)^{\otimes n}$ under both $Q_-$ and $Q_+$.
The input law is also identical under the two mixtures. Hence their
restrictions to $D$ coincide.

Writing
$\operatorname{TV}(Q_-,Q_+):=\sup_A|Q_-(A)-Q_+(A)|$, we obtain
\[
    \operatorname{TV}(Q_-,Q_+)
    \le Q_-(D^c)
    \le\sum_{1\le i<j\le n}\PP(X_i=X_j)
    =\frac{n(n-1)}{2N},
\]
where the collision probabilities are computed under the common input
law. Choose an integer
\[
    N\ge
    \max\left\{
        1,\,
        6n(n-1),\,
        \frac{\log48}{2t^2}
    \right\}.
\]
The preceding bounds then give
\begin{equation}
    \operatorname{TV}(Q_-,Q_+)\le\frac1{12},
    \qquad
    \Pi_s(G_s^c)\le\frac1{12}
    \quad(s\in\{-,+\}).
    \label{eq:lower-bound-tv-and-good}
\end{equation}

Let $\mathbb P_s$ denote the joint experiment that first draws
$\theta\sim\Pi_s$, then draws the sample
$S\sim P_\theta^{\otimes n}$, and includes the learner's internal
randomization. For each sample $S$, let $\phi(S)\in[0,1]$ be the
probability, over this internal randomization, that the learner returns
$f_+$. Properness implies that it returns $f_-$ with probability
$1-\phi(S)$. Consequently,
\begin{align*}
    \mathbb P_-(\widehat f_n=f_+)
    +\mathbb P_+(\widehat f_n=f_-)
    &=
    \int\phi\,dQ_-+\int(1-\phi)\,dQ_+\\
    &=
    1+\int\phi\,d(Q_--Q_+)\\
    &\ge
    1-\operatorname{TV}(Q_-,Q_+)\\
    &\ge\frac{11}{12}.
\end{align*}
The first inequality follows from $0\le\phi\le1$ and the definition of
total variation; for example, write
$\phi(S)=\int_0^1\mathbf1\{\phi(S)>u\}\,du$.
Removing the bad environments and using
\eqref{eq:lower-bound-tv-and-good} yields
\begin{equation}
    \begin{aligned}
        \frac12\sum_{s\in\{-,+\}}
        \mathbb P_s\!\left(
            \widehat f_n=f_{\bar s},\,G_s
        \right)
        &\ge
        \frac{11}{24}
        -\frac12\sum_{s\in\{-,+\}}\Pi_s(G_s^c)\\
        &\ge
        \frac{11}{24}-\frac1{12}
        =\frac38.
    \end{aligned}
    \label{eq:lower-bound-good-testing}
\end{equation}

The left-hand side of \eqref{eq:lower-bound-good-testing} equals
\[
    \frac12\sum_{s\in\{-,+\}}
    \int
        \mathbf1_{G_s}(\theta)\,
        \PP_{S\sim P_\theta^{\otimes n}}
        \!\left(\widehat f_n=f_{\bar s}\right)
    \,\Pi_s(d\theta).
\]
Since both priors have finite support, this is a finite weighted
average. It follows that there exist $s\in\{-,+\}$ and a fixed
$\theta\in G_s$ such that
\[
    \PP_{S\sim P_\theta^{\otimes n}}
    \!\left(\widehat f_n=f_{\bar s}\right)
    \ge\frac38.
\]
For this $\theta$, \eqref{eq:lower-bound-wrong-choice} implies
\[
    \PP_{S\sim P_\theta^{\otimes n}}
    \left(
        d_\theta(\widehat f_n)
        >
        C\min_{f\in\cF_C}d_\theta(f)+\varepsilon_C
    \right)
    \ge\frac38.
\]
Taking $P=P_\theta$ proves
\eqref{eq:proper-factor-lower-bound}. The final counterexample is the
fixed distribution $P_\theta$, not either sample mixture $Q_s$;
its observations are i.i.d.\ and its input marginal has finite support.
\end{proof}


The construction above uses the same input marginal
$P_X=\Unif([N])$ in the two hard experiments. Hence the proof in fact
continues to hold even if the learner is given $P_X$.

\subsection{Impossibility of sub-three uniform guarantees}
\label{app:sub-three-uniform-guarantees}

We now derive the impossibility of
\eqref{eq:sub-three-uniform-guarantee}.
Fix $C\in[1,3)$, and let $\cF_C$ and $\varepsilon_C>0$ be supplied by
Theorem~\ref{thm:proper-factor-lower-bound}.
Suppose, toward a contradiction, that a possibly randomized proper
learning rule satisfies
\begin{equation}
    \PP_{S\sim P^{\otimes n}}
    \left(
        \epsilon_1(\widehat f_n)
        \le
        C\epsilon_{1,*}+r_n(\cF_C,\delta)
    \right)
    \ge1-\delta
    \label{eq:appendix-sub-three-guarantee}
\end{equation}
for every $n\ge1$, every $\delta\in(0,1)$, and every distribution $P$
on $\cX\times[0,1]$. Here the errors are evaluated under $P$ relative
to $\cF_C$, and the learner may depend on $\delta$.
Assume that the estimation term is deterministic, independent of $P$,
and satisfies $r_n(\cF_C,\delta)\to0$ for each fixed $\delta$.

Set $\delta=1/4$. Since $\cF_C$ and $\varepsilon_C$ are fixed
independently of $n$, there exists $n$ sufficiently large that
$r_n(\cF_C,1/4)\le\varepsilon_C$. For this $n$,
\eqref{eq:appendix-sub-three-guarantee} implies, for every $P$,
\begin{align*}
    &\PP_{S\sim P^{\otimes n}}
    \left(
        \epsilon_1(\widehat f_n)
        >
        C\epsilon_{1,*}+\varepsilon_C
    \right)\\
    &\qquad\le
    \PP_{S\sim P^{\otimes n}}
    \left(
        \epsilon_1(\widehat f_n)
        >
        C\epsilon_{1,*}+r_n(\cF_C,1/4)
    \right)
    \le\frac14.
\end{align*}
This contradicts Theorem~\ref{thm:proper-factor-lower-bound}, which
supplies a distribution $P$ for which the first probability is at
least $3/8$. The same contradiction applies if the proposed guarantee
is required only for all sufficiently large $n$, provided the
sample-size threshold is independent of $P$.

For $C<1$, the argument is trivial. Take the singleton class $\cF=\{f\equiv0\}$ and a
distribution with $X\equiv1$ and $Y\equiv1$. Every proper learner then
satisfies $\epsilon_1(\widehat f_n)=\epsilon_{1,*}=1$ deterministically.
Thus \eqref{eq:sub-three-uniform-guarantee} fails with probability one
whenever $r_n(\cF,1/4)<1-C$, which holds for all sufficiently large $n$
if $r_n(\cF,1/4)\to0$.

Therefore, for every $C\in[0,3)$, no proper learner can satisfy a
distribution-free high-probability oracle inequality of the form
\[
    \epsilon_1(\widehat f_n)
    \le
    C\epsilon_{1,*}
    +
    r_n(\cF,\delta)
\]
with a distribution-independent estimation term satisfying
$r_n(\cF,\delta)\to0$ for each fixed $\cF$ and $\delta$. In particular,
the conclusion is unaffected by how slowly the estimation term vanishes
with $n$.

Together with
Corollary~\ref{cor:sbr-uniform-factor-three}, which establishes such a
uniform guarantee with approximation factor $3$, this proves that $3$ is
the smallest universal approximation factor achievable by proper learners
under the distribution-free, high-probability criterion considered in
Section~\ref{sec:optimality}.

\section{Beyond finite classes: proof and examples}
\label{app:sbr-infinite-classes}

This appendix proves Theorem~\ref{thm:sbr-infinite-classes} and verifies
its complexity conditions for several standard infinite hypothesis classes.
The proof has two ingredients. First, prediction and comparison complexity
control the metric entropy of the empirical processes induced by SBR.
Second, a uniform Bernstein inequality converts these entropy bounds into
variance-sensitive concentration.

Throughout this appendix, covering numbers are taken with centers in the class. We suppress standard measurability issues throughout. 

\subsection{Prediction and comparison complexity}
\label{app:sbr-complexity}

Recall the pairwise comparison class
\[
    \cC_{\cF}
    :=
    \left\{
        x\mapsto\one\{h(x)>f(x)\}:f,h\in\cF
    \right\},
\]
and let $d_{\rm pred}:=\operatorname{Pdim}(\cF)$,
$d_{\rm cmp}:=\operatorname{VCdim}(\cC_{\cF})$, and
$d:=1+d_{\rm pred}+d_{\rm cmp}$. For $f,h\in\cF$, define
\[
    Z_{f,h}(x,y)
    :=
    (y-f(x))\sign(h(x)-f(x)),
    \qquad
    D_{f,h}(x):=|f(x)-h(x)|,
\]
and introduce the corresponding function classes
\[
    \cZ:=\{Z_{f,h}:f,h\in\cF\},
    \qquad
    \cD:=\{D_{f,h}:f,h\in\cF\}.
\]

The next proposition shows that the two complexity measures in
Theorem~\ref{thm:sbr-infinite-classes} imply uniform polynomial covering bounds for the function classes needed in the proof.

\begin{proposition}[Prediction and comparison complexity imply uniform entropy]
\label{prop:sbr-derived-class-entropy}
Suppose $\cF\subseteq\{f:\cX\to[0,1]\}$ and
$d_{\rm pred},d_{\rm cmp}<\infty$. There exist universal constants
$A_0\ge e$ and $c_0>0$ such that, for every probability measure $Q$ on
$\cX\times[0,1]$ and every $u\in(0,1]$,
\[
    \log N\bigl(u,\cZ,L_2(Q)\bigr)
    \le
    c_0d\log(A_0/u),
\]
and
\[
    \log N\bigl(u,\cD,L_2(Q_X)\bigr)
    \le
    c_0d\log(A_0/u),
\]
where $Q_X$ denotes the $\cX$-marginal of $Q$.
\end{proposition}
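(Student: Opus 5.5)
We decompose each function in $\cZ$ and $\cD$ into pieces whose entropy is controlled by $d_{\rm pred}$ and $d_{\rm cmp}$. Throughout we use the standard fact (Haussler/Pollard) that a $[0,1]$-valued class of pseudo-dimension $p$ satisfies $\log N(u,\cdot,L_2(Q))\le c\,p\log(A/u)$ uniformly in $Q$. Similarly, a VC class of sets of dimension $v$ satisfies $\log N(u,\cdot,L_2(Q))\le c\,v\log(A/u)$.

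\textbf{Sign decomposition.} Write
\[
\sign(h-f)=\one\{h>f\}-\one\{f>h\}.
\]
Both indicators belong to $\cC_\cF$ (swap the roles of $f$ and $h$). Hence
\[
Z_{f,h}(x,y)=y\,\one\{h>f\}-y\,\one\{f>h\}-f\,\one\{h>f\}+f\,\one\{f>h\}.
\]
Similarly, $D_{f,h}=(h-f)\one\{h>f\}+(f-h)\one\{f>h\}$.

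\textbf{Building covers.} Fix $Q$ and $u$. Let $\cN_\cF$ be a $u/8$-cover of $\cF$ in $L_2(Q_X)$, of log-size $O(d_{\rm pred}\log(A/u))$. Let $\cN_\cC$ be a $u/8$-cover of $\cC_\cF$, of log-size $O(d_{\rm cmp}\log(A/u))$. The cover of $\cC_\cF$ must be taken with respect to both $L_2(Q_X)$ and $L_2(y^2\,dQ)$.

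For the second measure we use a reweighting device. Either apply the uniform VC bound to the normalized measure $\tilde Q\propto y^2\,dQ$, which only helps since $y^2\le1$ and the normalizing mass is at most one. Or simply note $\|y\,\Delta\|_{L_2(Q)}\le\|\Delta\|_{L_2(Q_X)}$ because $|y|\le1$. The latter suffices, so a single cover of $\cC_\cF$ in $L_2(Q_X)$ is enough.

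\textbf{Product terms.} For $|f|,|\one\{\cdot\}|\le1$ we have
\[
\|f\one_A-f'\one_{A'}\|\le\|f-f'\|+\|\one_A-\one_{A'}\|.
\]
Each product term is therefore approximated within $u/4$ by replacing $f$ with its cover element and each indicator with its cover element.

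Choosing one cover element for each of $f$, $h$, $\one\{h>f\}$ and $\one\{f>h\}$ yields an approximant within $u$ of $Z_{f,h}$ (respectively $D_{f,h}$). The number of choices is at most $|\cN_\cF|^2|\cN_\cC|^2$, giving log-covering $\le c_0(d_{\rm pred}+d_{\rm cmp})\log(A_0/u)\le c_0 d\log(A_0/u)$. The extra $1$ in $d$ absorbs constants and the degenerate cases where a dimension is zero.

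To get centers inside the class, we pass from an external $u/2$-cover to an internal $u$-cover. This is the standard step that at most doubles the radius, and it is absorbed into $A_0$.

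\textbf{Main obstacle.} The delicate point is that the indicators $\one\{h>f\}$ are discontinuous in $(f,h)$. Closeness of $f,h$ to their cover elements does not imply closeness of the indicators. This is exactly why we cover the indicators separately through $d_{\rm cmp}$, rather than deriving their cover from the cover of $\cF$.

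We also must ensure that the joint choice is consistent. It is: we approximate each fixed $Z_{f,h}$ by some element of the finite product set, and we only need that set to be a $u$-net. A net need not be realizable by consistent tuples, since internal centers are recovered afterward as above. We also rely on the uniform-in-$Q$ entropy bounds for pseudo-dimension and VC classes, and verify that applying them at $L_2$ radius $u/8$ changes only the constant $A_0$.
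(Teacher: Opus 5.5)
Your proposal is correct and follows the paper's proof. It splits $\sign(h-f)$ into two indicators from $\cC_{\cF}$, covers $\cF$ and $\cC_{\cF}$ separately in $L_2(Q_X)$ (using $|y|\le 1$, so no reweighted measure is needed), bounds the product error pointwise, and converts the external net into an internal one at the cost of doubling the radius. The only difference is the $\cD$ part: the paper skips the indicator decomposition and uses $\bigl||f-h|-|f'-h'|\bigr|\le|f-f'|+|h-h'|$, so it needs only two covers of $\cF$, whereas your indicator-based route also works but brings $d_{\rm cmp}$ into that bound, which is still $O(d)$.
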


\begin{proof}
We first record the standard entropy bounds that will be used throughout.
Since $\operatorname{Pdim}(\cF)=d_{\rm pred}$, the class $\cF$ is a
VC-subgraph class with VC index at most $d_{\rm pred}+1$ (the shift by
one is only a matter of convention). Because all functions in $\cF$ take
values in $[0,1]$, Theorem~2.6.7 of
\citet{wellner2013weak} therefore implies that there exist
universal constants $a_0\ge e$ and $c_1>0$ such that, for every
probability measure $R$ on $\cX$ and every $u\in(0,1]$,
\begin{equation}
    \log N\bigl(u,\cF,L_2(R)\bigr)
    \le
    c_1(1+d_{\rm pred})\log(a_0/u).
    \label{eq:entropy-F}
\end{equation}
Likewise, since $\cC_{\cF}$ is a $\{0,1\}$-valued class of VC
dimension $d_{\rm cmp}$, the uniform entropy bound for VC classes of
sets \citep[Theorem~2.6.4]{wellner2013weak} gives
\begin{equation}
    \log N\bigl(u,\cC_{\cF},L_2(R)\bigr)
    \le
    c_1(1+d_{\rm cmp})\log(a_0/u).
    \label{eq:entropy-C}
\end{equation}
The precise universal constants in
\eqref{eq:entropy-F}--\eqref{eq:entropy-C} will not be important.

We first bound the entropy of $\cZ$.  For $f,h\in\cF$, let
\[
    c_{h,f}(x):=\one\{h(x)>f(x)\}.
\]
Under the convention $\sign(0)=0$,
\[
    \sign(h-f)=c_{h,f}-c_{f,h},
\]
and hence every $Z_{f,h}\in\cZ$ is contained in the larger product
class
\[
    \cH
    :=
    \left\{
        (x,y)\mapsto
        (y-f(x))(c_1(x)-c_2(x)):
        f\in\cF,\;
        c_1,c_2\in\cC_{\cF}
    \right\}.
\]

The map defining $\cH$ is Lipschitz in its three functional arguments.
Indeed, for $f,f'\in\cF$ and
$c_1,c_2,c_1',c_2'\in\cC_{\cF}$,
\begin{align}
    &\left|
        (y-f)(c_1-c_2)
        -(y-f')(c_1'-c_2')
    \right|
    \nonumber\\
    &\quad\le
    |f-f'|\,|c_1-c_2|
    +
    |y-f'|
    \bigl(
        |c_1-c_1'|+|c_2-c_2'|
    \bigr)
    \nonumber\\
    &\quad\le
    |f-f'|+|c_1-c_1'|+|c_2-c_2'|,
    \label{eq:Z-pointwise-lipschitz}
\end{align}
where the last inequality uses
$f',y\in[0,1]$ and $c_1,c_2\in\{0,1\}$.

Fix a probability measure $Q$ on $\cX\times[0,1]$, and let $Q_X$
denote its $\cX$-marginal. Set $\delta:=u/6$.  Choose
$\delta$-covers of $\cF$ and $\cC_{\cF}$ in $L_2(Q_X)$.
By \eqref{eq:Z-pointwise-lipschitz} and Minkowski's inequality,
approximating each of the three components within $\delta$ produces an
$L_2(Q)$ error of at most
\[
    3\delta=\frac u2.
\]
Thus the corresponding product functions form an external $u/2$-cover
of $\cZ$. If covering centers are required to belong to $\cZ$, then
for every covering ball that intersects $\cZ$ choose one element of
that intersection as its new center. By the triangle inequality this
produces an internal $u$-cover with no increase in cardinality.
Consequently,
\begin{equation}
    N\bigl(u,\cZ,L_2(Q)\bigr)
    \le
    N\bigl(u/6,\cF,L_2(Q_X)\bigr)
    N\bigl(u/6,\cC_{\cF},L_2(Q_X)\bigr)^2.
    \label{eq:Z-product-cover}
\end{equation}

Applying \eqref{eq:entropy-F}--\eqref{eq:entropy-C} to
\eqref{eq:Z-product-cover} gives
\begin{align*}
    \log N\bigl(u,\cZ,L_2(Q)\bigr)
    &\le
    c_1\Bigl[
        (1+d_{\rm pred})
        +2(1+d_{\rm cmp})
    \Bigr]\log(6a_0/u)\\
    &\le
    3c_1d\log(6a_0/u),
\end{align*}
since $d=1+d_{\rm pred}+d_{\rm cmp}$.

It remains to control $\cD$.  The absolute-value map satisfies
\[
    \bigl||f-h|-|f'-h'|\bigr|
    \le
    |f-f'|+|h-h'|.
\]
Therefore, taking $u/2$-covers of the two copies of $\cF$ and applying
Minkowski's inequality yields
\begin{equation}
    N\bigl(u,\cD,L_2(Q_X)\bigr)
    \le
    N\bigl(u/2,\cF,L_2(Q_X)\bigr)^2.
    \label{eq:D-product-cover}
\end{equation}
Here the resulting centers are themselves of the form
$|f'-h'|$ and hence belong to $\cD$ when the covers of $\cF$ are
chosen internally.

Using \eqref{eq:entropy-F} once more,
\[
    \log N\bigl(u,\cD,L_2(Q_X)\bigr)
    \le
    2c_1(1+d_{\rm pred})\log(2a_0/u)
    \le
    2c_1d\log(2a_0/u).
\]

Choosing universal constants, for example
$A_0\ge \max\{e,6a_0\}$ and $c_0\ge 3c_1$, proves both asserted
bounds.
\end{proof}

\subsection{A uniform Bernstein inequality}
\label{app:uniform-bernstein}

We next give the empirical-process inequality used to replace the finite
union bound in the proof of Theorem~\ref{thm:sbr-variance-adaptive}.
The important feature is that the leading deviation term depends on the individual second moment $Pg^2$.

\begin{lemma}[Uniform Bernstein inequality]
\label{lem:uniform-bernstein}
Let $W_1,\ldots,W_n$ be i.i.d.\ from a probability measure $P$, and let
$\cG$ be a class of measurable functions $g:\cW\to[-1,1]$. Suppose
there exist $A\ge e$ and $\nu\ge1$ such that, for every probability
measure $Q$ on $\cW$ and every $u\in(0,1]$,
\begin{equation}
    \log N\bigl(u,\cG,L_2(Q)\bigr)
    \le
    \nu\log(A/u).
    \label{eq:uniform-bernstein-entropy}
\end{equation}
For $\zeta\in(0,1)$, define
\[
    \beta
    :=
    \frac{\nu\log(An)+\log(2/\zeta)}{n}.
\]
Then, with probability at least $1-\zeta$,
\begin{equation}
    |(P_n-P)g|
    \le
    33\left(
        \sqrt{Pg^2\,\beta}+\beta
    \right)
    \qquad
    \text{simultaneously for all }g\in\cG.
    \label{eq:uniform-bernstein}
\end{equation}
\end{lemma}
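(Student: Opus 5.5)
The plan is a localization (peeling) argument combined with a chaining/Talagrand-type bound for the empirical process restricted to shells of constant second moment. We split $\cG$ into $\cG_0:=\{g: Pg^2\le\beta\}$ and the dyadic shells $\cG_k:=\{g:2^{k-1}\beta<Pg^2\le 2^k\beta\}$ for $k=1,\ldots,K$, where $K=\lceil\log_2(1/\beta)\rceil+1$. Since $|g|\le1$ we have $Pg^2\le 1$, so $K=O(\log n)$ shells suffice. On each shell we bound $\sup_{g\in\cG_k}|(P_n-P)g|$ by a quantity of order $\sqrt{\sigma_k^2\beta}+\beta$ with $\sigma_k^2:=2^k\beta$. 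On $\cG_k$ we have $\sigma_k^2\le 2Pg^2$, and the shell-by-shell bound then transfers to the individual bound $33(\sqrt{Pg^2\beta}+\beta)$.

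For a fixed shell with variance proxy $\sigma^2\ge\beta$, we apply Talagrand's concentration inequality in Bousquet's form. With probability at least $1-e^{-x}$,
\[
\sup_{g\in\cG_k}|(P_n-P)g|\le 2\,\EE\sup_{g\in\cG_k}|(P_n-P)g|+\sqrt{2\sigma^2x/n}+ c\,x/n .
\]
We take $x=\log(2K/\zeta)$, so that a union bound over the $K$ shells costs total probability $\zeta$. Because $\log K\lesssim\log\log n$, this is absorbed into $\nu\log(An)$, since $A\ge e$ and $\nu\ge 1$. The expectation is controlled by symmetrization and the local uniform-entropy bound for VC-type classes. For classes satisfying \eqref{eq:uniform-bernstein-entropy}, with envelope $1$ and $\sup Pg^2\le\sigma^2$, the Giné--Guillou / van der Vaart--Wellner bound gives
\[
\EE\sup_{g}|(P_n-P)g|\lesssim \sqrt{\frac{\nu\sigma^2\log(A/\sigma)}{n}}+\frac{\nu\log(A/\sigma)}{n}.
\]
Since $\sigma^2\ge\beta\ge 1/n$, we have $\log(A/\sigma)\le\log(A\sqrt n)\le\log(An)$, so this expectation is at most $C(\sqrt{\sigma^2\beta}+\beta)$. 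The bottom shell $\cG_0$ is handled identically with $\sigma^2=\beta$. There every term is $O(\beta)$, which is consistent with the claim because $\sqrt{Pg^2\beta}+\beta\ge\beta$.

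The main obstacle is the local maximal inequality. One must derive it from the uniform-entropy condition through Dudley chaining on the \emph{empirical} $L_2(P_n)$ radius, then bound $\EE\sup P_ng^2\le\sigma^2+2\EE\sup|(P_n-P)g^2|\lesssim\sigma^2+\EE\sup|(P_n-P)g|$ by contraction (valid since $|g|\le1$), and finally solve the resulting quadratic self-bounding inequality. If tracking explicit constants becomes unwieldy, I would cite the standard form of this bound (e.g., Giné and Koltchinskii 2006, Theorem 3.1 type) and keep only crude numeric constants. The final constant $33$ then follows from collecting the Talagrand constants, the factor $\sqrt2$ from $\sigma_k^2\le 2Pg^2$, and the union-bound logarithm into one generous number. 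No sharpness of $33$ is needed downstream.
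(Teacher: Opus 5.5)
Your peeling-plus-Talagrand route is genuinely different from the paper's. It is sound as a proof of the inequality with \emph{some} universal constant in place of $33$. The shell union bound costs only $\log K\le\nu\log(An)$. The fact that $\beta\ge 1/n$ gives $\log(A/\sigma)\le\log(An)$. And $\sigma_k^2<2Pg^2$ transfers each shell bound to the individual $g$.

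What it does not establish is the explicit constant $33$ in the statement, and your last sentences treat this as bookkeeping when it is not. Applying Bousquet's inequality to $\cG_k\cup(-\cG_k)$ with $b=2$ gives
$\sup_{\cG_k}|(P_n-P)g|\le 2\,\EE\sup_{\cG_k}|(P_n-P)g|+\sqrt{2\sigma_k^2\beta}+\tfrac{8}{3}\beta$.
To land at $33$ you would therefore need the local maximal inequality in the form $\EE\sup|(P_n-P)g|\le C(\sqrt{\sigma^2\beta}+\beta)$ with $2\sqrt{2}\,C+2\le 33$, i.e.\ $C\le 31/(2\sqrt{2})\approx 11$. The Gin\'e--Guillou and van der Vaart--Wellner bounds you plan to cite carry unspecified constants. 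An explicit run of symmetrization, Dudley chaining, contraction and the self-bounding step would very likely exceed this. So you must either track every constant, or state the lemma with an unspecified universal constant. The latter is all Theorem~\ref{thm:sbr-infinite-classes} needs, since its proof enlarges $c$ anyway.

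Chaining is also unnecessary here, because $\beta$ already pays $\nu\log(An)$, which is the entropy of a single $1/n$-net; the paper exploits exactly this. For fixed $\lambda\in(0,1/8]$ it bounds the exponential moment of $\sup_{g}\{\lambda n(P_n-P)g-8\lambda^2 nPg^2\}$ in three moves:
\begin{enumerate}
\item Jensen over a ghost sample replaces $nPg^2$ by $\sum_i g(W_i')^2$.
\item Independent random swaps of $(W_i,W_i')$, together with the elementary inequality $\tfrac{1}{2} e^{\lambda(a-b)-8\lambda^2b^2}+\tfrac{1}{2} e^{\lambda(b-a)-8\lambda^2a^2}\le 1$ for $a,b\in[-1,1]$, make the swap-averaged exponential of each net element at most $1$.
\item A $1/n$-net of the pooled empirical measure handles the supremum; it has size at most $(An)^\nu$ and costs at most $1$ in the exponent.
\end{enumerate}
Markov then gives $|(P_n-P)g|\le 8\lambda Pg^2+(\nu\log(An)+1+\log(2/\delta))/(\lambda n)$ uniformly in $g$. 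A union bound over the grid $\lambda\in\{j/(8n):1\le j\le n\}$ plays the role of your variance shells. Choosing $\lambda\approx\tfrac{1}{8}\sqrt{\beta/Pg^2}$ then yields $33\sqrt{Pg^2\beta}$, or $17\beta$ when $Pg^2\le\beta$.

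The paper's argument is elementary, self-contained and fully explicit, with no Talagrand inequality and no chaining. Yours is modular and could in principle keep a localized $\log(A/\sigma)$ instead of $\log(An)$. That gain is discarded by the definition of $\beta$, however, and it is bought with heavy machinery whose constants you have not tracked.
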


\begin{proof}
The proof has two steps. We first derive a uniform deviation bound for a
fixed tuning parameter $\lambda$, and then optimize $\lambda$ on a finite
grid.

\paragraph{Step 1: A uniform bound for fixed $\lambda$.}

We begin with the following elementary inequality: for every
$a,b\in[-1,1]$ and $0<\lambda\le1/8$,
\begin{equation}
    \frac12
    e^{\lambda(a-b)-8\lambda^2b^2}
    +
    \frac12
    e^{\lambda(b-a)-8\lambda^2a^2}
    \le1.
    \label{eq:swap-mgf-elementary}
\end{equation}
Indeed, the left-hand side equals
\[
    e^{-4\lambda^2(a^2+b^2)}
    \cosh\!\left(
        \lambda(a-b)+4\lambda^2(a^2-b^2)
    \right).
\]
Since $\cosh(t)\le e^{t^2/2}$ and
\[
    \left|
        \lambda(a-b)+4\lambda^2(a^2-b^2)
    \right|
    =
    \lambda|a-b|\,|1+4\lambda(a+b)|
    \le
    2\lambda|a-b|,
\]
the preceding display is at most
\[
    \exp\!\left(
        -4\lambda^2(a^2+b^2)
        +2\lambda^2(a-b)^2
    \right)
    =
    e^{-2\lambda^2(a+b)^2}
    \le1.
\]

Now fix $0<\lambda\le1/8$ and $\delta\in(0,1)$, and write
\[
    H:=\nu\log(An).
\]
We claim that, with probability at least $1-\delta$,
\begin{equation}
    |(P_n-P)g|
    \le
    8\lambda Pg^2
    +
    \frac{H+1+\log(2/\delta)}{\lambda n}
    \qquad
    \text{for every }g\in\cG.
    \label{eq:fixed-lambda-uniform-bound}
\end{equation}

Let $W_1',\ldots,W_n'$ be an independent copy of the sample. This is the
usual ghost-sample symmetrization device; see, e.g.,
\citet[Section~2.3.1]{wellner2013weak}. Define
\[
    D_\lambda(g)
    :=
    \lambda\sum_{i=1}^n
    \bigl(g(W_i)-g(W_i')\bigr)
    -
    8\lambda^2\sum_{i=1}^n g(W_i')^2
\]
and
\[
    Z_\lambda
    :=
    \sup_{g\in\cG}
    \left\{
        \lambda n(P_n-P)g
        -8\lambda^2nPg^2
    \right\}.
\]
Conditional on the original sample,
\[
    \EE_{W'}D_\lambda(g)
    =
    \lambda n(P_n-P)g
    -8\lambda^2nPg^2.
\]
Hence Jensen's inequality gives
\begin{align}
    e^{Z_\lambda}
    &=
    \sup_{g\in\cG}
    \exp\!\left\{
        \EE_{W'}D_\lambda(g)
    \right\}
    \nonumber\\
    &\le
    \sup_{g\in\cG}
    \EE_{W'}e^{D_\lambda(g)}
    \nonumber\\
    &\le
    \EE_{W'}
    \sup_{g\in\cG}e^{D_\lambda(g)}.
    \label{eq:ghost-jensen}
\end{align}
Taking expectation over the original sample,
\begin{equation}
    \EE e^{Z_\lambda}
    \le
    \EE_{W,W'}
    \left[
        \sup_{g\in\cG}e^{D_\lambda(g)}
    \right].
    \label{eq:ghost-mgf}
\end{equation}

Fix the two samples and let
\[
    Q
    :=
    \frac1{2n}
    \sum_{i=1}^n
    \bigl(
        \delta_{W_i}+\delta_{W_i'}
    \bigr)
\]
be their pooled empirical measure. By
\eqref{eq:uniform-bernstein-entropy}, there exists an $L_2(Q)$-cover
$\{h_1,\ldots,h_M\}\subseteq\cG$ of radius $1/n$ satisfying
\[
    M
    \le
    (An)^\nu
    =
    e^H.
\]
Thus, for every $g\in\cG$, some $h_j$ satisfies
\begin{equation}
    \|g-h_j\|_{L_2(Q)}
    \le\frac1n,
    \qquad
    Q|g-h_j|
    \le\frac1n,
    \label{eq:pooled-cover}
\end{equation}
where the second inequality follows from Cauchy--Schwarz.

Next, independently for each $i$, swap $W_i$ and $W_i'$ with probability
$1/2$. Denote the resulting ordered pair by
$(\widetilde W_i,\widetilde W_i')$, and define
\[
    D_\lambda^\varepsilon(g)
    :=
    \lambda\sum_{i=1}^n
    \bigl(
        g(\widetilde W_i)-g(\widetilde W_i')
    \bigr)
    -
    8\lambda^2
    \sum_{i=1}^n
    g(\widetilde W_i')^2.
\]
The pooled empirical measure $Q$ is unchanged by the swaps, so the same
cover applies to every swap configuration.

Let $h_j$ satisfy \eqref{eq:pooled-cover}. Since
$|g^2-h_j^2|\le2|g-h_j|$, we have
\begin{align}
    \left|
        D_\lambda^\varepsilon(g)
        -D_\lambda^\varepsilon(h_j)
    \right|
    &\le
    \lambda\sum_{i=1}^n
    \left(
        |g-h_j|(\widetilde W_i)
        +
        |g-h_j|(\widetilde W_i')
    \right)
    \nonumber\\
    &\quad
    +
    16\lambda^2
    \sum_{i=1}^n
    |g-h_j|(\widetilde W_i')
    \nonumber\\
    &\le
    (2\lambda+32\lambda^2)nQ|g-h_j|
    \nonumber\\
    &\le
    2\lambda+32\lambda^2
    \le1.
    \label{eq:swap-cover-approximation}
\end{align}
Consequently,
\begin{equation}
    \sup_{g\in\cG}e^{D_\lambda^\varepsilon(g)}
    \le
    e\sum_{j=1}^M
    e^{D_\lambda^\varepsilon(h_j)}.
    \label{eq:swap-cover-sum}
\end{equation}

Conditional on the two samples, fix one cover element $h_j$. For the
$i$th pair, averaging over the two possible swap configurations gives
\[
    \frac12
    e^{\lambda(a-b)-8\lambda^2b^2}
    +
    \frac12
    e^{\lambda(b-a)-8\lambda^2a^2},
    \qquad
    a=h_j(W_i),\quad b=h_j(W_i'),
\]
which is at most $1$ by \eqref{eq:swap-mgf-elementary}. Since the swaps
are independent across $i$,
\begin{equation}
    \EE_\varepsilon
    e^{D_\lambda^\varepsilon(h_j)}
    \le1.
    \label{eq:swap-mgf-cover-center}
\end{equation}
Taking expectation in \eqref{eq:swap-cover-sum} therefore yields
\begin{equation}
    \EE_\varepsilon
    \left[
        \sup_{g\in\cG}
        e^{D_\lambda^\varepsilon(g)}
    \right]
    \le
    eM
    \le
    e^{H+1}.
    \label{eq:swap-mgf-uniform}
\end{equation}

Finally, since each pair $(W_i,W_i')$ consists of two i.i.d.
observations, independently swapping its coordinates does not change the
joint distribution. Hence
\[
    \EE_{W,W'}
    \left[
        \sup_{g\in\cG}e^{D_\lambda(g)}
    \right]
    =
    \EE_{W,W',\varepsilon}
    \left[
        \sup_{g\in\cG}e^{D_\lambda^\varepsilon(g)}
    \right]
    \le
    e^{H+1}.
\]
Together with \eqref{eq:ghost-mgf}, this gives
\begin{equation}
    \EE e^{Z_\lambda}
    \le
    e^{H+1}.
    \label{eq:Z-lambda-mgf}
\end{equation}

By Markov's inequality,
\[
    \Pr\!\left(
        Z_\lambda
        >
        H+1+\log(2/\delta)
    \right)
    \le
    \frac{\delta}{2}.
\]
Thus, except on an event of probability at most $\delta/2$,
\[
    (P_n-P)g
    \le
    8\lambda Pg^2
    +
    \frac{H+1+\log(2/\delta)}{\lambda n}
    \qquad
    \forall g\in\cG.
\]
Applying the same argument to $-\cG$ and taking a union bound proves
\eqref{eq:fixed-lambda-uniform-bound}.

\paragraph{Step 2: Simultaneously optimize $\lambda$.}

Consider the finite grid
\[
    \Lambda
    :=
    \left\{
        \frac{j}{8n}:j=1,\ldots,n
    \right\}.
\]
Apply \eqref{eq:fixed-lambda-uniform-bound} to each
$\lambda\in\Lambda$ with $\delta=\zeta/n$, and take a union bound.
With probability at least $1-\zeta$,
\begin{equation}
    |(P_n-P)g|
    \le
    8\lambda Pg^2
    +
    \frac{H+1+\log(2n/\zeta)}{\lambda n}
    \qquad
    \forall g\in\cG,\quad
    \forall\lambda\in\Lambda.
    \label{eq:lambda-grid-uniform}
\end{equation}

By definition,
\[
    n\beta
    =
    H+\log(2/\zeta).
\]
Moreover, since $A\ge e$ and $\nu\ge1$,
\[
    H=\nu\log(An)\ge1+\log n.
\]
Therefore
\[
    H+1+\log(2n/\zeta)
    =
    n\beta+1+\log n
    \le
    2n\beta.
\]
Hence, on the same event,
\begin{equation}
    |(P_n-P)g|
    \le
    8\lambda Pg^2
    +
    \frac{2\beta}{\lambda}
    \qquad
    \forall g\in\cG,\quad
    \forall\lambda\in\Lambda.
    \label{eq:lambda-grid-relative}
\end{equation}

Fix $g\in\cG$ and write
\[
    v:=Pg^2\in[0,1].
\]

If $v\le\beta$, choose $\lambda=1/8$ in
\eqref{eq:lambda-grid-relative}. Then
\[
    |(P_n-P)g|
    \le
    v+16\beta
    \le
    17\beta.
\]

Now suppose $v>\beta$, and define the continuous optimizer
\[
    \lambda_\star
    :=
    \frac18\sqrt{\frac{\beta}{v}}.
\]
Then $\lambda_\star<1/8$. Also,
$\beta\ge1/n$ and $v\le1$ imply
\[
    \lambda_\star
    \ge
    \frac{1}{8\sqrt n}
    \ge
    \frac{1}{8n}.
\]
Let $\lambda$ be the largest element of $\Lambda$ satisfying
$\lambda\le\lambda_\star$. Since the grid spacing is $1/(8n)$,
\[
    \frac{\lambda_\star}{2}
    \le
    \lambda
    \le
    \lambda_\star.
\]
Using \eqref{eq:lambda-grid-relative},
\begin{align*}
    |(P_n-P)g|
    &\le
    8\lambda_\star v
    +
    \frac{4\beta}{\lambda_\star}
    \\
    &=
    \sqrt{v\beta}
    +
    32\sqrt{v\beta}
    \\
    &=
    33\sqrt{v\beta}.
\end{align*}

Combining the two cases yields
\[
    |(P_n-P)g|
    \le
    33\left(
        \sqrt{Pg^2\,\beta}+\beta
    \right)
    \qquad
    \forall g\in\cG.
\]
Since the event in \eqref{eq:lambda-grid-relative} is common to all
$g\in\cG$, this proves the result.
\end{proof}

\subsection{Proof of Theorem~\ref{thm:sbr-infinite-classes}}
\label{app:proof-sbr-infinite}

We now combine Proposition~\ref{prop:sbr-derived-class-entropy} and
Lemma~\ref{lem:uniform-bernstein} with the same pairwise argument used
for the finite-class result.

\begin{proof}[Proof of Theorem~\ref{thm:sbr-infinite-classes}]
Let
\[
    \widehat L(f)
    :=
    \sup_{h\in\cF}\widehat T(f,h).
\]
By Proposition~\ref{prop:sbr-derived-class-entropy}, the classes $\cZ$
and $\cD$ satisfy the entropy condition in
Lemma~\ref{lem:uniform-bernstein} with parameters of order $d$.
Applying Lemma~\ref{lem:uniform-bernstein} separately to $\cZ$ and
$\cD$, each with failure probability $\delta/2$, and increasing the
universal constant $c$ in the definition of $\alpha_\delta$ if
necessary, we obtain an event of probability at least $1-\delta$ on
which, simultaneously for every $f,h\in\cF$,
\begin{align}
    (P_n-P)Z_{f,h}
    &\le
    \sqrt{2PZ_{f,h}^2\,\alpha_\delta}
    +\frac43\alpha_\delta,
    \label{eq:infinite-Z-concentration}\\
    (P-P_n)D_{f,h}
    &\le
    \sqrt{2PD_{f,h}\,\alpha_\delta}
    +\frac23\alpha_\delta.
    \label{eq:infinite-D-concentration}
\end{align}
For \eqref{eq:infinite-D-concentration}, we used
$PD_{f,h}^2\le PD_{f,h}$, since $0\le D_{f,h}\le1$.
We work on this event for the remainder of the proof.

Fix any comparator $f_0\in\cF$. By
\eqref{eq:infinite-Z-concentration}, for every $h\in\cF$,
\[
    P_nZ_{f_0,h}
    \le
    PZ_{f_0,h}
    +\sqrt{2PZ_{f_0,h}^2\,\alpha_\delta}
    +\frac43\alpha_\delta.
\]
Since $PZ_{f_0,h}\le B_P(f_0)$ and
$PZ_{f_0,h}^2\le v(f_0)$, taking the supremum over $h$ yields
\begin{equation}
    \widehat L(f_0)
    \le
    B_P(f_0)
    +\sqrt{2v(f_0)\alpha_\delta}
    +\frac43\alpha_\delta.
    \label{eq:infinite-comparator-loss}
\end{equation}

The empirical analogue of
Lemma~\ref{lem:sign-betting-pairwise-identity} holds exactly:
\[
    \widehat T(f,h)+\widehat T(h,f)
    =
    P_n|f-h|.
\]
Hence, by the empirical optimality of $\hf$,
\begin{align}
    P_n|\hf-f_0|
    &=
    \widehat T(\hf,f_0)
    +\widehat T(f_0,\hf)
    \nonumber\\
    &\le
    \widehat L(\hf)+\widehat L(f_0)
    \le
    2\widehat L(f_0).
    \label{eq:infinite-empirical-distance}
\end{align}

Let $r:=\normz{\hf-f_0}_1$. Since
\eqref{eq:infinite-D-concentration} holds uniformly over all pairs, it
applies to the data-dependent pair $(\hf,f_0)$:
\[
    r
    \le
    P_n|\hf-f_0|
    +\sqrt{2r\alpha_\delta}
    +\frac23\alpha_\delta.
\]
Combining this inequality with
\eqref{eq:infinite-comparator-loss} and
\eqref{eq:infinite-empirical-distance} gives
\begin{equation}
    r
    \le
    2B_P(f_0)
    +2\sqrt{2v(f_0)\alpha_\delta}
    +\sqrt{2r\alpha_\delta}
    +\frac{10}{3}\alpha_\delta.
    \label{eq:infinite-scalar-inequality}
\end{equation}

Fix $\rho>0$ and let $\theta:=\rho/(2+\rho)$. By Young's inequality,
\[
    \sqrt{2r\alpha_\delta}
    \le
    \theta r+\frac{\alpha_\delta}{2\theta}.
\]
Substituting this into \eqref{eq:infinite-scalar-inequality} and using
$1/(1-\theta)=(2+\rho)/2$ gives
\[
    r
    \le
    (2+\rho)
    \left(
        B_P(f_0)+\sqrt{2v(f_0)\alpha_\delta}
    \right)
    +C_\rho\alpha_\delta,
\]
where $C_\rho$ is the same constant as in
Theorem~\ref{thm:sbr-variance-adaptive}.

Finally, by the triangle inequality,
\[
    \epsilon_1(\hf)
    \le
    \epsilon_1(f_0)+\normz{\hf-f_0}_1.
\]
The concentration event is independent of $f_0$ and $\rho$, so the
resulting inequality holds simultaneously for every $f_0\in\cF$ and
every $\rho>0$. Taking the infimum over $f_0$ proves
\eqref{eq:sbr-infinite-bound}.
\end{proof}

\subsection{Examples of infinite hypothesis classes}
\label{app:sbr-parametric-examples}

We conclude by verifying the complexity conditions in
Theorem~\ref{thm:sbr-infinite-classes} for several standard prediction
classes. We use standard results on the VC dimension of linear threshold
classes and the pseudo-dimension of real-valued linear classes; see, e.g.,
\citet{vapnik1968uniform,anthony2009neural,
shalev-schwartz14understanding}. Recall that the relevant complexity is
$d=1+d_{\rm pred}+d_{\rm cmp}$.

\paragraph{Bounded linear prediction classes.}
Let $\phi:\cX\to\mathbb R^m$ be a fixed measurable feature map and
$a:\cX\to\mathbb R$ a fixed offset. Consider
\[
    \cF_{\rm lin}
    :=
    \left\{
        f_\theta(x)
        =
        a(x)+\langle\theta,\phi(x)\rangle:
        \theta\in\Theta
    \right\}
    \subseteq
    \{f:\cX\to[0,1]\},
\]
where $\Theta\subseteq\mathbb R^m$ is any parameter set for which the
predictions lie in $[0,1]$.

For a subgraph threshold $t$,
\[
    f_\theta(x)>t
    \quad\Longleftrightarrow\quad
    \left\langle
        (\theta,1),(\phi(x),a(x)-t)
    \right\rangle>0.
\]
Thus the subgraph class is a subclass of the class of homogeneous
halfspaces in $\mathbb R^{m+1}$, and hence
\[
    d_{\rm pred}\le m+1.
\]
Moreover,
\[
    f_\gamma(x)>f_\theta(x)
    \quad\Longleftrightarrow\quad
    \langle\gamma-\theta,\phi(x)\rangle>0.
\]
Therefore $\cC_{\cF_{\rm lin}}$ is a subclass of the class of
homogeneous halfspaces in $\mathbb R^m$, and hence
\[
    d_{\rm cmp}\le m.
\]
Consequently, $d=O(m)$, and
Theorem~\ref{thm:sbr-infinite-classes} applies with
\[
    \alpha_\delta
    \lesssim
    \frac{m\log(en)+\log(2/\delta)}{n}.
\]

\paragraph{Logistic and other monotone-link prediction classes.}
Let $\ell:\mathbb R\to(0,1)$ be a fixed strictly increasing bijection,
and consider
\[
    \cF_{\ell}
    :=
    \left\{
        f_\theta(x)
        =
        \ell\!\left(
            a(x)+\langle\theta,\phi(x)\rangle
        \right):
        \theta\in\Theta
    \right\}.
\]
This includes logistic prediction with
$\ell(z)=(1+e^{-z})^{-1}$ and probit prediction with
$\ell(z)=\Phi(z)$. Since $\ell$ is strictly increasing, for every
$t\in(0,1)$,
\[
    f_\theta(x)>t
    \quad\Longleftrightarrow\quad
    \left\langle
        (\theta,1),
        (\phi(x),a(x)-\ell^{-1}(t))
    \right\rangle>0.
\]
Thus the subgraph class is again a subclass of the class of homogeneous
halfspaces in $\mathbb R^{m+1}$, yielding
\[
    d_{\rm pred}\le m+1.
\]
Similarly,
\[
    f_\gamma(x)>f_\theta(x)
    \quad\Longleftrightarrow\quad
    \langle\gamma-\theta,\phi(x)\rangle>0,
\]
so $\cC_{\cF_{\ell}}$ is a subclass of the class of homogeneous
halfspaces in $\mathbb R^m$, and therefore
\[
    d_{\rm cmp}\le m.
\]
Consequently, $d=O(m)$ and
\[
    \alpha_\delta
    \lesssim
    \frac{m\log(en)+\log(2/\delta)}{n}.
\]

\paragraph{Sparse linear and sparse monotone-link prediction classes.}
Let $\phi:\cX\to\mathbb R^p$, let $1\le s\le p$, and restrict parameter
vectors to satisfy $\normz{\theta}_0\le s$. Consider either the bounded
linear class
\[
    f_\theta(x)
    =
    a(x)+\langle\theta,\phi(x)\rangle
\]
or the monotone-link class
\[
    f_\theta(x)
    =
    \ell\!\left(
        a(x)+\langle\theta,\phi(x)\rangle
    \right),
\]
with the linear class restricted as needed so that its predictions lie
in $[0,1]$.

There are at most
\[
    \sum_{j=0}^s\binom{p}{j}
    \le
    \left(\frac{ep}{s}\right)^s
\]
possible supports of size at most $s$, and conditional on a fixed
support the prediction class has pseudo-dimension $O(s)$. Combining
the corresponding growth-function bound with a union bound over
supports gives
\[
    d_{\rm pred}
    =
    O\!\left(
        s\log\frac{ep}{s}
    \right).
\]
See, e.g.,
\citet{anthony2009neural,shalev-schwartz14understanding}
for the standard growth-function and pseudo-dimension machinery.

For pairwise comparisons, if $\theta$ and $\gamma$ are both
$s$-sparse, then $\gamma-\theta$ is at most $2s$-sparse. Hence
$\cC_{\cF}$ is a subclass of the class of $2s$-sparse homogeneous
halfspaces in $\mathbb R^p$. Applying the same support-counting and
growth-function argument gives
\[
    d_{\rm cmp}
    =
    O\!\left(
        s\log\frac{ep}{s}
    \right).
\]
Related explicit lower and upper bounds on the VC dimension of sparse
halfspaces are given by \citet{ahsen2019approach}. Therefore
\[
    d
    =
    O\!\left(
        s\log\frac{ep}{s}
    \right),
\]
and Theorem~\ref{thm:sbr-infinite-classes} applies with
\[
    \alpha_\delta
    \lesssim
    \frac{
        s\log(ep/s)\log(en)
        +\log(2/\delta)
    }{n}.
\]

\begin{remark}[Realizable specialization]
For any of the classes above, if $\eta\in\cF$, then
$\epsilon_1(\eta)=B_P(\eta)=0$ and $v(\eta)\le\bar\sigma^2$. Hence
Theorem~\ref{thm:sbr-infinite-classes} implies, up to constants depending
only on $\rho$,
\[
    \epsilon_1(\hf)
    \lesssim_\rho
    \sqrt{
        \bar\sigma^2
        \frac{d\log(en)+\log(1/\delta)}{n}
    }
    +
    \frac{d\log(en)+\log(1/\delta)}{n}.
\]
Thus the same variance-adaptive interpolation as in the finite-class
result continues to hold for these infinite classes.
\end{remark}

\section{Experiments}
\label{app:exp}
\subsection{Synthetic Experiment}
In this section, we provide empirical results that validate our theoretical findings.
Our synthetic experiment is a controlled finite-class sanity check. It isolates the approximation behavior predicted by the theory by exactly enumerating a small candidate class. It is not intended to demonstrate scalable optimization of the sign-betting objective over large hypothesis classes.
\paragraph{Function class.}
We consider a $d$-dimensional regression setting with target regression
function $\eta$. We use the logistic prediction family
\[
f_\theta(x) := \sigma(x^\top \theta),
\]
where $\sigma(z):=1/(1+\exp(-z))$ is the sigmoid function.
We define the target regression function as
\[
\eta(x)=\frac{1}{2}, \qquad \forall x\in\mathcal X.
\]

To study model misspecification in a controlled setting, we consider a finite
function class that does not contain the target regression function.
Specifically, we choose two inputs
\[
x_c=e_1,
\qquad
x_r=e_2,
\]
where $e_1,e_2\in\mathbb{R}^d$ are the first two standard basis vectors, and
construct two groups of candidate functions. \han{20-candidate localized-versus-diffuse experiment}

The first group consists of \emph{local-error} candidates that agree with the
target function on $x_c$ but deviate from it on $x_r$:
\[
f_j^L(x_c)=\frac{1}{2},
\qquad
f_j^L(x_r)=\frac{1}{2}+A_j.
\]
Their corresponding parameters are
\[
\theta_j^L
=
\left(
0,\,
\operatorname{logit}\!\left(\frac{1}{2}+A_j\right),
0,\ldots,0
\right).
\]

The second group consists of \emph{distributed-error} candidates that deviate
from the target function on both inputs:
\[
f_j^D(x_c)
=
f_j^D(x_r)
=
\frac{1}{2}+B_j.
\]
Their corresponding parameters are
\[
\theta_j^D
=
\left(
\operatorname{logit}\!\left(\frac{1}{2}+B_j\right),
\operatorname{logit}\!\left(\frac{1}{2}+B_j\right),
0,\ldots,0
\right).
\]

Let $\mathcal{F}_L$ and $\mathcal{F}_D$ denote the two groups of candidate
functions. We finally construct the finite function class as
\[
\mathcal{F}
:=
\mathcal{F}_L \cup \mathcal{F}_D.
\]
The target regression function $\eta$ is not contained in $\mathcal{F}$.

\paragraph{Data distribution.}
We consider a two-point input distribution with
\[
x_c=e_1=(1,0)^\top,
\qquad
x_r=e_2=(0,1)^\top,
\]
and sample
\[
X=
\begin{cases}
x_c, & \text{with probability } 1-p,\\
x_r, & \text{with probability } p.
\end{cases}
\]
The target regression function is fixed as
\[
\eta(x)
=
\mathbb{E}[Y\mid X=x]
=
\frac{1}{2}.
\]

We use a parameter $\rho\in[0,1)$ to control the conditional label noise. This experimental noise parameter is unrelated to the parameter $\rho$ appearing in our theoretical bounds.
When $\rho=0$, the label is deterministic:
\[
Y\mid X=x=\frac{1}{2}.
\]
For $\rho>0$, we generate the label as
\[
Y\mid X=x
\sim
\operatorname{Beta}\left(
\eta(x)\frac{1-\rho}{\rho},
(1-\eta(x))\frac{1-\rho}{\rho}
\right).
\]
Since $\eta(x)=1/2$, this reduces to
\[
Y\mid X=x
\sim
\operatorname{Beta}\left(
\frac{1-\rho}{2\rho},
\frac{1-\rho}{2\rho}
\right).
\]
In both cases,
\[
\mathbb{E}[Y\mid X=x]=\frac{1}{2},
\qquad
\operatorname{Var}(Y\mid X=x)=\frac{\rho}{4},
\]
where $\rho=0$ corresponds to the deterministic label setting.

\paragraph{Training.}
Given the candidate set $\mathcal F$, we select a function by minimizing either
log loss, squared loss, or sign betting loss on the validation data
$D_n^{\mathrm{val}}$. The log loss of $f$ is
\[
L_n^{\log}(f)
=
-\frac{1}{n}
\sum_{(x,y)\in D_n}
\left(
y\log f(x)
+
(1-y)\log(1-f(x))
\right),
\]
and the selected function is
\[
\widehat f^{\log}
=
\arg\min_{f\in\mathcal F}
L_n^{\log}(f).
\]

Similarly, the squared loss of $f$ is
\[
L_n^{\mathrm{sq}}(f)
=
\frac{1}{n}
\sum_{(x,y)\in D_n}
\left(y-f(x)\right)^2,
\]
and the selected function is
\[
\widehat f^{\mathrm{sq}}
=
\arg\min_{f\in\mathcal F}
L_n^{\mathrm{sq}}(f).
\]

For sign betting loss, for each $f,h\in\mathcal F$, we define
\[
\widehat T(f,h)
=
\frac{1}{n}
\sum_{(x,y)\in D_n}
\left(y-f(x)\right)
\operatorname{sign}\!\left(h(x)-f(x)\right),
\]
where $\operatorname{sign}(0)=0$, and select
\[
\widehat f^{\mathrm{SB}}
=
\arg\min_{f\in\mathcal F}
\max_{h\in\mathcal F}
\widehat T(f,h).
\]
\han{The notation might not be consistent with main paper, need double check.}

\paragraph{Evaluation.}
For each selected function $\widehat f$, we evaluate its population mean absolute error (MAE),
\[
\epsilon_1(\widehat f)
=
\mathbb{E}_X
\left[
\left|
\widehat f(X)-\eta(X)
\right|
\right].
\]

Since the input distribution is supported on only two points, this expectation can be computed exactly as
\[
\epsilon_1(\widehat f)
=
(1-p)
\left|
\widehat f(x_c)-\frac{1}{2}
\right|
+
p
\left|
\widehat f(x_r)-\frac{1}{2}
\right|.
\]

We fix the feature dimension to $d=2$ and the rare-input probability to $p=0.1$, vary the training sample-to-dimension ratio $n/d \in \{2,4,8,16,32,64\},$ and consider different noise levels $\rho \in \{0,0.02,0.04\}.$
For each configuration $(n/d,\rho)$, we repeat the experiment $B=1000$  times.
If $\widehat f_b$ denotes the selected function in repetition $b$, we report
the average MAE
\[
\overline{\epsilon}_1
=
\frac{1}{B}
\sum_{b=1}^{B}
\epsilon_1(\widehat f_b),
\]
with standard error
\[
\mathrm{SE}
=
\frac{
\operatorname{SD}
\left(
\epsilon_1(\widehat f_1),\ldots,
\epsilon_1(\widehat f_B)
\right)
}{
\sqrt{B}
}.
\]

\begin{table}[t]
\centering

\caption{
Comparison of average mean absolute error (MAE) under log loss, squared loss,
and sign betting loss across varying $n/d$. Reported values are the mean MAE
over repeated trials, with $\pm$ denoting the standard error. Best results are
shown in \textbf{bold}. Sign betting loss achieves the lowest MAE for all tested
values of $n/d$, tying with the other methods at $n/d=2$.
}
\label{tab:synthetic-mae}

\begin{tabular}{c c c c}
\toprule
$n/d$
& Log loss
& Squared loss
& Sign betting loss \\
\midrule
2
& $\mathbf{0.05320 \pm 0.00082}$
& $\mathbf{0.05320 \pm 0.00082}$
& $\mathbf{0.05320 \pm 0.00082}$ \\

4
& $0.06629 \pm 0.00086$
& $0.06629 \pm 0.00086$
& $\mathbf{0.04490 \pm 0.00067}$ \\

8
& $0.07960 \pm 0.00068$
& $0.06068 \pm 0.00087$
& $\mathbf{0.04605 \pm 0.00070}$ \\

16
& $0.08142 \pm 0.00063$
& $0.06937 \pm 0.00084$
& $\mathbf{0.03945 \pm 0.00047}$ \\

32
& $0.08411 \pm 0.00054$
& $0.07850 \pm 0.00071$
& $\mathbf{0.03714 \pm 0.00034}$ \\

64
& $0.08879 \pm 0.00026$
& $0.08428 \pm 0.00053$
& $\mathbf{0.03538 \pm 0.00015}$ \\
\bottomrule
\end{tabular}

\end{table}

\paragraph{Results.}
Table~\ref{tab:synthetic-mae} reports the average MAE with standard error for the noiseless setting ($\rho=0$), while Figure~\ref{fig:syn-mae} summarizes the corresponding results across different noise levels $\rho\in\{0,0.02,0.04\}$ and varying values of sample-to-dimension ratio $n/d$.
In the noiseless setting, all three methods achieve the same average MAE at the smallest sample size. For all larger values of $n/d$, sign betting consistently achieves the lowest MAE, and its MAE decreases toward the best population MAE attainable within the function class as the sample size increases.
Figure~\ref{fig:syn-mae} shows that this advantage persists in the presence of label noise. Across both noisy settings, sign betting consistently achieves the lowest MAE over all tested values of $n/d$, and its MAE decreases as the sample size increases.

\begin{figure*}[t]
    \centering
    \includegraphics[width=\textwidth]{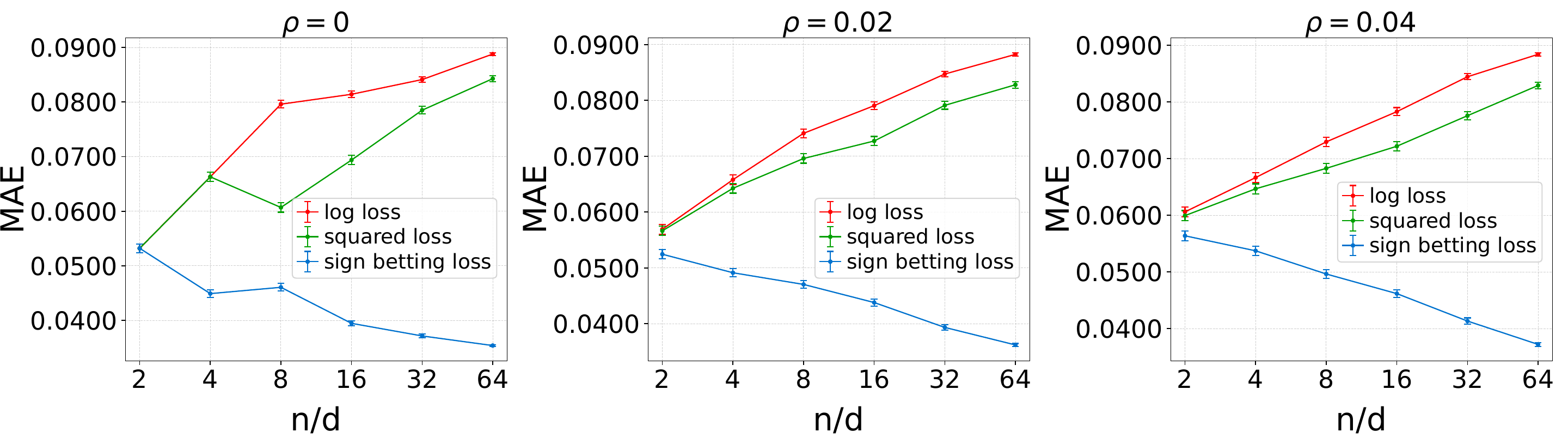}

    \caption{
    Comparison of average mean absolute error (MAE) obtained under log loss, squared loss, and sign betting loss in the synthetic experiment across varying $n/d$ and noise levels $\rho\in\{0,0.02,0.04\}$. Error bars denote the standard error.
    }
    \label{fig:syn-mae}
\end{figure*}

\subsection{Real-World Experiment}
\paragraph{Dataset.}
We use the KonIQ-10k~\cite{8968750} image-quality dataset, in which each image is associated with multiple human quality ratings on a five-point scale.
We rescale each rating $r\in\{1,2,3,4,5\}$ to
$
Y=\frac{r-1}{4}\in
\left\{
0,\frac14,\frac12,\frac34,1
\right\},
$
so that the responses lie in $[0,1]$, consistent with our regression setting.

For an image $x$, let $c_k(x)$ denote the number of observed ratings at level
$k\in\{1,\ldots,5\}$. We define the empirical rating distribution by
\[
\widehat q_k(x)
=
\frac{c_k(x)}
{\sum_{\ell=1}^{5} c_\ell(x)},
\]
and its empirical mean by
\[
\bar{\eta}(x)
=
\sum_{k=1}^{5}
\widehat q_k(x)\frac{k-1}{4}.
\]
Here $\eta(x)=\mathbb{E}[Y\mid X=x]$ denotes the population conditional mean from our theoretical formulation, while $\bar{\eta}(x)$ is its empirical counterpart computed from all available human ratings for image $x$.
\han{This paragraph try to say where eta comes from.}

\paragraph{Function class.}
We extract image features using a ResNet-18 ~\cite{7780459} backbone pretrained on
ImageNet~\cite{ILSVRC15}, with all backbone parameters frozen. Based on these features, we train five families of regression models on the training set using different model configurations and random seeds.Together, these models form a finite hypothesis class containing 38 predictors,
\[
\mathcal F
=
\{f_1,\ldots,f_{38}\}.
\]
Once the candidate models are trained, all model parameters are frozen during model selection.

\paragraph{Training.}
Given the hypothesis class $\mathcal F$, we select a function using log loss, squared loss, or sign betting loss on the validation set. For each repetition, we use the same $n$ distinct validation images for all three methods and sample one human rating for each selected image, forming
\[
D_n^{\mathrm{val}}
=
\{(X_i,Y_i)\}_{i=1}^n.
\]

The log loss of $f$ is
\[
L_n^{\log}(f)
=
-\frac{1}{n}
\sum_{(x,y)\in D_n^{\mathrm{val}}}
\left(
y\log f(x)
+
(1-y)\log(1-f(x))
\right),
\]
and the selected function is
\[
\widehat f^{\log}
=
\arg\min_{f\in\mathcal F}
L_n^{\log}(f).
\]

Similarly, the squared loss of $f$ is
\[
L_n^{\mathrm{sq}}(f)
=
\frac{1}{n}
\sum_{(x,y)\in D_n^{\mathrm{val}}}
\left(y-f(x)\right)^2,
\]
and the selected function is
\[
\widehat f^{\mathrm{sq}}
=
\arg\min_{f\in\mathcal F}
L_n^{\mathrm{sq}}(f).
\]

For sign betting, we use
\[
\widehat T(f,h)
=
\frac{1}{n}
\sum_{(x,y)\in D_n^{\mathrm{val}}}
\left(y-f(x)\right)
\operatorname{sign}\!\left(h(x)-f(x)\right),
\]
and select
\[
\widehat f^{\mathrm{SB}}
=
\arg\min_{f\in\mathcal F}
\max_{h\in\mathcal F}
\widehat T(f,h),
\]
where $\operatorname{sign}(0)=0$.

\paragraph{Evaluation.}
We evaluate each selected function $\widehat f$ on the test set using mean absolute error (MAE) with respect to the empirical mean human score
$\bar{\eta}(x)$:
\[
\widehat R_{\mathrm{test}}(\widehat f)
=
\frac{1}{m}
\sum_{i=1}^{m}
\left|
\widehat f(X_i^{\mathrm{test}})
-
\bar{\eta}(X_i^{\mathrm{test}})
\right|,
\]
where $m=2015$ is the number of test images.

We also report the model-selection regret relative to the best model in $\mathcal F$. Specifically, we define
\[
R^{\mathrm{best}}
=
\min_{f\in\mathcal F}
\widehat R_{\mathrm{test}}(f),
\]
and the regret of a selected function $\widehat f$ as
\[
\operatorname{Regret}(\widehat f)
=
\widehat R_{\mathrm{test}}(\widehat f)
-
R^{\mathrm{best}}.
\]
Here, $R^{\mathrm{best}}$ denotes the test MAE of the best model among the
38 candidates in $\mathcal F$.

\paragraph{Results.}
 Figure~\ref{fig:real-llm-mae}(a) summarizes the average test MAE with standard
error from function selection with log loss, squared loss, and sign betting
loss across different validation sizes $n$, together with the test MAE of the
best model in each hypothesis class.
The results show that the test MAE decreases as the validation size increases
for all three selection methods, with sign betting consistently achieving the
lowest MAE and approaching the best model as validation size increases.
Similarly, the model selection regret decreases with increasing $n$, with sign
betting consistently attaining the lowest regret among the three methods.

\begin{figure*}[t]
    \centering
    \includegraphics[width=0.9\textwidth]{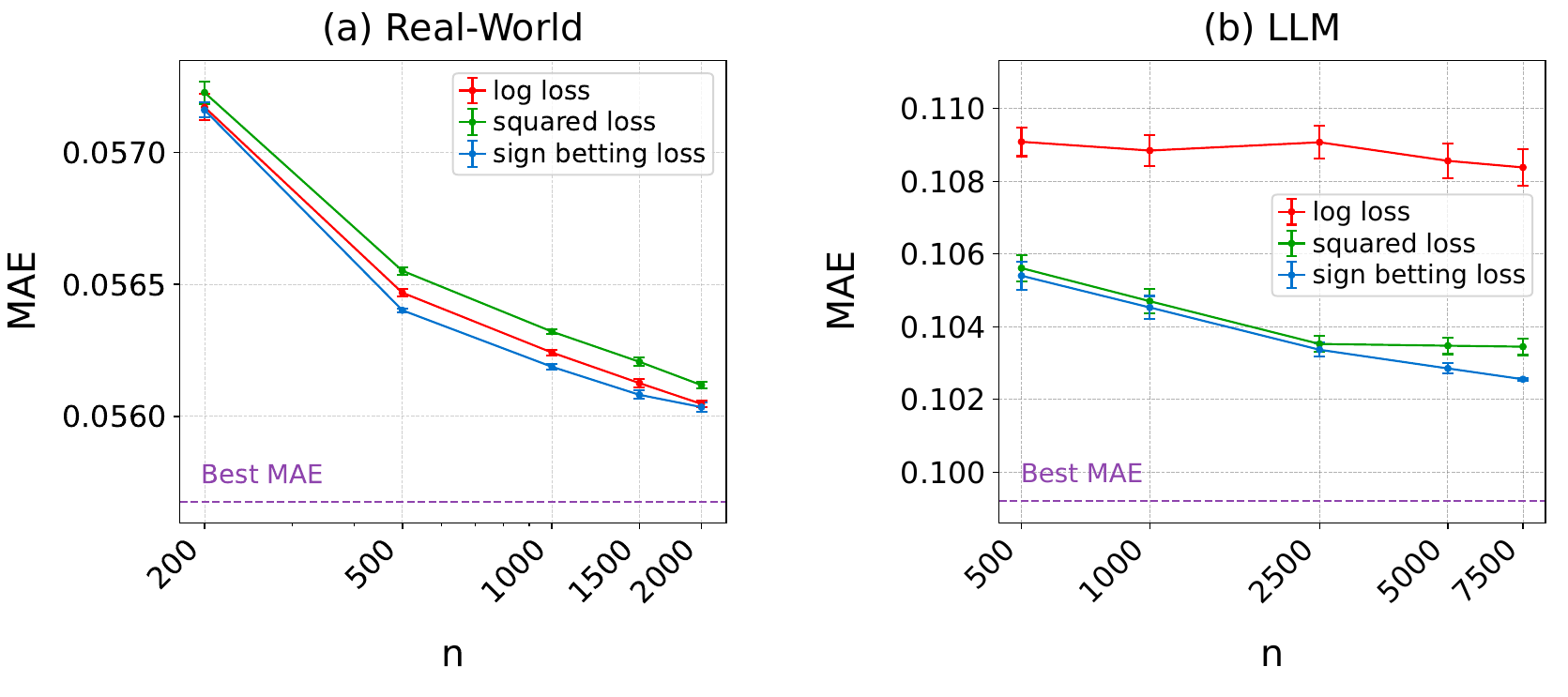}

    \caption{
    Comparison of average test MAE obtained under log loss, squared loss, and sign betting loss in the real-world and LLM experiments across varying validation sizes $n$. Error bars denote the standard error, and dashed horizontal lines indicate the MAE of the best model in each function class.
    }
    \label{fig:real-llm-mae}
\end{figure*}

\subsection{LLM Experiment}

\paragraph{Dataset.}
We use the UltraFeedback~\cite{pmlr-v235-cui24f} dataset, which contains
prompts paired with multiple responses. In our experiments, each prompt is associated with $4$ responses, and we treat each prompt-response
pair as one regression sample. Specifically, we represent an input as
\[
x=(q,r),
\]
where $q$ denotes the prompt and $r$ denotes one of its responses.

To obtain reference rewards, we use Llama-3.3-70B-Instruct~\cite{grattafiori2024llama3herdmodels} as the teacher model.
Given a prompt-response pair $x$, the teacher assigns a reward
\[
s_{\mathrm{teacher}}(x)\in\{0,1,2,3,4\}.
\]
We normalize the score to $[0,1]$ and define
\[
Y
=
g(x)
=
\frac{s_{\mathrm{teacher}}(x)}{4}.
\]
Thus, each regression example consists of a prompt-response pair $x$ and its
normalized teacher reward $Y$. We use $g(x)$ as the teacher reference reward
for both model selection and evaluation.

\paragraph{Function class.}
We construct a finite class of candidate reward models using Llama-3.2-3B-Instruct~\cite{grattafiori2024llama3herdmodels} as the
backbone. We attach a five-class scoring head that predicts
\[
p_\theta(k\mid x),
\qquad
k\in\{0,1,2,3,4\}.
\]

The models are trained on the training set using the teacher reward as labels. 
We use seven different random seeds and retain five checkpoints along each training trajectory, resulting in $7\times5=35$ reward models. Although the models are trained as five-class classifiers, each model outputs a prediction reward in $[0,1]$ using the normalized expected score
\[
f_j(x)
=
\sum_{k=0}^{4}
p_j(k\mid x)\frac{k}{4}.
\]
We therefore define the finite hypothesis class as
\[
\mathcal F
=
\{f_1,\ldots,f_{35}\}.
\]
All reward models are frozen before the model selection stage, and the
teacher model is not included in $\mathcal F$.

\paragraph{Training.}
Given the candidate set $\mathcal F$, we select a function by minimizing either
log loss, squared loss, or sign betting loss on the validation data.
For each repetition, we sample $n$ prompt-response pairs from the validation set, forming
\[
D_n^{\mathrm{val}}
=
\{(X_i,Y_i)\}_{i=1}^n.
\]

The log loss of $f$ is
\[
L_n^{\log}(f)
=
-\frac{1}{n}
\sum_{(x,y)\in D_n^{\mathrm{val}}}
\left[
y\log \widetilde f(x)
+
(1-y)\log(1-\widetilde f(x))
\right],
\]
where
\[
\widetilde f(x)
=
\operatorname{clip}\!\left(f(x),10^{-6},1-10^{-6}\right),
\]
and the selected function is
\[
\widehat f^{\log}
=
\arg\min_{f\in\mathcal F}
L_n^{\log}(f).
\]

Similarly, the squared loss of $f$ is
\[
L_n^{\mathrm{sq}}(f)
=
\frac{1}{n}
\sum_{(x,y)\in D_n^{\mathrm{val}}}
\left(y-f(x)\right)^2,
\]
and the selected function is
\[
\widehat f^{\mathrm{sq}}
=
\arg\min_{f\in\mathcal F}
L_n^{\mathrm{sq}}(f).
\]

For sign betting, we use
\[
\widehat T(f,h)
=
\frac{1}{n}
\sum_{(x,y)\in D_n^{\mathrm{val}}}
\left(y-f(x)\right)
\operatorname{sign}\!\left(h(x)-f(x)\right),
\]
and select
\[
\widehat f^{\mathrm{SB}}
=
\arg\min_{f\in\mathcal F}
\max_{h\in\mathcal F}
\widehat T(f,h),
\]
where $\operatorname{sign}(0)=0$.
\han{same training paragraph}

\paragraph{Evaluation.}
We evaluate each selected reward model $\widehat f$ on the test set using
mean absolute error (MAE) with respect to the teacher reference reward:
\[
R_{\mathrm{test}}(\widehat f)
=
\frac{1}{m}
\sum_{i=1}^{m}
\left|
\widehat f(x_i^{\mathrm{test}})
-
y_i^{\mathrm{test}}
\right|,
\]
where the test set contains $m=4000$ prompt-response pairs and $y_i^{\mathrm{test}}$ is the normalized reward assigned by the teacher
model. Lower MAE indicates that the selected reward model more closely matches
the teacher scoring function on the test set.

We also report the model selection regret relative to the best model in the
function class. Specifically, we define
\[
R^{\mathrm{best}}
=
\min_{f\in\mathcal F}
R_{\mathrm{test}}(f),
\]
and
\[
\operatorname{Regret}(\widehat f)
=
R_{\mathrm{test}}(\widehat f)
-
R^{\mathrm{best}}.
\]
Here, $R^{\mathrm{best}}$ is the test MAE of the best model among the 35
candidates in $\mathcal F$. 

\paragraph{Results.}
Figure~\ref{fig:real-llm-mae}(b) summarizes the average test MAE and model selection regret, with standard error, for function selection with log loss, squared loss, and sign betting loss across different validation sizes $n$. The test MAE of the best model in the function class is also shown as a reference.
Across all validation sizes, sign betting consistently achieves lower
test MAE than both log loss and squared loss. As the validation size increases,
the test MAE of sign betting decreases and moves closer to the best model. Similarly, the model selection regret of sign betting decreases with increasing $n$ and remains the lowest among the three methods across all validation sizes.

\end{document}